  \let\oldparagraph\paragraph
  \renewcommand{\paragraph}{
    \@ifstar
      \xxxParagraphStar
      \xxxParagraphNoStar
  }
  \newcommand{\xxxParagraphStar}[1]{\oldparagraph*{#1}\mbox{}}
  \newcommand{\xxxParagraphNoStar}[1]{\oldparagraph{#1}\mbox{}}
  \let\oldsubparagraph\subparagraph
  \renewcommand{\subparagraph}{
    \@ifstar
      \xxxSubParagraphStar
      \xxxSubParagraphNoStar
  }
  \newcommand{\xxxSubParagraphStar}[1]{\oldsubparagraph*{#1}\mbox{}}
  \newcommand{\xxxSubParagraphNoStar}[1]{\oldsubparagraph{#1}\mbox{}}
\patchcmd\longtable{\par}{\if@noskipsec\mbox{}\fi\par}{}{}
\def\maxwidth{\ifdim\Gin@nat@width>\linewidth\linewidth\else\Gin@nat@width\fi}
\def\maxheight{\ifdim\Gin@nat@height>\textheight\textheight\else\Gin@nat@height\fi}
\def\fps@figure{htbp}
  \renewcommand*\contentsname{Table of contents}
  \newcommand\contentsname{Table of contents}
  \renewcommand*\listfigurename{List of Figures}
  \newcommand\listfigurename{List of Figures}
  \renewcommand*\listtablename{List of Tables}
  \newcommand\listtablename{List of Tables}
  \renewcommand*\figurename{Figure}
  \newcommand\figurename{Figure}
  \renewcommand*\tablename{Table}
  \newcommand\tablename{Table}
\newcommand{\pdfu}{\texorpdfstring{$U$}{U}}
\newcommand{\pdfv}{\texorpdfstring{$V$}{V}}
\newcommand{\libref}[2]{\href{#2}{\texttt{#1}}}
\def\package{\libref{u-stats}{https://github.com/zrq1706/U-Statistics-Python}}
\def\igraph{\libref{igraph}{https://igraph.org/}}
\newcommand{\anon}{1}
\theoremstyle{plain}
\newtheorem{lemma}{Lemma}
\newtheorem{proposition}{Proposition}
\newtheorem{corollary}{Corollary}
\newtheorem{observation}{Observation}
\theoremstyle{definition}
\newtheorem{definition}{Definition}
\newtheorem{example}{Example}
\newtheorem{remark}{Remark}
\newtheorem*{remark*}{Remark}
\def\tilde{\widetilde}
\newtheorem{subexample}{Example}
\def\Einsum{\textsc{Einsum}}
\newcommand{\mobius}{M\"obius}
\newcommand{\numpy}{\libref{numpy}{https://numpy.org/}}
\newcommand{\torch}{\libref{pytorch}{https://pytorch.org/}}
\newcommand{\sPi}{\pi}
\newcommand{\sper}{\sigma}
\newcommand{\uset}{\calU}
\newcommand{\usetnm}[2]{\uset\left(#1,#2\right)}
\newcommand{\vset}{\calV}
\newcommand{\vsetp}[2]{\vset(#1,#2)}
\newcommand{\ustat}{\mathbb{U}}
\newcommand{\vstat}{\mathbb{V}}
\newcommand{\ustatp}[2]{\ustat(#1,#2)}
\newcommand{\vstatp}[2]{\vstat(#1,#2)}
\def\sfr{\mathsf{r}}
\newcommand{\samplespace}{\bbX}
\newcommand{\takesetp}[1]{\mathsf{set}(#1)}
\newcommand{\taketuplep}[1]{\mathsf{tuple}\left( #1 \right)}
\newcommand{\perm}[1]{\mathsf{Perm}(#1)}
\newcommand{\partition}{\Pi}
\newcommand{\partitionp}[1]{\partition_{#1}}
\newcommand{\indexset}{\textsc{Indices}}
\newcommand{\tensorcontraction}{\textsc{Einsum}}
\newcommand{\pleq}{\preceq}
\newcommand{\pless}{\prec}
\newcommand{\degp}[2]{\deg(#1,#2)}
\newcommand{\verticesp}[1]{V(#1)}
\newcommand{\edges}[0]{E}
\newcommand{\edgesp}[1]{E(#1)}
\newcommand{\neighborvg}[2]{N_{#2}(#1)}
\newcommand{\treewidth}{\mathsf{tw}}
\newcommand{\treewidthp}[1]{\mathsf{tw}(#1)}
\newcommand{\graphset}{\mathcal{G}}
\newcommand{\graphsetne}[2]{\graphset_{#1,#2}}
\newcommand{\degeneracy}{D}
\newcommand{\degeneracyg}[1]{\degeneracy(#1)}
\newcommand{\completegraphp}[1]{K_{#1}}
\newcommand{\motifrg}[2]{\mathsf{C}(#1,#2)}
\newcommand{\graphform}[1]{G_{#1}}
\newcommand{\less}{O}
\newcommand{\appequal}{\Theta}
\newcommand{\eliminatevg}[2]{\mathsf{elim}_{#1}(#2)}
\newcommand{\Lin}[1]{{\color{purple}{#1}}}
\def\package{\libref{u-stats}{https://github.com/zrq1706/U-Statistics-Python}}
\def\igraph{\libref{igraph}{https://igraph.org/}}
\def\peregrine{\libref{Peregrine}{https://github.com/pdclab/peregrine}}
\def\cugraph{\libref{cuGraph}{https://docs.rapids.ai/api/cugraph/stable/}}
\newcommand{\numpyeinsum}{\libref{numpy.einsum}{https://numpy.org/doc/2.1/reference/generated/numpy.einsum.html}}
\newcommand{\torcheinsum}{\libref{pytorch.einsum}{https://pytorch.org/docs/stable/generated/torch.einsum.html}}
\newcommand{\opteinsum}{\libref{opt-einsum}{https://dgasmith.github.io/opt_einsum/}}
\definecolor{backcolour}{rgb}{0.95,0.95,0.92}
\newcommand{\bbR}{{\mathbb{R}}}
\newcommand{\reals}{{\mathbb{R}}}
\newcommand{\naturals}{\mathbb{N}}
\newcommand{\positivenaturals}{\mathbb{N}^+}
\def\bbX{\mathbb{X}}
\def\calA{\mathcal{A}}
\def\calB{\mathcal{B}}
\def\calT{\mathcal{T}}
\def\calN{\mathcal{N}}
\def\calH{\mathcal{H}}
\def\calV{\mathcal{V}}
\def\calU{\mathcal{U}}
\begin{document}

\def\spacingset#1{\renewcommand{\baselinestretch}%
	{#1}\small\normalsize} \spacingset{1}


\if1\anon
	{
		\title{\bf On computing and the complexity of computing higher-order $U$-statistics, exactly}
		\author{Xingyu Chen\orcidlink{0009-0008-0823-4406} \\
			School of Mathematical Sciences, Shanghai Jiao Tong University \\
			Ruiqi Zhang \\
			School of Mathematical Sciences, East China Normal University \\
			Lin Liu\orcidlink{0000-0002-9883-7962}\thanks{Corresponding author. Email: \url{linliu@sjtu.edu.cn}. The first two authors contribute equally and are alphabetically ordered.}\hspace{.2cm} \\
			Institute of Natural Sciences, MOE-LSC, \\
			School of Mathematical Sciences, Shanghai Jiao Tong University}
		\maketitle
		\thispagestyle{empty}
	} \fi

\if0\anon
	{
		\bigskip
		\bigskip
		\bigskip
		\begin{center}
			{\LARGE\bf On computing and the complexity of computing higher-order $U$-statistics, exactly}
		\end{center}
		\medskip
	} \fi

\bigskip
\begin{abstract}
	Higher-order $U$-statistics abound in fields such as statistics, machine learning, and computer science, but are known to be highly time-consuming to compute in practice. Despite their widespread appearance, a comprehensive study of their computational complexity is surprisingly lacking. This paper aims to fill this gap by presenting several results related to the computational aspect of $U$-statistics. First, we derive a useful decomposition from a $m$-th order $U$-statistic to a linear combination of $V$-statistics with orders not exceeding $m$, which are generally more feasible to compute. Second, we explore the connection between exactly computing $V$-statistics and Einstein summation, a tool often used in computational mathematics and quantum computing to accelerate tensor computations. Third, we provide an optimistic estimate of the time complexity for exactly computing $U$-statistics, based on the treewidth of a particular graph associated with the $U$-statistic kernel. The above ingredients lead to (1) a new, much more runtime-efficient algorithm to exactly compute general higher-order $U$-statistics, and (2) a more streamlined characterization of runtime complexity of computing $U$-statistics. We develop an accompanying open-source package called \texttt{u-stats} in both \href{https://github.com/zrq1706/U-Statistics-Python}{Python} and \href{https://github.com/cxy0714/U-Statistics-R}{R}. We demonstrate through three examples in statistics that \texttt{u-stats} achieves impressive runtime performance compared to existing benchmarks.  This paper also aspires to achieve two goals: (1) to capture the interest of researchers in both statistics and other related areas to further advance the algorithmic development of $U$-statistics and (2) to lift the burden of implementing higher-order $U$-statistics from practitioners.
\end{abstract}

\noindent%
{\it Keywords:} $U$-statistics, Statistical Computing, Tensors, Einstein Summation, Graphs
\vfill

\newpage
\spacingset{1.8} 

\begin{bibunit}[agsm]

	\setcounter{page}{1}

	\section{Introduction}
	\label{sec:intro}
	Higher-order $U$-statistics are prevalent in various disciplines, including statistics \citep{robins2008higher, bonvini2022fast, bonvini2024doubly, yao2018testing, breunig2024adaptive, shao2025u}, machine learning \citep{clemenccon2008ranking, clemenccon2013maximal}, random graphs \citep{chatterjee2024higher}, and Bayesian networks \citep{shpitser2011efficient}, just to name a few. It is well-known that $U$-statistics are highly time-consuming to compute at large. In the worst-case, an $m$-th order $U$-statistic over a sample of size $n$ takes roughly $O (n^{m})$ steps to compute, because by definition (see Definition~\ref{def:u_stat}) one needs to sum over all $m$-out-of-$n$ permutations. If $n = 10^{4}$ and $m = 7$, $n^{m} \approx 10^{28}$!

	More specifically, in causal inference, \citet{robins2008higher} developed a so-called ``higher-order influence function (HOIF)'' framework for constructing nearly rate-optimal estimators of average treatment effects under ignorability under various structural assumptions, using higher-order $U$-statistics with order roughly of $\sqrt{\log n}$. Despite their appealing statistical properties, the implementation remains a challenge in practice, especially computationally. Here we quote from \citet{wager2024causal} to highlight this dilemma:
	\begin{quote}
		\emph{One challenge with the HOIF estimator ... is that to date it has been challenging to implement in practical applications ...}
	\end{quote}
	This point is echoed in a recent review article on the future of causal inference \citep{cinelli2025challenges} (also see \citet{zheng2025perturbed} for a related discussion):
	\begin{quote}
		\emph{... where estimators are available that can improve on doubly robust style methods ... the (higher-order) estimators are often computationally intensive and require careful tuning. There are many opportunities to make these methods more practical, automatic, and user-friendly.}
	\end{quote}

	Our paper was initially motivated by (at least partially) addressing the computational aspect of this challenge, but we will present a general-purpose algorithm (and some theory) for computing generic higher-order $U$-statistics beyond those for HOIF estimators. We will revisit HOIF estimators in Section~\ref{sec:example_hoif}.

	To tackle the difficulty of computing higher-order $U$-statistics, statisticians and computer scientists have long resorted to the so-called ``incomplete $U$-statistics'', where one sums over only a (random) subset of all $m$-out-of-$n$ permutations \citep{blom1976some, nowicki1988subgraph, shao2025u}. Another natural approach is to choose a subsample of size $n' < n$ from the whole sample and then compute a complete $U$-statistic from that subsample. The randomization approach injects extra randomness into the final statistical results, which can produce inconsistent statistical conclusions between different randomly selected permutations/subsamples \citep{guo2025rank}. For a restricted class of higher-order $U$-statistics with decomposable and permutation-invariant kernels, \citet{kong2018estimating} astutely recognized that exact computation can be accomplished by repeated matrix multiplications. For asymmetric kernels, their strategy essentially computes an incomplete $U$-statistic instead. A somewhat similar strategy has also been considered in \citet{zhang2025adaptive}, in which an incomplete $U$-statistic with a special kernel structure is computed more efficiently via dynamic programming. All of the above approaches may suffer from efficiency loss by not using all the available information.

	The literature on (the complexity of) exactly computing higher-order $U$-statistics is rather scarce. This is hardly surprising, because the exact time complexity of computing higher-order $U$-statistics seems to simply equal the total number of summations. However, as we will demonstrate in the paper, exactly computing $U$-statistics can be improved compared to a brute-force summation over all summands (using for-loops) when the kernel enjoys a particular decomposable structure specified in Definition~\ref{def:mul_decomp} later. Interestingly, as we will exhibit in Section~\ref{sec:examples}, such a structure holds in many statistical applications. It will also be clear later that it is helpful to bring in tools such as the Einstein summation from the computational mathematics community to obtain improved algorithms of computing $U$-statistics by leveraging such a structure. It is worth noting that Section~3 of \citet{liu2024hypothesis} considered to use di-graphs to represent the structure of a class of $U$-statistics, a strategy conceptually similar to ours, but we aim to cover more general higher-order $U$-statistics. In a similar vein, \citet{he2021asymptotically} proposed an iterative algorithm to compute a specific class of $U$-statistics for high-dimensional testing; however, their approach is not readily generalizable to more complex settings. By contrast, our strategy subsumes their problem as a special case, as demonstrated in Appendix~\ref{app:examples_complexity}.

	\subsection*{Main contributions}

	The main contributions of our paper can be summarized as follows.

	\begin{itemize}
		\item We design a new method for exactly computing higher-order $U$-statistics and another related object called $V$-statistics by leveraging the Einstein summation operation and related concepts from graph theory. Using graph theory, we provide an optimistic estimate of the time complexity of exactly computing $U$-statistics (see Section~\ref{sec:alg_u}). Although the idea is conceptually simple, to the best of our knowledge, it has not been thoroughly explored in the statistical literature.

		\item Despite the ubiquity of $U$-statistics, a general-purpose software package to compute them is lacking. To fill in this void, we developed a Python package \package{} (also available in \href{https://github.com/cxy0714/U-Statistics-R}{R}), which implemented the aforementioned algorithm. We demonstrate that the runtime of \package{} achieves the state-of-the-art performance in various examples, including computing HOIF estimators, motif counts of networks, and distance covariances in independence testing problems. In fact, as will be clear in Section~\ref{sec:examples}, \package{} can dramatically improve the runtime compared to popular benchmarks. \package{} currently supports both CPU and GPU infrastructures. Having access to GPU can further improve the runtime because our algorithm heavily utilizes tensor operations such as the Einstein summation. 
	\end{itemize}

	\subsection*{Related Literature}

	In this part, we provide an incomplete survey of the related literature on $U$-statistics and the relevant techniques/tools to develop our new algorithm.

	\subsubsection*{Statistics and machine learning}

	$U$-statistics are frequently found in statistics and machine learning. For example, in certain machine learning tasks \citep{clemenccon2008ranking, clemenccon2013maximal, shen2025engression}, the empirical training risk can be written as a $U$-statistic instead of a sample mean. Various (conditional) dependence measures can also be represented as $U$-statistics \citep{yao2018testing, he2021asymptotically, shao2025u, zhang2025adaptive}; see Section~\ref{sec:dependence} for an example. As mentioned before, in causal inference, HOIF has been shown to be a theoretically appealing framework for constructing nearly rate-optimal estimators for various causal effect parameters. Estimators based on HOIF are higher-order $U$-statistics; see Section~\ref{sec:example_hoif} for concrete forms. Our work is also related to the literature on probabilistic graphical models \citep{pearl1988probabilistic, shpitser2011efficient}, in which a common problem is to compute certain marginal distributions from the joint distribution, factorized according to some Bayesian network. It can be reduced to computing $U$-statistics with kernels satisfying a particular structure to be defined later.

	\subsubsection*{Motif counts in random graphs}

	Higher-order $U$-statistics also appear very often in statistical inference over networks, often modeled as random graphs \citep{chatterjee2024higher, jin2025counting}. For instance, \citet{chatterjee2024higher} estimate motif counts (such as triangles, 4-cliques, etc.) by higher-order $U$-statistics and use these estimated motif counts to test certain properties of the underlying network. Motif counts also have widespread applications in bioinformatics \citep{wernicke2006fanmod}. As will be shown in Section~\ref{sec:example_motif}, using the state-of-the-art software package \igraph{}, for a moderately dense network, computing motif counts can be quite time consuming, necessitating the development of new algorithmic tools for $U$-statistics.


	\subsubsection*{Tensor operations}

	Since computing $U$/$V$-statistic is closely related to numerical integration using Monte Carlo sampling for which tensor operations are quite useful, our work is inspired by related techniques developed in the computational mathematics and physics communities. In particular, our algorithm utilizes the Einstein summation operation, an important tool for tensor computations. For example, in quantum computing/information, tensor networks are often used to describe many-body quantum systems and a very useful operation called tensor contraction is implemented by the Einstein summation in practice \citep{markov2008simulating}. As will be seen later, our proposed algorithm is similar to tensor contraction in spirit.

	\subsection*{Notation and organization}

	In this section, we first collect some frequently used notation. We adopt the following convention: we use the demarcation $\{\}$ for a set, representing an unordered collection of distinct elements; whereas we use the demarcation $()$ for a tuple, meaning an ordered collection of elements which may contain duplicates. Given a positive integer $m$, let $[m] \coloneqq \{1, 2, \ldots, m\}$. To reduce clutter, we write $\bar{i}_{m} \equiv (i_{1}, \cdots, i_{m})$ for short. $\naturals$ denotes the set of natural numbers that includes zero, while $\positivenaturals$ denotes the set of all positive integers. Given a tuple $\bm{s}$, we let $\takesetp{\bm{s}} \coloneqq \{s_k \mid k\in [m]\}$ denote the set containing all the elements that appear in $\bm{s}$. Given a non-empty and finite set $S$ and a positive integer $m$, all $m$-permutations of elements in $S$ are denoted by $\perm{S, m}$. All partitions of $S$ are denoted by $\partitionp{S}$. If $S$ happens to be an integer set $[m]$, $\partitionp{S}$ is simplified to $\partitionp{m}$. Given sets $S_{1}, \cdots, S_{m}$, $\times_{i = 1}^{m} S_{i}$ denotes their Cartesian product. When $S_{1} = \cdots = S_{m} = S$, we simply write $S^{m} \equiv \times_{i = 1}^{m} S_{i}$. $S^* \coloneqq \cup_{m \in \positivenaturals} S^m$ denotes all tuples possibly generated by $S$. Given a tuple $\bm{s} \in S^m$ and a tuple $\alpha \in [m]^*$, we let $\bm{s}[\alpha] \coloneqq (s_
		{\alpha_k})_{k=1}^{|\alpha|}$. To describe computational complexity, we write $\appequal(\cdot)$ to indicate equality up to a constant factor and $\less(\cdot)$ to indicate an upper bound up to a constant factor.

	Terminologies from graph theory will be frequently encountered. Throughout this paper, all graphs are assumed to be finite and undirected without self-loops, but are allowed to have isolated vertices, unless stated otherwise. A graph $G$ is associated with a pair $(V, E)$, where $V \equiv V (G)$ denotes the set of vertices and $E \equiv E (G)$ denotes the set of edges in $G$. Two vertices $u$ and $v$ are said to be adjacent if $\{u, v\} \in E$. A complete graph is a graph with every pair of vertices connected by an edge, and a complete graph with $m$ vertices is denoted by $\completegraphp{m}$. For any vertex $v \in V$, its neighbor set is denoted as $N_G (v) \coloneqq \{ u \in V \mid \{u, v\} \in E \}$, and the degree of $v$ is defined as $\degp{v}{G} \coloneqq |N_G(v)|$. A graph $G^\prime$ is a subgraph of graph $G$, if $V (G^\prime) \subseteq V (G)$ and $E (G^\prime) \subseteq E (G)$. For any subset $V' \subseteq V$, the induced subgraph $G_{V'}$ has a vertex set $V'$ and an edge set $E_{V'} \coloneqq E (G_{V'}) \equiv \{\{u, v\} \in E \mid \forall \, u, v \in V'\}$. A subset of vertices $C \subseteq V$ is said to be a clique if $G_{C}$ is complete.


	The remainder of our paper is structured as follows. Section~\ref{sec:preliminaries} introduces and reviews several concepts central to the paper, including $U$/$V$-statistics and tensors (Section~\ref{sec:pre_u_v}), the Einstein summation (Section~\ref{sec:pre_tensor_contraction}), and notions (including vertex elimination, treewidth, and quotient graphs) from graph theory (Section~\ref{sec:graph}). Readers who are familiar with the related materials are recommended to skip the relevant subsections. In Section~\ref{sec:alg_u_v}, the new algorithm is presented. We then demonstrate the practical merit of our algorithm using three numerical examples in Section~\ref{sec:examples}, including the computations of HOIF estimators, motif counts in networks (Section~\ref{sec:example_motif}), and distance covariances (Section~\ref{sec:dependence}). Section~\ref{sec:conclusions} concludes the paper with a discussion of future directions.

	\section{Preliminaries}
	\label{sec:preliminaries}

	In this section, we will review concepts important for later development, including $U$/$V$-statistics and tensors (Section~\ref{sec:pre_u_v}), the Einstein summation operation (Section~\ref{sec:pre_tensor_contraction}), and useful notions from graph theory (Section~\ref{sec:graph}). The way in which we define $U$- or $V$-statistics may look strange to statisticians, but we find such definitions more convenient to work with when combined with the Einstein summation formulation.

	\subsection{\pdfu/\pdfv-Statistic and Tensor}\label{sec:pre_u_v}

	We define $U$-statistics and $V$-statistics as follows. As we just focus on computation, the sample space is regarded as a non-empty set.

	\begin{definition}[$U$-Statistic]\label{def:u_stat}
		Let $\bbX$ be a non-empty set, $m, n$ be positive integers satisfying $m \leq n$, and $h: \bbX^{m} \rightarrow \bbR$ be a function. Given a tuple $\bm{X} \in \bbX^n$, any $U$-statistic of order $m$ with kernel $h$ takes the following form
		\begin{align*}
			\ustatp{h}{\bm{X}} \coloneqq \sum_{\alpha \in \perm{[n],m}} h(\bm{X}[\alpha]).
		\end{align*}
	\end{definition}

	\begin{definition}[$V$-Statistic]\label{def:v_stat}
		Let $\bbX$ be a non-empty set, $m, n$ be positive integers and $h: \bbX^{m} \rightarrow \bbR$ be a function. Given a tuple $\bm{X} \in \bbX^n$, any $V$-statistic of order $m$ with kernel $h$ takes the following form
		\begin{align*}
			\vstatp{h}{\bm{X}} \coloneqq \sum_{\alpha \in [n]^m} h(\bm{X}[\alpha]).
		\end{align*}
	\end{definition}

	\begin{definition}[Tensor]
		\label{def:tensors}
		Given positive integers $m, n > 0$, an $m$-th order tensor (i.e. an $m$-dimensional array) $T$ of size $n$ is defined as a map from $[n]^m$ to $\reals$: for any $\alpha \in [n]^{m}$, $T (\alpha) \in \bbR$; and there is no time complexity to access an entry in $T$, i.e. to evaluate $T$ on $\alpha$, for any $\alpha \in [n]^{m}$. Here we can safely assume that each dimension of a tensor has the same size (i.e. number of entries in each dimension) unless explicitly mentioned otherwise.
	\end{definition}

	In statistical applications, $\bbX$ and the tuple $\bm{X}$, respectively, correspond to the sample space and the observed data sample of size $n$. Unlike $U$-statistics, computing $V$-statistics amounts to summing over all $[n]^{m}$ without being restricted to permutations. As will be clear later in this paper, $V$-statistics can be represented as summations over higher-order tensors. This unique property renders computing $V$-statistics, but not $U$-statistics, amenable for tensor operations such as the Einstein summation. In particular, if the kernel function $h$ satisfies certain decomposable structure (see Definition~\ref{def:mul_decomp} later), the computational complexity of $V$-statistics can be notably reduced.
	Our algorithmic framework is thus to first turn a $U$-statistic into a linear combination of $V$-statistics and then compute each individual $V$-statistic via the Einstein summation.

	\subsection{\Einsum{} Notation and \Einsum{} Operation}
	\label{sec:pre_tensor_contraction}

	As mentioned, our algorithm to be revealed will rely on the Einstein summation operation (abbreviated as the \Einsum{} operation) over tensors. In this section, we give a brief but (hopefully) pedagogical overview for readers unfamiliar with these related concepts. We also steer readers to \citet{blacher2024einsum} for more details. 

	The \Einsum{} operation to be defined serves as a fundamental computational primitive in our proposed algorithm to efficiently and exactly compute the $U$/$V$-statistics. The \Einsum{} operation generalizes the classical Einstein summation convention. The modern \Einsum{} operation, now implemented in popular Python packages such as \numpy{} and \torch{}, can be used for efficient tensor computation. To explain the \Einsum{} operation, we first introduce a concept called the \Einsum{} notation.

	\begin{definition}[\Einsum{} Notation]\label{def:tensor_expression}
		Let $m$ and $K$ be two positive integers. An \Einsum{} notation with respect to [w.r.t.] $[m]$ is a pair
		\begin{align*}
			\calN = \big(\calA; B\big) \quad \text{with} \quad \calA \coloneqq (A_1, \cdots, A_K), A_k,\, B \in [m]^*, \; k \in [K],
		\end{align*}
		where we recall that $[m]^{\ast}$ denotes all tuples possibly generated by elements in $[m]$, and $B$ needs to satisfy the following constraint:
		\begin{itemize}
			\item $B[i] \neq B[j]$, for any $i, j \in [|B|]:i \neq j$,
			\item $\takesetp{B} \subseteq \bigcup_{k \in [K]} \takesetp{A_k}$.
		\end{itemize}
		$\calA \coloneqq (A_k)_{k = 1}^{K}$ is referred to as the input tuple list of $\calN$ and $B$ is referred as the output tuple. The index set of $\calN$ is the set of all indices appearing in the input tuple list, and is denoted by $\indexset(\calN)$. Without loss of generality, we always take $m$ to be the smallest $m$ such that $[m] = \indexset(\calN)$. If $B = \emptyset$, we then simply write $\calA$ as $\calN = (\calA; \emptyset)$ and call $\calA$ as the \Einsum{} notation.
	\end{definition}


	It should be noted that, in the above definition, the index set $[m]$ is simply chosen for notational convenience and, in principle, the indices need not be integers or numbers. We now define the \Einsum{} operation based on the \Einsum{} notation.

	\begin{definition}[\Einsum{} Operation]
		\label{def:tensor_contraction}
		Let $\calN = (\calA; B)$ be an \Einsum{} notation w.r.t. the index set $I = [m]$. Suppose that a set of tensors $\calT \coloneqq (T_k)_{k=1}^K$, each of size $n$ and of order $(m_k)_{k = 1}^{K}$, can be represented by $\calN$, i.e. $m_k = |A_k|$ for any $k \in [K]$. An \Einsum{} operation, denoted as $\tensorcontraction(\calT, \calN)$, takes $\calT$ and $\calN$ as inputs and outputs another tensor $T^{\rm out}: [n]^{|B|} \rightarrow \reals$ as follows: for any $\alpha \in [n]^{|B|}$,
		\begin{align*}
			T^{\rm out} (\alpha) = \sum_{\substack{\alpha^\prime \in [n]^m \\ \alpha'[B] = \alpha}} \prod_{k=1}^K T_k(\alpha^\prime[A_k]).
		\end{align*}
		As a special case, if $B = \emptyset$, then $T^{\rm out} (\alpha)$ reduces to the following scalar:
		\begin{align*}
			T^{\rm out} = \sum_{\alpha^\prime \in [n]^m}\prod_{k=1}^K T_k(\alpha^\prime[A_k]).
		\end{align*}
	\end{definition}

	We now use a concrete example that will also appear in Section~\ref{sec:alg_u} to illustrate the above definitions. Two other examples are provided in Appendix~\ref{app:examples}.

	\begin{example}
		\label{eg:running}
		The last example that we consider will also be used in Section~\ref{sec:alg_u_v} at various stages. Given three second-order tensors (matrices) $\calT = (T_1, T_2, T_3)$, suppose we are interested in computing a scalar $T^{\rm out}$:
		\begin{align*}
			T^{\rm out} = \sum_{i = 1}^{n} \sum_{j = 1}^{n} \sum_{k = 1}^{n} \sum_{l = 1}^{n} T_1 (i, j) T_2 (j, k) T_3 (k, l).
		\end{align*}
		The corresponding \Einsum{} notation is $\calA = (A_1 = (1, 2), A_2 = (2, 3), A_3 = (3, 4))$ with $B = \emptyset$ and the index set $I = [4]$. $T^{\rm out}$ can be written in the \Einsum{} operation form $\tensorcontraction(\calT, \calA)$ because by definition:
		\begin{align*}
			\tensorcontraction(\calT, \calA) & \equiv \sum_{\alpha \in [n]^{4}} \prod_{k = 1}^{3} T_{k} (\alpha [A_k]) = \sum_{\alpha \in [n]^{4}} T_1 (\alpha [A_1]) T_2 (\alpha [A_2]) T_3 (\alpha [A_3]) \\
			                                 & = \sum_{\alpha \in [n]^{4}} T_1 (\alpha[1, 2]) T_2 (\alpha[2, 3]) T_3 (\alpha[3, 4])                                                                         \\
			                                 & = \sum_{\alpha \in [n]^{4}} T_1 (\alpha[1], \alpha[2]) T_2 (\alpha[2], \alpha[3]) T_3 (\alpha[3], \alpha[4]),
		\end{align*}
		so the claim holds by identifying $\alpha [1] = i, \alpha [2] = j, \alpha [3] = k, \alpha [4] = l$.
	\end{example}

	\begin{remark}
		\label{rem:Python}
		The \Einsum{} operation has been implemented in highly optimized Python libraries such as \numpyeinsum{} and \torcheinsum{}. These efficient algorithmic subroutines, enabling efficient \Einsum{} operation in practice, serve as the building blocks of our algorithm. Notably, developed for training large overparameterized neural networks in tensor forms, \torcheinsum{} is also designed to leverage GPU computing to further enhance the scalability and speed of the \Einsum{} operation, as we will demonstrate using an example from statistics in Section~\ref{sec:example_hoif}.
	\end{remark}

	\begin{remark}
		\label{rem:tensor networks}
		The \Einsum{} operation is frequently employed in scientific computing problems. For example, the \Einsum{} operation serves as a basic algorithmic subroutine for computation over tensor (hyper)networks, a useful mathematical language adopted by the quantum computing and quantum information community \citep{markov2008simulating}. 
		Although we do not directly adopt the tensor network formulation, our strategy is quite similar in that we also establish a correspondence between the \Einsum{} operation and operations on graphs -- see Section~\ref{sec:alg_v} for more details.
	\end{remark}

	\subsection{Useful Concepts from Graph Theory}
	\label{sec:graph}

	In this section, we recall three key concepts from graph theory that are central
	to our complexity-theoretic characterization. We follow the terminology of
	\citet{bodlaender2010treewidth} for \emph{vertex elimination} and \emph{treewidth},
	which captures the dominant computational complexity of our problem.
	We also recall the notion of \emph{quotient graph}, adopted from
	\citet{sanders2012high}, which underlies the decomposition of a $U$-statistic
	into a linear combination of $V$-statistics.

	\begin{definition}[Vertex Elimination]
		\label{def:vertex_elimination}
		Let $G = (V, E)$ be a graph and $v \in V$. We define the operation of eliminating $v$ from $V$ as first connecting $\neighborvg{v}{G}$ (the neighbor of $v$ w.r.t. $G$) by edges to form a clique and then removing the vertex $v$ and all edges incident to $v$. The resulting graph is denoted by $\eliminatevg{v}{G}$.
	\end{definition}

	\begin{definition}[Treewidth]
		\label{def:treewidth_elimination}
		Let $G = (V, E)$ be a graph with $|V| = n$. For $\sigma \in \perm{V}$, $(G_i^{\sigma})_{i=1}^n$ is a sequence of graphs recursively defined as $G_i^\sigma = \eliminatevg{\sigma[i]}{G_{i-1}^\sigma}$ with $G_0^\sigma = G$. Then the treewidth of the graph $G$ is defined as
		\begin{equation*}
			\treewidthp{G} \coloneqq \min_{\sigma \in \perm{V}} \max_{i \in [n]} \degp{\sper[i]}{G_{i-1}^\sigma}.
		\end{equation*}
	\end{definition}

	\begin{definition}[Quotient Graph]
		\label{def:quotient_graph}
		Let $G = (V, E)$ be a graph. A quotient graph associated with $G$ w.r.t. a partition $\sPi$ of $V$ is defined as $G/\pi \coloneqq (\pi, E_{\pi})$, where
		\begin{equation*}
			E_\pi = \left\{\{B_i,B_j\} \subseteq \pi \mid \exists \, v_1 \in B_i, v_2 \in B_j, \text{ such that } \{v_1, v_2\} \in E\right\}.
		\end{equation*}
	\end{definition}

	\section{A New Algorithm of Computing \pdfu{}-Statistics}
	\label{sec:alg_u_v}

	We are ready to present our new algorithm for exactly computing $U$-statistics, along with an analysis of the time complexity. Our algorithm has been implemented in an open source Python package \package{} (and its \href{https://github.com/cxy0714/U-Statistics-R}{R} interface). We begin by introducing the \emph{multiplicative-decomposable} (MD) structure of the $U$/$V$-statistic kernel in Section~\ref{sec:decomposition}, followed by the new algorithm and complexity analysis for $V$- and $U$-statistics in Section~\ref{sec:alg_v} and Section~\ref{sec:alg_u}.


	\subsection{The Multiplicative-Decomposable Kernels}
	\label{sec:decomposition}

	The \emph{multiplicative-decomposable} (MD) structure is the key assumption we impose on the kernel of the $U$-statistic to be computed. This structure is essential for developing more efficient algorithms for exactly computing $U$/$V$-statistics than trivially summing over all $m$-out-of-$n$ permutations/$[n]^{m}$. Similar concepts are commonly used to develop numerical algorithms for solving partial differential equations.

	\begin{definition}[Multiplicative Decomposition]\label{def:mul_decomp}
		Let $\samplespace$ be a non-empty set, $m$ a positive integer, and $h: \samplespace^m \to \bbR$. Suppose $\calA \coloneqq (A_k)_{k=1}^K \in [m]^{**}$ satisfies $\bigcup_{k=1}^K \takesetp{A_k} = [m]$, and let $\calH \coloneqq (h_k)_{k=1}^K$ be a tuple of functions where each $h_k$ is defined on $\samplespace^{|A_k|}$. $\calH \times \calA$, consisting of ordered pairs of the form $(h_{k}, A_{k})$ for $k \in [K]$, is called a multiplicative decomposition of the function $h$, if for any $\bm{x} \in \samplespace^m$,
		\begin{align*}
			h(\bm{x}) = \prod_{k=1}^K h_k\big(\bm{x}[A_k]\big).
		\end{align*}
	\end{definition}
	In the sequel, we refer to the function tuple $\calH$ as the kernel components and the index tuple $\calA$ as the decomposition signature. For convenience, we say that a kernel $h$ allows a (non-trivial) MD structure $\calH \times \calA$ if $K > 1$. $U$-statistic kernels with MD structures are common in statistics. We will provide three examples in Section~\ref{sec:examples} and Appendix~\ref{app:examples}.

	To utilize the \Einsum{} operation to compute $U$/$V$-statistics, we need to store pieces of the underlying statistics in tensor format. We term this step as \emph{tensorization}. Let $h$ be the kernel of some $U$/$V$-statistic permitting a MD structure $\calH \times \calA$, where $\calH \coloneqq (h_k)_{k=1}^K$ and $\calA \coloneqq (A_k)^K_{k=1}$. Given a sample $\bm{X} \in \samplespace^n$, the tensorization step constructs a tuple of tensors $\calT \coloneqq (T_k)_{k=1}^K$ as follows:
	\begin{align*}
		T_k (\alpha_k) = h_k (\bm{X}[\alpha_k]), \quad \forall \, \alpha_k \in [n]^{|A_k|}, \, \forall \, k \in [K].
	\end{align*}
	Then following Definition~\ref{def:v_stat} and Definition~\ref{def:mul_decomp},
	\begin{align}\label{eq:tensorization_v}
		\vstat(h, \bm{X}) = \sum_{\alpha \in [n]^m} h(\bm{X}[\alpha]) \equiv \sum_{\alpha \in [n]^m} \prod_{k=1}^K h_k(\bm{X}[\alpha[A_k]]) \equiv \sum_{\alpha \in [n]^m} \prod_{k=1}^K T_k(\alpha[A_k]),
	\end{align}
	which implies that the $V$-statistic $\vstat(h, \bm{X})$ can be equivalently expressed by using the tuple of tensors $\calT$ and the decomposition signature $\calA$. As will be seen in Section~\ref{sec:alg_u}, we can also use $\calT$ and $\calA$ to represent $U$-statistics. Hence $\vstat(h, \bm{X})$ and $\ustat(h, \bm{X})$ can be equivalently denoted as $\ustat(\calT, \calA)$ and $\vstat(\calT, \calA)$, respectively. We summarize the tensorization step in Algorithm~\ref{alg:tensorization} below, which constitutes the first step of our new algorithm for exactly computing a $V$-statistic.

	\begin{algorithm}
		\caption{Tensorization of Kernel Function}
		\label{alg:tensorization}
		\begin{algorithmic}[1]
			\Require
			\Statex A kernel $h$ with decomposition components $\calH \coloneqq (h_k: \bbX^{m_k} \to \reals)_{k=1}^K$
			\Statex A sample $\bm{X} = (X_1, \dots, X_n) \in \samplespace^n$ from a non-empty set $\samplespace$
			\Ensure A tuple of tensors $\calT \coloneqq (T_k: [n]^{m_k} \to \reals)_{k=1}^K$, where $T_k$ is the tensorized $h_k$

			\Function{Tensorization}{$\calH, \bm{X}$}
			\For{$k \in [K]$}
			\Comment{Iterate over each factor of kernel $h$}
			\For{$\alpha \in [n]^{m_k}$}
			\Comment{Iterate over all $m_k$-dimensional indices}
			\State $T_k(\alpha) \gets h_k(\bm{X}[\alpha])$
			\Comment{Evaluate $h_k$ on $\bm{X} [\alpha]$}
			\EndFor
			\EndFor
			\State \Return $\calT \coloneqq (T_k)_{k=1}^K$
			\EndFunction
		\end{algorithmic}
	\end{algorithm}


	The MD structure plays an essential role in reducing the complexity of exactly computing $V$-statistics. 
	Without a (nontrivial) MD structure, the time complexity of computing an $m$-th order $V$-statistic is generally $\less (n^m)$ as we need to evaluate a function of $m$ arguments on all tuples of $[n]^m$. But if a particular MD structure holds, the time complexity may scale with $\less (n^{m_{\mathrm{max}}})$ where $m_{\mathrm{max}} \coloneqq \max_{k \in [K]} |A_k|$ since we just need to evaluate each  $h_k$ on all tuples of $[n]^{|A_k|}$ for $k \in [K]$. As $m_{\mathrm{max}}$ can be smaller than $m$, it is possible to improve the time complexity dramatically in practice. More details on the complexity analysis and intuition can be found in Section~\ref{sec:alg_v} below.

	\setcounter{example}{1}
	\begin{subexample}
		\label{eg:running-a}
		Recalling Example~\ref{eg:running}. The \Einsum{} operation can be seen as a $V$-statistic whose kernel function $h$ has a MD structure $\calH \times \calA$ with $\calH = (h_1, h_2, h_3)$, signature $\calA = ( (1,2),(2,3),(3,4))$ and $n$ sample points $\bm{X} \in \samplespace^{n}$. Tensorization of $\calH$ outputs tensors $\calT = ( T_1, T_2, T_3)$ such that:
		\begin{equation*}
			T_i (j,k) = h_i (\bm{X}_{j}, \bm{X}_{k}),  \, (j,k) \in [n]^2,  \, i = 1,2,3.
		\end{equation*}
		The $V$-statistic and $U$-statistic with kernel $h$ are respectively:
		\begin{align*}
			\vstat(h, \bm{X}) = \vstat (\calT, \calA) = \Einsum{}(\calT, \calA), \quad \ustat(h, \bm{X}) = \ustat(\calT, \calA).
		\end{align*}
	\end{subexample}

	\subsection{The Algorithm of Computing \pdfv{}-Statistics}
	\label{sec:alg_v}

	In this section, we show that $V$-statistics can be computed via the \Einsum{} operation. Let $\vstat(h, \bm{X})$ be an $m$-th order $V$-statistic with kernel $h$ and sample $\bm{X}$ of size $n$. Suppose that the kernel $h$ satisfies MD structure $(\calH \coloneqq (h_k)_{k=1}^K) \times (\calA \coloneqq (A_k)_{k=1}^K)$. As we have seen, $\vstat(h, \bm{X})$ has an equivalent tensorized form $\vstat(\calT, \calA)$, where $\calT$ is the output of Algorithm~\ref{alg:tensorization}.

	By definition, $\calA$ constitutes an \Einsum{} notation and $\calT$ can be represented by $\calA$ (see Definition~\ref{def:tensor_contraction}), then we have
	\begin{equation*}
		\tensorcontraction(\calT, \calA) = \sum_{\alpha \in [n]^m} \prod_{k=1}^K T_k(\alpha[A_k]).
	\end{equation*}
	Comparing the above equation with \eqref{eq:tensorization_v}, we immediately have
	\begin{align*}
		\vstat(h, \bm{X}) = \tensorcontraction(\calT, \calA),
	\end{align*}
	i.e. computing a $V$-statistic is the same as performing an \Einsum{} operation. Algorithm~\ref{alg:v_stats} describes how $V$-statistics can be computed by the \Einsum{} operation.

	\begin{algorithm}
		\caption{Tensorization Algorithm of $V$-Statistics}
		\label{alg:v_stats}
		\begin{algorithmic}[1]
			\Require
			\Statex A kernel function $h: \samplespace^m \to \reals$ permitting a MD structure $\calH \times \calA$
			\Statex A sample $\bm{X} \in \samplespace^n$ from a non-empty set $\samplespace$
			\Ensure $V$-statistic $\vstat(h, \bm{X})$

			\Function{Vstats}{$\calH, \calA; \bm{X}$}
			\State $\calT \gets$ \Call{Tensorization}{$\calH, \bm{X}$}
			\Comment{see Algorithm~\ref{alg:tensorization}}
			\State $v \gets \tensorcontraction(\calT, \calA)$\footnotemark
			\State \Return $v$
			\EndFunction
		\end{algorithmic}
	\end{algorithm}
	\footnotetext{\label{fn:einsum}In our implementation of \package{}, we directly call \numpyeinsum{} or \torcheinsum{}, with the summation ordering optimized by \opteinsum{}. }
	The time complexity of computing a $V$-statistic can obviously be characterized by that of the \Einsum{} operation. The essential idea is to establish a correspondence between the \Einsum{} operation and the vertex elimination operation (Definition~\ref{def:vertex_elimination}) over a graph associated with the \Einsum{} notation. The time complexity can in turn be characterized by the treewidth (Definition~\ref{def:treewidth_elimination}) of that graph. Similar idea has also been exploited in \citet{markov2008simulating}, in which the time complexity of tensor contraction (a special case of the \Einsum{} operation) is characterized by the treewidth of the underlying tensor network.

	To facilitate presentation, we first introduce the notion of ``decomposition graph'' associated with a given \Einsum{} notation, which captures all the information needed to establish the time complexity of computing a particular $V$-statistic.

	\begin{definition}[Decomposition Graph]\label{def:graph_decomp}
		Let $\calA$ be an \Einsum{} notation with no output. The decomposition graph associated with $\calA$ (or equivalently the decomposition signature $\calA$) is a graph $\graphform{\calA} = (V_{\calA}, E_{\calA})$ where
		\begin{itemize}
			\item $V_{\calA} = \indexset(\calA)$,
			\item $E_{\calA} = \big\{ \{i, j\} \subseteq V_{\calA} \mid \exists \, A \in \calA; \text{ such that } i, j \in \takesetp{A} \big\}$.
		\end{itemize}
	\end{definition}



	We use several concrete examples to illustrate the above definition.

	\begin{example}[\Einsum{} notations and its decomposition graphs]
		Figure~\ref{fig:graphs_examples} below illustrates several examples of decomposition graphs associated with different decomposition signatures. The underlying idea is to treat each index in $\calA$ as a vertex and to add an edge between two vertices whenever the corresponding indices co-occur in the same tuple. Repeated tuples do not create multiple edges (see Figures~\ref{fig:graph_hoif_4_2} and~\ref{fig:graph_hoif_4_5}). Moreover, different decomposition signatures can yield the same decomposition graph (see Figures~\ref{fig:graph_motif_4} and~\ref{fig:graph_same_motif_4}).
		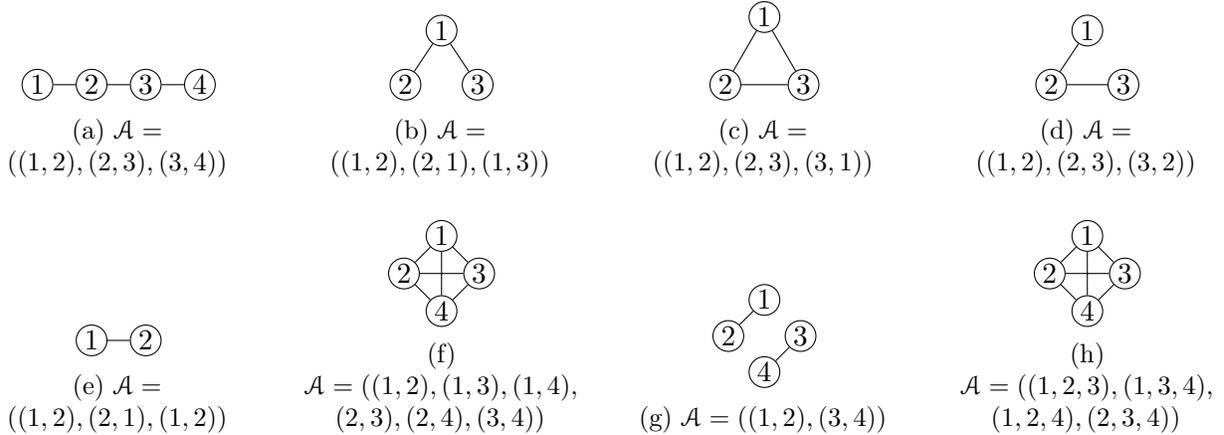
\begin{figure}[htbp]
			\centering
			\begin{subfigure}{0.22\textwidth}
				\centering
				\begin{tikzpicture}[scale=0.6, baseline={(0,-0.2)}]
					\node[draw, circle, inner sep=1pt] (1) at (0,0) {1};
					\node[draw, circle, inner sep=1pt] (2) at (1.2,0) {2};
					\node[draw, circle, inner sep=1pt] (3) at (2.4,0) {3};
					\node[draw, circle, inner sep=1pt] (4) at (3.6,0) {4};
					\draw (1) -- (2) -- (3) -- (4);
				\end{tikzpicture}
				\caption{$\calA = ((1,2),(2,3),(3,4))$}
				\label{fig:graph_hoif_4_1}
			\end{subfigure}
			\hfill
			\begin{subfigure}{0.22\textwidth}
				\centering
				\begin{tikzpicture}[scale=0.6, baseline={(0,-0.2)}]
					\node[draw, circle, inner sep=1pt] (1) at (0,0.6) {1};
					\node[draw, circle, inner sep=1pt] (2) at (-0.8,-0.6) {2};
					\node[draw, circle, inner sep=1pt] (3) at (0.8,-0.6) {3};
					\draw (1) -- (2);
					\draw (1) -- (3);
				\end{tikzpicture}
				\caption{$\calA = ((1,2), (2,1), (1,3))$}
				\label{fig:graph_hoif_4_2}
			\end{subfigure}
			\hfill
			\begin{subfigure}{0.22\textwidth}
				\centering
				\begin{tikzpicture}[scale=0.6, baseline={(0,-0.2)}]
					\node[draw, circle, inner sep=1pt] (1) at (90:1) {1};
					\node[draw, circle, inner sep=1pt] (2) at (210:1) {2};
					\node[draw, circle, inner sep=1pt] (3) at (330:1) {3};
					\draw (1) -- (2) -- (3) -- (1);
				\end{tikzpicture}
				\caption{$\calA = ((1,2), (2,3), (3,1))$}
				\label{fig:graph_hoif_4_3_motif_3}
			\end{subfigure}
			\hfill
			\begin{subfigure}{0.22\textwidth}
				\centering
				\begin{tikzpicture}[scale=0.6, baseline={(0,-0.2)}]
					\node[draw, circle, inner sep=1pt] (1) at (0,0.6) {1};
					\node[draw, circle, inner sep=1pt] (2) at (-0.8,-0.6) {2};
					\node[draw, circle, inner sep=1pt] (3) at (0.8,-0.6) {3};
					\draw (1) -- (2);
					\draw (2) -- (3);
				\end{tikzpicture}
				\caption{$\calA = ((1,2), (2,3), (3,2))$}
				\label{fig:graph_hoif_4_4}
			\end{subfigure}

			\vspace{0.5cm}

			\begin{subfigure}{0.22\textwidth}
				\centering
				\begin{tikzpicture}[scale=0.6, baseline={(0,-0.2)}]
					\node[draw, circle, inner sep=1pt] (1) at (0,0) {1};
					\node[draw, circle, inner sep=1pt] (2) at (1.2,0) {2};
					\draw (1) -- (2);
				\end{tikzpicture}
				\caption{
					$\calA = ((1,2), (2,1), (1,2))$}
				\label{fig:graph_hoif_4_5}
			\end{subfigure}
			\hfill
			\begin{subfigure}{0.22\textwidth}
				\centering
				\begin{tikzpicture}[scale=0.5, baseline={(0,-0.2)}]
					\node[draw, circle, inner sep=1pt] (1) at (0,1) {1};
					\node[draw, circle, inner sep=1pt] (2) at (-1,0) {2};
					\node[draw, circle, inner sep=1pt] (3) at (1,0) {3};
					\node[draw, circle, inner sep=1pt] (4) at (0,-1) {4};
					\draw (1) -- (2) -- (3) -- (4) -- (1);
					\draw (1) -- (3);
					\draw (2) -- (4);
				\end{tikzpicture}
				\caption{
					$\calA = ((1,2),(1,3),(1,4),$ \\
					$(2,3),(2,4),(3,4))$}
				\label{fig:graph_motif_4}
			\end{subfigure}
			\hfill
			\begin{subfigure}{0.22\textwidth}
				\centering
				\begin{tikzpicture}[scale=0.6, baseline={(0,-0.2)}]
					\node[draw, circle, inner sep=1pt] (1) at (0,0.8) {1};
					\node[draw, circle, inner sep=1pt] (2) at (-0.8,0) {2};
					\node[draw, circle, inner sep=1pt] (3) at (0.8,0) {3};
					\node[draw, circle, inner sep=1pt] (4) at (0,-0.8) {4};
					\draw (1) -- (2) ;
					\draw (3) -- (4) ;
				\end{tikzpicture}
				\caption{$\calA = ((1,2),(3,4))$}
				\label{fig:graph_dcov_1}
			\end{subfigure}
			\hfill
			\begin{subfigure}{0.22\textwidth}
				\centering
				\begin{tikzpicture}[scale=0.5, baseline={(0,-0.2)}]
					\node[draw, circle, inner sep=1pt] (1) at (0,1) {1};
					\node[draw, circle, inner sep=1pt] (2) at (-1,0) {2};
					\node[draw, circle, inner sep=1pt] (3) at (1,0) {3};
					\node[draw, circle, inner sep=1pt] (4) at (0,-1) {4};
					\draw (1) -- (2) -- (3) -- (4) -- (1);
					\draw (1) -- (3);
					\draw (2) -- (4);
				\end{tikzpicture}
				\caption{ $\calA = ((1,2,3),(1,3,4),$ \\
					$(1,2,4),(2,3,4))$}
				\label{fig:graph_same_motif_4}
			\end{subfigure}
			\caption{Examples of decomposition graphs of different decomposition signatures.}
			\label{fig:graphs_examples}
		\end{figure}
	\end{example}

	We are ready to present the following ``optimistic'' estimate of the time complexity of the \Einsum{} operation. The proof of the proposition below is deferred to Appendix~\ref{app:complexity_einsum}.

	\begin{proposition}
		\label{pro:complexity_einsum}
		Let $\calA$ be an \Einsum{} notation with no output, and $G_\calA$ be the corresponding decomposition graph of $\calA$. Then there exists an algorithm with a particular \Einsum{} ordering such that for any set of tensors $\calT$ that can be represented by the \Einsum{} notation $\calA$, the time complexity of $\tensorcontraction(\calT, \calA)$ is $\less(|\calA|n^{\treewidthp{G_\calA}+1})$.
	\end{proposition}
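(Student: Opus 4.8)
The plan is to exploit the elimination-order definition of treewidth (Definition~\ref{def:treewidth_elimination}) directly: I will exhibit a concrete \Einsum{} schedule that mimics an optimal vertex-elimination ordering of the decomposition graph $G_\calA$, and show that every intermediate tensor produced along the way ranges over at most $\treewidthp{G_\calA}+1$ distinct indices, so producing it costs $\less(n^{\treewidthp{G_\calA}+1})$. Write $w \coloneqq \treewidthp{G_\calA}$ and fix $\sigma^\star \in \perm{V_\calA}$ attaining the minimum in Definition~\ref{def:treewidth_elimination}, so $\degp{\sigma^\star[i]}{G^{\sigma^\star}_{i-1}} \le w$ for all $i \in [m]$. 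The algorithm keeps a working collection $\calP$ of tensors, initialized to $\calT = (T_1,\dots,T_K)$; at step $i$ it extracts the sub-collection $\calP_v \subseteq \calP$ of tensors whose index set contains $v \coloneqq \sigma^\star[i]$, replaces $\calP_v$ by the single tensor obtained by multiplying its members and then summing out coordinate $v$ (a small \Einsum{} in its own right), and moves on; after all $m$ indices are processed, $\calP$ consists of scalars and the algorithm returns their product. Correctness is a routine induction on $i$ using only distributivity of the product over the finite sum defining coordinate $v$: after step $i$, the product over $\calP$ equals $\sum_{\alpha'} \prod_{k=1}^K T_k(\alpha'[A_k])$ with $\alpha'$ ranging over assignments to the not-yet-eliminated indices.

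The heart of the argument is the following \emph{clique invariant}: at the start of step $i$, the set of indices on which each tensor in $\calP$ depends is a clique of $G^{\sigma^\star}_{i-1}$. For $i=1$ this is immediate, since each $\takesetp{A_k}$ is a clique of $G_\calA$ by the definition of $E_\calA$ (Definition~\ref{def:graph_decomp}). For the inductive step, when $v=\sigma^\star[i]$ is eliminated, every tensor of $\calP_v$ has a clique index set that contains $v$, hence is contained in $\{v\}\cup\neighborvg{v}{G^{\sigma^\star}_{i-1}}$; the newly created tensor therefore depends on a subset of $\neighborvg{v}{G^{\sigma^\star}_{i-1}}$, and this neighbor set is turned into a clique precisely by the elimination operation (Definition~\ref{def:vertex_elimination}), so it is a clique of $G^{\sigma^\star}_i$, while all tensors outside $\calP_v$ keep clique index sets because elimination only adds edges. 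The invariant yields that the product assembled at step $i$ involves at most $1+\degp{v}{G^{\sigma^\star}_{i-1}} \le 1+w$ distinct indices, hence is a tensor with at most $n^{w+1}$ entries; assembling it by $|\calP_v|-1$ pairwise multiplications costs $\less(|\calP_v|\,n^{w+1})$, and the subsequent summation over $v$ costs a further $\less(n^{w+1})$.

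Finally I would sum the per-step costs. Each original tensor and each tensor created during the run (exactly one per step, hence at most $m$ in all) is consumed into at most one $\calP_v$ — the leftover scalars being multiplied together at the end at negligible cost — so $\sum_{i=1}^m |\calP_{\sigma^\star[i]}| \le K+m$, and aggregating the per-step bounds yields total running time $\less((K+m)\,n^{w+1})$. Matching this against the claimed $\less(|\calA|\,n^{w+1})$ is exactly where I expect the work to lie: the steps at which $v$ sits inside a single tensor contribute a bare summation costing up to $n^{w+1}$ and there may be $m$ of them, so one must either read $|\calA|$ as the total descriptive length $\sum_{k=1}^K |A_k|$ (which is $\ge m$ and $\ge K$), or batch each maximal run of such pure-summation steps acting on one tensor lineage into a single macro-step, whose successive costs $n^{w+1}, n^{w}, n^{w-1}, \dots$ telescope to $\less(n^{w+1})$, leaving only $\less(K)$ effective operations. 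A secondary point worth care is that treewidth here is an elimination-order invariant, so the graph-theoretic input actually used — that eliminating $v$ manufactures precisely the clique on $\neighborvg{v}{\cdot}$ needed by the surviving index sets — must be invoked exactly at the spot flagged above; an equivalent packaging phrases the same schedule via a width-$w$ tree decomposition of $G_\calA$ with the tensors distributed among its bags, each $\takesetp{A_k}$ being a clique and hence fitting into some bag.
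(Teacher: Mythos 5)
Your proposal is correct and follows essentially the same route as the paper: sum out one index at a time along an elimination ordering achieving the treewidth, observe that the tensor produced at each step involves at most $\degp{\sigma[i]}{\cdot}+1 \le \treewidthp{G_\calA}+1$ indices, and justify correctness by distributivity; your ``clique invariant'' is the same content as the paper's Lemma~\ref{lem:einsum_graph_seq} identifying each summation step with a vertex elimination in $G_\calA$. Your closing worry about the $(K+m)$ versus $|\calA|$ factor is reasonable but not a gap in your argument — if anything your per-step accounting is more careful than the paper's, which bounds each step by $\less(|\calA|\,n^{\deg+1})$ and then simply takes the maximum over steps.
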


	\begin{remark}[\Einsum{} summation ordering in practice]
		\label{rem:ordering}
		We say that the time complexity given in the above proposition is only an optimistic estimate, because although it can be achieved with the optimal \Einsum{} ordering corresponding to the treewidth (Definition~\ref{def:treewidth_elimination}), this particular ordering is usually difficult to find in practice. \citet{arnborg1987complexity} have shown that finding the ordering that gives rise to the treewidth is an NP-complete problem. While it is ideal to evaluate $\tensorcontraction(\calT, \calA)$ in Algorithm~\ref{alg:v_stats} using this optimal ordering, we decide to directly plug in the Python library \opteinsum{} \citep{daniel2018opt} to find the ordering in practice, because such a preexisting library has been optimized and stabilized over the years. \opteinsum{} offers both heuristic strategies such as greedy algorithms or exhaustive search such as dynamic programming, but the latter can be time-consuming when dealing with large-scale problems. As mentioned in the Introduction, dynamic programming is also used in \citet{zhang2025adaptive} for a special class of $U$-statistics. Since the ordering only depends on the MD structure of the kernel $h$, for commonly encountered kernels, one could in principle find the optimal ordering via exhaustive search without accessing the data sample and re-use this ordering whenever the same type of kernel appears. 

	\end{remark}

	With Proposition~\ref{pro:complexity_einsum}, we immediately obtain the following optimistic estimate of the time complexity of exactly computing a $V$-statistic.

	\begin{corollary}\label{pro:complexity_v}
		Let $\bbX$ be a non-empty set, $m, n$ be positive integers and $\bm{X} \in \samplespace^n$. Suppose that $h: \bbX^{m} \rightarrow \bbR$ be a kernel that permits a MD structure $\calH \times \calA$ and for each $h_k \in \calH = (h_k)_{k = 1}^{K}$, the time complexity of evaluating $h_k$ is of $O (1)$ and does not depend on $n$. Let $\calT$ be the tuple of tensors after applying the tensorization algorithm (Algorithm~\ref{alg:tensorization}) to $\calH$ and $G_\calA$ be the decomposition graph of $\calA$. Then there exists an algorithm such that its time complexity of computing $\vstat(h,\bm{X})$ exactly is~$\less(|\calA| n^{\treewidthp{G_\calA}+1})$.
	\end{corollary}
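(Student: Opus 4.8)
The plan is to split the computation of $\vstat(h,\bm X)$ into the two stages prescribed by Algorithm~\ref{alg:v_stats} — tensorization (Algorithm~\ref{alg:tensorization}) followed by a single \Einsum{} call — bound each stage separately, and observe that the \Einsum{} stage dominates. The second stage is essentially immediate: by \eqref{eq:tensorization_v} together with Definition~\ref{def:tensor_contraction} we have $\vstat(h,\bm X) = \tensorcontraction(\calT,\calA)$, and the \Einsum{} ordering supplied by Proposition~\ref{pro:complexity_einsum} depends only on $\calA$ (equivalently on $G_\calA$) and not on the particular tensors $\calT$, so that proposition already furnishes an algorithm evaluating $\tensorcontraction(\calT,\calA)$ in time $\less(|\calA|\,n^{\treewidthp{G_\calA}+1})$. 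The real work is to check that tensorization does not overshoot this bound.

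First I would reduce to the case where every tuple $A_k$ in the decomposition signature has \emph{distinct} entries. If some $A_k$ repeats an index, replace $h_k$ by its restriction to the corresponding diagonal — a function of $|\takesetp{A_k}|$ arguments — and replace $A_k$ by the subtuple of its distinct entries in order of first appearance. Feeding the same index into two slots of $h_k$ imposes exactly this diagonal constraint, so the value $\vstat(h,\bm X)$ is unchanged; the decomposition graph $G_\calA$ is unchanged as well, since its edges are created only by pairs of \emph{distinct} co-occurring indices and its vertex set $\indexset(\calA)=[m]$ is untouched; and $|\calA|=K$ is unaffected while each $|A_k|$ can only shrink. After this reduction $|A_k| = |\takesetp{A_k}|$ for every $k$.

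Next I would bound the tensorization cost. Algorithm~\ref{alg:tensorization} evaluates each $h_k$ on all of $[n]^{|A_k|}$, and each evaluation costs $\less(1)$ by hypothesis, so the total is $\less\big(\sum_{k=1}^{K} n^{|A_k|}\big)$. The structural point is that $\takesetp{A_k}$ induces a clique in $G_\calA$: by the definition of $E_\calA$, any two distinct indices occurring in a common tuple are adjacent. A clique of size $s$ forces treewidth at least $s-1$ — in any elimination ordering, the first clique vertex to be eliminated still has the remaining $s-1$ clique vertices as neighbours at that step (elimination only adds edges among surviving vertices and removes only edges incident to an already-eliminated vertex), so its degree there is at least $s-1$; I would either spell out this one line or cite \citet{bodlaender2010treewidth}. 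Hence $|A_k| = |\takesetp{A_k}| \le \treewidthp{G_\calA}+1$ for all $k$, giving $\sum_{k=1}^{K} n^{|A_k|} \le K\,n^{\treewidthp{G_\calA}+1} = |\calA|\,n^{\treewidthp{G_\calA}+1}$.

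Adding the two contributions yields total runtime $\less(|\calA|\,n^{\treewidthp{G_\calA}+1}) + \less(|\calA|\,n^{\treewidthp{G_\calA}+1}) = \less(|\calA|\,n^{\treewidthp{G_\calA}+1})$, as claimed. I expect the only slightly delicate part to be the bookkeeping around repeated indices in the $A_k$'s; everything else is a direct appeal to Proposition~\ref{pro:complexity_einsum} plus the elementary clique-versus-treewidth inequality. If one is willing to assume from the outset that the MD signature has distinct-index tuples — which is the case in all the examples of Section~\ref{sec:examples} — the first reduction step can be omitted entirely.
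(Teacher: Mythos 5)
Your proposal is correct, and its core step is the same as the paper's: the paper derives this corollary as an immediate consequence of Proposition~\ref{pro:complexity_einsum} via the identity $\vstat(h,\bm X)=\tensorcontraction(\calT,\calA)$, exactly as you do. Where you go beyond the paper is in accounting for the tensorization stage. The paper simply declares the corollary immediate and, in its experiments, explicitly excludes kernel evaluation and tensorization from the reported runtimes; you instead verify that Algorithm~\ref{alg:tensorization} costs $\less\bigl(\sum_k n^{|A_k|}\bigr)$ and that this is dominated by the \Einsum{} bound because each $\takesetp{A_k}$ is a clique in $G_\calA$, whence $|\takesetp{A_k}|\le\treewidthp{G_\calA}+1$. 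Your clique-versus-elimination argument is sound under Definition~\ref{def:treewidth_elimination}, and your preliminary reduction to distinct-entry tuples is genuinely needed for this step (with $A_k=(1,1,2)$ one has $|A_k|=3$ but possibly $\treewidthp{G_\calA}=1$, so the naive bound $n^{|A_k|}$ would overshoot); you correctly note that the diagonal restriction preserves the value of the statistic, the graph $G_\calA$, and $|\calA|$. In short, your argument buys a complete end-to-end complexity statement consistent with the corollary's hypothesis that each $h_k$ is $O(1)$ to evaluate, whereas the paper's reading only covers the \Einsum{} stage.
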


	\subsection{The Algorithm of Computing \pdfu{}-Statistics}
	\label{sec:alg_u}

	As discussed in Section~\ref{sec:pre_u_v}, unlike $V$-statistics, $U$-statistics cannot be directly computed via the \Einsum{} operation. Therefore, we first decompose a given $U$-statistic into a linear combination of $V$-statistics. Each of the $V$-statistic terms can then be efficiently computed through \Einsum{}, and finally recombined to recover the original $U$-statistic.

	To present how to decompose a $U$-statistic into $V$-statistics, several notions need to be introduced. Recall that $\partitionp{m}$ denotes all partitions of $[m]$. 

	\begin{definition}[V-Set]\label{def:v_set}
		Let $m,n$ be positive integers satisfying $m \leq n$ and $\sPi$ be a partition of the set $[m]$. A V-set of size $n$ w.r.t. partition $\sPi$ is defined as
		\begin{align*}
			\vsetp{n}{\sPi} \coloneqq \left\{ \bar{s}_{m} \in [n]^m \mid \text{where for any $i, j \in [m]$},i, j \in Q \text{ for some $Q \in \sPi$}  \Longrightarrow  s_i = s_j \right\}.
		\end{align*}
		If $\sPi$ happens to be the finest partition of $[m]$, i.e. $\sPi = \{\{1\}, \{2\}, \cdots, \{m\}\}$, the corresponding V-set is denoted by $\vsetp{n}{m}$ and in fact, $\vsetp{n}{m} = [n]^m$. The cardinality of the given partition $\sPi$ is referred to as the order of $\vsetp{n}{\sPi}$.
	\end{definition}

	\begin{definition}[Restricted $V$-Statistic]\label{def:res_v_stat}
		Let $\bbX$ be a non-empty set, $m, n$ be positive integers, $\sPi$ be a partition of set $[m]$ and $h: \bbX^{m} \rightarrow \bbR$ be a function defined on $\bbX^m$. For any tuple $\bm{X} \in \bbX^*$, the $V$-statistic with kernel $h$ restricted by partition $\sPi$ takes the following form
		\begin{align*}
			\vstat[\sPi](h,\bm{X}) \coloneqq \sum_{\alpha \in \vsetp{n}{\sPi}} h(\bm{X}[\alpha]).
		\end{align*}
	\end{definition}

	Having introduced the above concepts, we state the following lemma that describes how to decompose a $U$-statistic into a linear combination of $V$-statistics.

	\begin{lemma}[Decomposition of $U$-Statistic into $V$-Statistics]
		\label{lem:decomp_u_to_v}
		Let $\bbX$ be a non-empty set and $h: \bbX^{m} \rightarrow \bbR$ be a kernel function. Then for any tuple $\bm{X} \in \bbX^{n}$ with $n \geq m$,
		\begin{align*}
			\ustat(h,\bm{X}) = \sum_{\sPi \in \partitionp{m}}
			\mu_\sPi \vstat[\sPi](h, \bm{X}),
		\end{align*}
		where $\mu_\sPi = (-1)^{(m-|\sPi|)} \prod_{C \in \sPi} (|C| - 1)!$.
	\end{lemma}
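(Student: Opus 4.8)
The plan is to recognize the claimed identity as an instance of M\"obius inversion over the lattice $\partitionp{m}$ of partitions of $[m]$, ordered by refinement (so that $\sPi \preceq \tau$ means every block of $\sPi$ is contained in a block of $\tau$, with the finest partition $\hat 0 \coloneqq \{\{1\},\dots,\{m\}\}$ as the minimum). First I would attach to each tuple $\alpha \in [n]^m$ its \emph{kernel partition} $\ker(\alpha) \in \partitionp{m}$, where $i$ and $j$ lie in the same block iff $\alpha_i = \alpha_j$. Two facts are then immediate from the definitions: (i) by Definition~\ref{def:u_stat}, $\alpha \in \perm{[n],m}$ (i.e. $\alpha$ has pairwise distinct entries) iff $\ker(\alpha) = \hat 0$; and (ii) by Definition~\ref{def:v_set}, $\alpha \in \vsetp{n}{\sPi}$ iff $\sPi$ refines $\ker(\alpha)$, i.e. $\sPi \preceq \ker(\alpha)$ — indeed the defining requirement of $\vsetp{n}{\sPi}$ is exactly that each block of $\sPi$ sit inside a block of $\ker(\alpha)$.

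Next I would introduce, for each $\tau \in \partitionp{m}$, the ``exact-kernel'' sum $g(\tau) \coloneqq \sum_{\alpha \in [n]^m:\, \ker(\alpha) = \tau} h(\bm{X}[\alpha])$. Fact (i) gives $\ustat(h,\bm{X}) = g(\hat 0)$, while grouping the tuples of $\vsetp{n}{\sPi}$ by their kernel partition, together with fact (ii), gives, for every $\sPi \in \partitionp{m}$,
\begin{align*}
\vstat[\sPi](h, \bm{X}) = \sum_{\tau \in \partitionp{m}:\, \tau \succeq \sPi} g(\tau).
\end{align*}
By (the upward form of) M\"obius inversion on the finite poset $\partitionp{m}$, this inverts to $g(\sPi) = \sum_{\tau \succeq \sPi} \mu_{\partitionp{m}}(\sPi, \tau)\,\vstat[\tau](h,\bm{X})$; taking $\sPi = \hat 0$ and using that every partition is $\succeq \hat 0$ yields $\ustat(h,\bm{X}) = \sum_{\tau \in \partitionp{m}} \mu_{\partitionp{m}}(\hat 0, \tau)\,\vstat[\tau](h,\bm{X})$. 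It then remains to match coefficients, i.e. to show $\mu_{\partitionp{m}}(\hat 0, \tau) = \prod_{C \in \tau}(-1)^{|C|-1}(|C|-1)!$, which equals $\mu_\tau$ in the statement because $\sum_{C \in \tau}(|C|-1) = m - |\tau|$.

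For that last point I would either invoke the classical evaluation of the M\"obius function of the partition lattice, or — to keep the argument self-contained — bypass abstract M\"obius inversion altogether: swap the order of summation on the right-hand side of the asserted identity, grouping by $\alpha \in [n]^m$, which by fact (ii) turns it into $\sum_{\alpha \in [n]^m} h(\bm{X}[\alpha]) \big(\sum_{\sPi \preceq \ker(\alpha)} \mu_\sPi\big)$, so by fact (i) it suffices to verify the purely combinatorial claim
\begin{align*}
\sum_{\sPi \preceq \tau} \mu_\sPi = \mathbbm{1}[\tau = \hat 0] \qquad \text{for every } \tau \in \partitionp{m},
\end{align*}
since then the two sums agree term by term in $\alpha$. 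Each $\sPi \preceq \tau$ is determined by the partitions $\rho_D$ it induces on the blocks $D \in \tau$ (intersecting blocks of $\sPi$ with $D$), the map $\sPi \leftrightarrow (\rho_D)_{D \in \tau}$ is a bijection onto $\prod_{D \in \tau}\partitionp{|D|}$, and since $\mu_\sPi$ depends only on the block sizes it factors as $\mu_\sPi = \prod_{D \in \tau}\mu_{\rho_D}$; hence $\sum_{\sPi \preceq \tau}\mu_\sPi = \prod_{D \in \tau}\big(\sum_{\rho \in \partitionp{|D|}}\mu_\rho\big)$, and it remains to show $\sum_{\rho \in \partitionp{k}}\prod_{C \in \rho}(-1)^{|C|-1}(|C|-1)! = \mathbbm{1}[k=1]$ for every $k \geq 1$. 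This follows from the exponential formula: the block-weight $j \mapsto (-1)^{j-1}(j-1)!$ has exponential generating function $\sum_{j \geq 1} \tfrac{(-1)^{j-1}}{j} x^j = \log(1+x)$, so the sums in question have exponential generating function $\exp(\log(1+x)) = 1+x$, whose $x^k/k!$-coefficient is $1$ at $k=1$ and $0$ for $k \geq 2$. (A short induction on $k$, obtained by conditioning on the block containing the element $k$, works just as well.)

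The argument is almost entirely bookkeeping; the one place requiring care is the orientation of the refinement order, so that $\vstat[\hat 0]$ really is the unrestricted $V$-statistic over $[n]^m$ and the $U$-statistic corresponds to $\hat 0$. The only genuinely non-formal ingredient is the evaluation of the partition-lattice M\"obius function — equivalently, the identity $\exp(\log(1+x)) = 1+x$ fed through the exponential formula — which I expect to be the main (though entirely standard) obstacle, with everything else a routine change of summation order.
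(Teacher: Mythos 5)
Your proposal is correct and follows essentially the same route as the paper: you group tuples by their kernel partition (your fact (ii) and the identity $\vstat[\sPi]=\sum_{\tau\succeq\sPi}g(\tau)$ are exactly the paper's Lemma on decomposing a $V$-statistic into restricted $U$-statistics over coarser partitions), and then invert via the M\"obius function of the partition lattice evaluated at $(\hat 0,\tau)$, which is precisely the paper's argument in Appendix~B. The only difference is cosmetic: the paper cites Stanley for the partition-lattice M\"obius function and proves a version relative to an arbitrary base partition, whereas you additionally offer a self-contained verification of the coefficient identity $\sum_{\sPi\preceq\tau}\mu_\sPi=\mathbbm{1}[\tau=\hat 0]$ via block-wise factorization and the exponential formula, which is a nice way to keep the proof elementary.
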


	The proof of the above lemma can be found in Appendix~\ref{app:decomp_u_stats} and leverages \mobius{} inversion. We now discuss how to extend the tensorization algorithm from $V$-statistics (Algorithm~\ref{alg:tensorization}) to $U$-statistics. Let $\ustatp{h}{\bm{X}}$ be an $m$-th order $U$-statistic with kernel function $h$ and sample $\bm{X} \in \samplespace^n$, where $h$ permits a MD structure $\calH \times \calA$. On the surface, it seems that we have to apply Algorithm~\ref{alg:v_stats} to each resulting $V$-statistics in the decomposition given in Lemma~\ref{lem:decomp_u_to_v}. But in fact, we only need to apply Algorithm~\ref{alg:v_stats} once since all the resulting $V$-statistics can be tensorized by the same tensor tuple $\calT = \textsc{Tensorizaion} (\calH, \bm{X})$.

	Given any $\pi \in \partitionp{m}$, we can also compute the restricted $V$-statistic $\vstat[\pi] (h, \bm{X})$ by turning it into a new \Einsum{} notation $\calA_{\pi}$ constructed from $\calA$ based on $\pi$. $\calA_{\pi}$ simply identifies all the indices that happen to belong to the same subset of the partition $\pi$. The procedure is presented in Algorithm~\ref{alg:induced_expression}. We then have the following identity that relates the restricted $V$-statistic by the partition $\sPi$ and the \Einsum{} operation under the induced decomposition signature $\calA_{\pi}$:
	\begin{equation}
		\label{restricted identity}
		\vstat[\sPi](h, \bm{X}) =  \tensorcontraction(\calT, \calA_\sPi).
	\end{equation}
	One can immediately realize that \eqref{restricted identity} holds because, by Definitions~\ref{def:v_set} and \ref{def:res_v_stat}, $\vstat[\pi](h, \bm{X})$ is a summation over all tuples of $[n]^m$, under the restriction that indices belonging to the same subset of the partition $\pi$ need to be identified.

	\begin{algorithm}
		\caption{Induced \Einsum{} Notation via Partition}
		\label{alg:induced_expression}
		\begin{algorithmic}[1]
			\Require
			\Statex An \Einsum{} notation $\calA = (A_k)_{k=1}^K$ with index set $[m]$ and a partition $\sPi \in \partitionp{m}$ of size $J$
			\Ensure The \Einsum{} notation $\calA_\sPi$ induced by $\sPi$ on $\calA$

			\Function{InducedNotation}{$\calA; \sPi$}
			\State Let $(Q_j)_{j=1}^J$ be a permutation of $\sPi$
			\Comment{Label subsets by $j \in [J]$}
			\State Define $\psi \colon [m] \to [J]$ as the subset assignment map:
			\Function{$\psi$}{$i$}
			\State \Return $j$ if $i \in Q_j$
			\Comment{Map all indices in $Q_j$  to $j$}
			\EndFunction
			\State Initialize $\calA_\sPi \gets \calA$
			\For{$k \in [K]$}
			\For{$i \in [|A_k|]$}
			\State $\calA_\sPi[k][i] \gets \psi(A_k[i])$
			\Comment{Replace index with its subset label}
			\EndFor
			\EndFor
			\State \Return $\calA_\sPi$
			\EndFunction
		\end{algorithmic}
	\end{algorithm}

	According to Lemma~\ref{lem:decomp_u_to_v}, when computing an $m$-th order $U$-statistic, the number of $V$-statistics that we need to compute is the same as the total number of partitions of $[m]$, or equivalently the Bell number $\mathsf{B}_m$. Bell number $\mathsf{B}_m$ grows super-exponentially with $m$. 
	As a result, when the order $m$ increases, the number of $V$-statistics that needs to be computed grows rapidly. Fortunately, we discover a sparsification trick to reduce the number of $V$-statistics in the decomposition when possible. To this end, we first introduce the following sparsified tensor list $\tilde{\calT}$, which gives the same $U$-statistic as the original tensor list $\calT$.

	\begin{lemma}\label{lem:sparsification_trick}
		Let $h$ be a kernel function that permits a MD structure $(\calH = (h_k)_{k=1}^K) \times (\calA = (A_k)_{k=1}^K)$. A tensor tuple $\calT = (T_k)_{k = 1}^{K}$ is constructed from $\calH$ as in Algorithm~\ref{alg:tensorization}. Let $\tilde{\calT} \coloneqq (\tilde{T}_k)_{k=1}^K$ be constructed as follows:
		\begin{equation}
			\label{tilde_T}
			\begin{split}
				\tilde{T}_k (\alpha) =
				\begin{cases}
					h_k (\bm{X}[\alpha]) & \text{if } \forall \, i,j \in [|A_k|]: i \neq j, \, \alpha [i] \neq \alpha [j] \\
					0                    & \text{otherwise},
				\end{cases}
				\, \forall \, \alpha \in [n]^{|A_k|}, \, \forall \, k \in [K].
			\end{split}
		\end{equation}
		Then $\ustat(\calT, \calA) = \ustat(\tilde{\calT}, \calA)$.
	\end{lemma}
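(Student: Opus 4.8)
The plan is to exploit the single feature that separates a $U$-statistic from a $V$-statistic: its outer sum ranges over $\perm{[n],m}$, i.e.\ only over tuples $\alpha=(\alpha_1,\dots,\alpha_m)$ whose coordinates are pairwise distinct, whereas $\tilde\calT$ was built precisely to kill the ``non-distinct'' entries of each tensor factor. Unwinding Definition~\ref{def:u_stat} together with Definition~\ref{def:mul_decomp} and the tensorization of Algorithm~\ref{alg:tensorization},
\begin{equation*}
\ustat(\calT,\calA)=\sum_{\alpha\in\perm{[n],m}}\ \prod_{k=1}^{K} T_k\big(\alpha[A_k]\big),
\qquad
\ustat(\tilde\calT,\calA)=\sum_{\alpha\in\perm{[n],m}}\ \prod_{k=1}^{K}\tilde T_k\big(\alpha[A_k]\big),
\end{equation*}
so it is enough to prove the term-by-term identity $\prod_{k=1}^K T_k(\alpha[A_k])=\prod_{k=1}^K\tilde T_k(\alpha[A_k])$ for each fixed $\alpha\in\perm{[n],m}$, and for that it suffices to check $T_k(\alpha[A_k])=\tilde T_k(\alpha[A_k])$ for every $k\in[K]$.

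So I would fix $\alpha\in\perm{[n],m}$ and $k\in[K]$, set $\beta\coloneqq\alpha[A_k]$ (so $\beta[i]=\alpha_{A_k[i]}$), and appeal to \eqref{tilde_T}: $\tilde T_k(\beta)$ equals $h_k(\bm X[\beta])=T_k(\beta)$ when $\beta$ has pairwise-distinct coordinates and equals $0$ otherwise. Hence everything reduces to showing that $\beta$ does have pairwise-distinct coordinates — a composition-of-injections observation: if $\beta[i]=\beta[j]$ then $\alpha_{A_k[i]}=\alpha_{A_k[j]}$, so $A_k[i]=A_k[j]$ because $\alpha$ is injective on $[m]$, so $i=j$ because $A_k$ has pairwise-distinct entries. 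Summing the term-by-term identity over $\alpha\in\perm{[n],m}$ then gives $\ustat(\calT,\calA)=\ustat(\tilde\calT,\calA)$.

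The main obstacle is less a computation than a hypothesis that must be pinned down: each decomposition tuple $A_k$ has to have pairwise-distinct entries (equivalently $|\takesetp{A_k}|=|A_k|$). Without it the statement fails outright, since if $A_k$ repeated an index then $\alpha[A_k]$ would carry a forced collision for every permutation $\alpha$, forcing $\tilde T_k(\alpha[A_k])=0$ and $\ustat(\tilde\calT,\calA)=0$ regardless of $h$. I would therefore read this condition into the MD structures considered here (it holds in all the examples of Section~\ref{sec:examples}), noting it is harmless: any MD structure can be rewritten as an equivalent one with distinct-entry tuples — collapse each repeated slot of some $A_k$ and absorb it into a lower-arity $h_k$ — which represents the very same kernel $h$, hence the same $U$-statistic, and for which the sparsification argument above applies verbatim. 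Apart from settling this convention, the proof is just the one observation that a $U$-statistic never evaluates a tensor factor off its ``all-distinct'' locus, where by construction the sparsification in \eqref{tilde_T} does nothing.
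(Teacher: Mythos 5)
Your proof is correct and follows essentially the same route as the paper, which disposes of the lemma in a single sentence: the $U$-statistic only ever evaluates each $T_k$ at index tuples with pairwise-distinct coordinates, where \eqref{tilde_T} changes nothing. Your additional observation that each $A_k$ must itself have pairwise-distinct entries (else $\tilde T_k(\alpha[A_k])$ vanishes identically and the claim fails) is a genuine hypothesis the paper leaves implicit, and your remark that any MD structure can be rewritten to satisfy it is the right way to resolve it.
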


	The proof of Lemma~\ref{lem:sparsification_trick} is straightforward as none of the indices $\alpha$ such that $\tilde{T}_{k} (\alpha) \neq T_{k} (\alpha)$ (when the second case of \eqref{tilde_T} holds) contributes to the value of the underlying $U$-statistic, because by definition $U$-statistic sums over distinct indices. Lemma~\ref{lem:sparsification_trick} suggests that some \Einsum{} operations can be skipped when computing $U$-statistics.

	\begin{remark}\label{rem:sparsification_terms_revised}
		After applying the sparsification trick, the only remaining terms required to compute the $U$-statistic $\ustat(\widetilde{\calT}, \calA)$ are the restricted $V$-statistics $\vstat[\pi](\widetilde{\calT}, \calA)$ associated with partitions in the set
		\begin{equation}
			\label{eq:quotient_set}
			\partition_m^\calA \coloneqq \left\{ \pi \in \partitionp{m} \,\middle|\, \forall \, Q \in \pi,\ \forall A \in \calA,\ |Q \cap \takesetp{A}| < 2 \right\}.
		\end{equation}
		That is, only the \Einsum{} operations with notation $\calA_\pi$, for which any $Q \in \pi$ contains $\leq 1$ index from $\takesetp{A}$ for any $A \in \calA$, need to be evaluated when computing $\ustat(h, \calA)$.  To be more concrete, under a given partition $\pi$, if some indices are in the same component of the decomposition signature $\calA$, the corresponding \Einsum{} operation can be skipped because the result must be zero. Example~\ref{eg:sparsification} will illustrate this sparsification trick using a toy example. In Table~\ref{tab:HOIF} of Section~\ref{sec:example_hoif}, we use higher-order influence functions (HOIFs) mentioned in the beginning of this paper as a realistic example to also exhibit the effectiveness of this sparsification trick in reducing the number of $V$-statistics needed to compute.
	\end{remark}

	Finally, we present the new algorithm for computing $U$-statistics in Algorithm~\ref{alg:u_stats} below.

	\begin{algorithm}
		\caption{Computation of $U$-Statistics via Tensorization}
		\label{alg:u_stats}
		\begin{algorithmic}[1]
			\Require
			\Statex A kernel function $h: \samplespace^m \to \reals$ satisfying MD $\calT \times \calA$
			\Statex A sample $\bm{X} = (X_1, \dots, X_n) \in \samplespace^n$
			\Ensure The $U$-statistic $\ustat(h, \bm{X})$

			\Function{Ustats}{$\calT, \calA; \bm{X}$}
			\State Let $\widetilde{\calT}$ be constructed as in  Lemma~\ref{lem:sparsification_trick} and initialize $u \gets 0$
			\For{$\sPi \in \partitionp{m}$}
			\If{$\forall Q \in \pi, \forall A \in \calA; |Q \cap \takesetp{A}| < 2$}
			\State $\mu_\sPi \gets (-1)^{m-|\sPi|} \prod_{C \in \sPi} (|C|-1)!$
			\Comment{See Lemma~\ref{lem:decomp_u_to_v}}
			\State $\calA_\sPi \gets$ \Call{InducedNotation}{$\calA; \sPi$}
			\Comment{See Algorithm~\ref{alg:induced_expression}}
			\State $v_{\sPi} \gets \tensorcontraction(\widetilde{\calT}, \calA_\sPi)$ \Comment{See Footnote~\ref{fn:einsum}}
			\State $u \gets u + \mu_\sPi v_{\sPi}$
			\EndIf
			\EndFor
			\State \Return $u$
			\EndFunction
		\end{algorithmic}
	\end{algorithm}
	\setcounter{example}{1}
	\begin{subexample}
		\label{eg:sparsification}
		To illustrate how the above sparsification trick manifests in concrete examples, we consider a $4$-th order $U$-statistics, with a kernel function $h$ has a decomposition signature $\calA = ((1,2),(2,3),(3,4))$ as in Example~\ref{eg:running-a}. This $U$-statistic can be decomposed into a linear combination of $V$-statistics as follows:
		\begin{align*}
			\ustat(\widetilde{ \calT }, \calA) & = \vstat[\{1\}\{2\}\{3\}\{4\}](\widetilde{ \calT }, \calA) - \cancel{\vstat[\{1,2\}\{3\}\{4\}](\widetilde{ \calT }, \calA)}
			- \vstat[\{1,3\}\{2\}\{4\}](\widetilde{ \calT }, \calA)                                                                                                          \\
			                                   & - \vstat[\{1,4\}\{2\}\{3\}](\widetilde{ \calT }, \calA) - \cancel{\vstat[\{2,3\}\{1\}\{4\}](\widetilde{ \calT }, \calA)}
			- \vstat[\{2,4\}\{1\}\{3\}](\widetilde{ \calT }, \calA)
			\\
			                                   &
			- \cancel{\vstat[\{3,4\}\{1\}\{2\}](\widetilde{ \calT }, \calA)}
			+ \cancel{\vstat[\{1,2\}\{3,4\}](\widetilde{ \calT }, \calA)}
			+ \vstat[\{1,3\}\{2,4\}](\widetilde{ \calT }, \calA)
			\\
			                                   &
			+ \cancel{\vstat[\{1,4\}\{2,3\}](\widetilde{ \calT }, \calA)}
			+ \cancel{2\vstat[\{1,2,3\}\{4\}](\widetilde{ \calT }, \calA)}
			+ \cancel{2\vstat[\{1,2,4\}\{3\}](\widetilde{ \calT }, \calA)}                                                                                                   \\
			                                   &
			+ \cancel{2\vstat[\{1,3,4\}\{2\}](\widetilde{ \calT }, \calA)}
			+ \cancel{2\vstat[\{2,3,4\}\{1\}](\widetilde{ \calT }, \calA)}
			- \cancel{\vstat[\{1,2,3,4\}](\widetilde{ \calT }, \calA)}.
		\end{align*}
		As shown in the above display, there were originally 15 non-redundant $V$-statistics in total, whereas only 5 $V$-statistics remain after applying the sparsification trick!

		Taking the second term $\vstat[\{1,2\}\{3\}\{4\}](\widetilde{ \calT }, \calA)$ on the RHS of the above display as an example: $\vstat[\{1,2\}\{3\}\{4\}](\widetilde{ \calT }, \calA) \equiv 0$ because the intersection between $Q = \{1, 2\}$ and $A_1 = (1, 2)$ is 2, violating the criterion described in \eqref{eq:quotient_set}.
	\end{subexample}



	Given a partition $\pi$ and the decomposition signature $\calA$, the decomposition graph corresponding to induced \Einsum{} notation $\calA_\pi$ can be characterized by the quotient graph $G_\calA / \pi$ (see Definition \ref{def:quotient_graph}) of the decomposition graph $G_\calA$ induced by $\pi$. Armed with the above observation, we can bound the time complexity of exactly computing $U$-statistics by combining Corollary~\ref{pro:complexity_v}, Lemma~\ref{lem:decomp_u_to_v}, and Lemma~\ref{lem:sparsification_trick}, leading to the following result.

	\begin{corollary}
		\label{cor:U}
		Let $G_\calA$ be the decomposition graph of $\calA$.
		Under the same conditions of Corollary~\ref{pro:complexity_v}, there exists an algorithm such that the time complexity of computing the $U$-statistic $\ustatp{h}{\bm{X}}$ is $\less(|\calA| \sum_{l = 1}^{M} N_{l} n^{l + 1})$, where
		\begin{align*}
			M \coloneqq \max\{\treewidthp{G_\calA /\pi} \mid \pi \in \partitionp{m}^\calA \}, \quad N_{l} \coloneqq |\{\pi \in \Pi_{m}^{\calA} \mid \treewidthp{G_{\calA} / \pi} = l\}|, \; \forall \, l
			\in [M].
		\end{align*}
		and $\partitionp{m}^\calA$ is defined in \eqref{eq:quotient_set}.
	\end{corollary}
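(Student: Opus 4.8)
The plan is to chain together the three structural results already established and then carry out the bookkeeping of the resulting finite sum. Writing $\widetilde{\calT}$ for the sparsified tensor list of Lemma~\ref{lem:sparsification_trick}, recall $\ustat(h,\bm{X}) = \ustat(\calT,\calA)$ from the tensorized representation of Section~\ref{sec:decomposition}; Lemma~\ref{lem:sparsification_trick} gives $\ustat(\calT,\calA)=\ustat(\widetilde{\calT},\calA)$, and Lemma~\ref{lem:decomp_u_to_v} together with Remark~\ref{rem:sparsification_terms_revised} (every partition outside $\partitionp{m}^{\calA}$ yields a vanishing restricted $V$-statistic) yields
\begin{equation*}
\ustat(h,\bm{X}) = \sum_{\pi \in \partitionp{m}^{\calA}} \mu_\pi\, \vstat[\pi](\widetilde{\calT},\calA).
\end{equation*}
By the identity \eqref{restricted identity}, each summand equals $\tensorcontraction(\widetilde{\calT},\calA_\pi)$ for the induced \Einsum{} notation $\calA_\pi$ produced by Algorithm~\ref{alg:induced_expression}. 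Hence, as in Algorithm~\ref{alg:u_stats}, the whole computation is one tensorization pass, one \Einsum{} operation per $\pi\in\partitionp{m}^{\calA}$, and a scalar-weighted accumulation, so it suffices to bound the cost of each \Einsum{} and to sum.

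The one genuinely new ingredient is a graph identification: I would prove that the decomposition graph $G_{\calA_\pi}$ of the induced notation equals, as an abstract graph, the quotient graph $G_\calA/\pi$ of Definition~\ref{def:quotient_graph}. Unwinding Algorithm~\ref{alg:induced_expression}, $\calA_\pi$ is obtained from $\calA$ by applying entrywise the block-label map $\psi\colon[m]\to[J]$ that sends an index to the label of the $\pi$-block containing it; since the MD structure forces $\bigcup_k\takesetp{A_k}=[m]$, the map $\psi$ is surjective, so $\indexset(\calA_\pi)$ is exactly the set of $\pi$-blocks, i.e.\ the vertex set of $G_\calA/\pi$. Two distinct blocks are adjacent in $G_{\calA_\pi}$ precisely when some $A_k$ contains an index of one and an index of the other, which is verbatim the adjacency rule of $G_\calA/\pi$; and the membership condition $\pi\in\partitionp{m}^{\calA}$ (namely $|Q\cap\takesetp{A}|<2$ for every block $Q$ and every $A\in\calA$) is exactly what forbids any $A_k$ from inducing a self-loop at a single block, so $G_{\calA_\pi}$ is a simple graph and the identification is exact. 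It follows that $\treewidthp{G_{\calA_\pi}}=\treewidthp{G_\calA/\pi}$, while $|\calA_\pi|=|\calA|$ because Algorithm~\ref{alg:induced_expression} only relabels indices within tuples.

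Now Proposition~\ref{pro:complexity_einsum} (equivalently Corollary~\ref{pro:complexity_v}) applies to $\calA_\pi$ -- which has no output -- and to $\widetilde{\calT}$, representable by $\calA_\pi$ since each $\widetilde{T}_k$ has order $|A_k|=|(\calA_\pi)_k|$: computing $\tensorcontraction(\widetilde{\calT},\calA_\pi)$ takes $\less(|\calA|\,n^{\treewidthp{G_\calA/\pi}+1})$ time. Summing over $\pi\in\partitionp{m}^{\calA}$ and grouping partitions by the value $l=\treewidthp{G_\calA/\pi}$ gives $\less(|\calA|\sum_{l=1}^{M} N_l\, n^{l+1})$ with $M,N_l$ as stated. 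It remains to see that the auxiliary costs are dominated: computing each weight $\mu_\pi$ and accumulating is $\less(m)$ per partition, hence $\less(m\,\mathsf{B}_m)$ in total and independent of $n$; and the single tensorization pass (and the construction of $\widetilde{\calT}$), after tacitly deleting repeated entries of each $A_k$ -- which changes neither $G_\calA$ nor any restricted $V$-statistic -- costs $\less(|\calA|\,n^{d})$ with $d=\max_k|\takesetp{A_k}|$, and this is absorbed because the finest partition lies in $\partitionp{m}^{\calA}$ with quotient graph $G_\calA$ itself, and $G_\calA$ contains a clique on each $\takesetp{A_k}$, so $M\ge\treewidthp{G_\calA}\ge d-1$ and the $l=\treewidthp{G_\calA}$ term already supplies a power of $n$ at least $n^{d}$.

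The main obstacle is not any single estimate -- every quantitative step merely invokes a result already proved -- but making the identification $G_{\calA_\pi}=G_\calA/\pi$ completely airtight: matching the index relabeling of Algorithm~\ref{alg:induced_expression} against Definition~\ref{def:graph_decomp} and Definition~\ref{def:quotient_graph}, and confirming that $\pi\in\partitionp{m}^{\calA}$ is exactly the condition that keeps the quotient a loop-free simple graph so that Proposition~\ref{pro:complexity_einsum} applies without modification. One should also dispose of the degenerate case in which $G_\calA$ has no edge -- then every quotient graph has treewidth $0$ and the stated bound must be read with $l$ ranging over $\{0,\dots,M\}$; this case is routine (the $U$-statistic becomes a product of univariate functions over distinct indices and is handled by a standard dynamic program) and is separate from the main argument. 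Everything else is bookkeeping of the sum over $\partitionp{m}^{\calA}$ and the verification that the per-partition overhead and the one-time tensorization are dominated by the leading \Einsum{} term.
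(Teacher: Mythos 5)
Your proposal is correct and follows essentially the same route the paper takes: decompose via Lemma~\ref{lem:decomp_u_to_v} and the sparsification of Lemma~\ref{lem:sparsification_trick} into restricted $V$-statistics indexed by $\partitionp{m}^{\calA}$, identify the decomposition graph of the induced notation $\calA_\pi$ with the quotient graph $G_\calA/\pi$ (the paper asserts this identification in the paragraph preceding the corollary; you supply the verification), and then apply Proposition~\ref{pro:complexity_einsum}/Corollary~\ref{pro:complexity_v} term by term and group partitions by treewidth. The extra care you take with self-loop exclusion under the condition defining $\partitionp{m}^{\calA}$ and with the dominated tensorization cost is consistent with, and slightly more explicit than, the paper's treatment.
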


	\begin{remark}
		\label{rem:U-complexity}
		In the above result, quantities $N_{l}$ and $M$ are introduced. $N_{l}$ is the number of partitions in $\Pi_{m}^{\calA}$ such that the treewidth of the decomposition graph $G_{\calA}$ quotient by $\pi$ is $l$. Although abstract, $N_{l}$ and $M$ can in principle be computed for concrete examples. For instance, in Section~\ref{sec:example_hoif}, we report the values of $M$ for HOIF estimators between orders $2$ to $12$; see the last column of Table~\ref{tab:HOIF} in Appendix~\ref{app:HOIF}. At this moment, we are not yet aware of a good strategy of counting $N_{l}$, except brute-force enumeration or a loose upper bound by the total number of $V$-statistics in the decomposition; see the third column of Table~\ref{tab:HOIF} in Appendix~\ref{app:HOIF}.
	\end{remark}



	\begin{remark}
		\label{rem:tw}
		In this paper, we have used the treewidth of the decomposition graph (or its quotient as in Corollary~\ref{cor:U}) to characterize the optimistic time-complexity of exactly computing $U$/$V$-statistics, and the \Einsum{} operation. Although treewidth is a widely used concept for establishing the time-complexity of algorithms on graphs, it is generally nontrivial to compute or derive a tight bound. To our knowledge, Theorem~4.7 in \cite{kneis2009bound} proved a bound for treewidth linear in edge size. Whether this bound can be further improved remains an open problem (see Appendix~\ref{app:open}).
	\end{remark}

	Before moving on to applications of our algorithm in the next section, we summarize our proposed algorithm below. Given a $U$-statistic kernel $h$ and a sample $\bm{X}$:
	\begin{enumerate}[label = (\arabic*)]
		\item tensorize $h$ into $\calT$ (Algorithm~\ref{alg:tensorization}) and construct $\tilde{\calT}$ as in Lemma~\ref{lem:sparsification_trick};

		\item for each possible partition $\pi$ in $\Pi_{m}$, obtain the corresponding \Einsum{} notation $\calA_{\pi}$ (Algorithm~\ref{alg:induced_expression}) and evaluate the corresponding $V$-statistic using the \Einsum{} operation;

		\item add all evaluated $V$-statistics together sequentially as in Algorithm~\ref{alg:u_stats}.
	\end{enumerate}

	\begin{remark}
		\label{rem:he}
		As alluded to in the Introduction, \citet{he2021asymptotically} proposed an algorithm to compute a specific class of $U$-statistics useful for high-dimensional testing problems. In Appendix~\ref{app:examples_complexity}, we show that our algorithm subsumes theirs as a special case; and the framework for analyzing the runtime complexity developed here can also be used to characterize the runtime complexity of computing the class of $U$-statistics therein.
	\end{remark}



	\section{Examples and Numerical Results}
	\label{sec:examples}

	In this section, we evaluate the performance of our new algorithm for computing $U$/$V$-statistics in three concrete statistical applications\footnote{The replication code can be found in \href{https://github.com/cxy0714/U-Statistics-Experiments}{this link}.}. 
	We have made our algorithm available for public use, which is available to download from the GitHub repository (by clicking \package{} or its \href{https://github.com/cxy0714/U-Statistics-R}{R} interface) or the Python package index \href{https://pypi.org/}{\texttt{PyPi}}.  For the first application (Section~\ref{sec:example_hoif}) on Higher-Order Influence Functions (HOIFs) \citep{robins2008higher}, we also provide a standalone \href{https://github.com/cxy0714/HOIF}{implementation} based on \package{}, given the growing interest in using HOIFs in practice in the causal inference community \citep{wager2024causal, zheng2025perturbed, cinelli2025challenges}.


	\subsection{Application 1: Higher-Order Influence Functions}
	\label{sec:example_hoif}

	As mentioned in the Introduction, HOIFs \citep{robins2008higher} can help construct rate-optimal estimators for important parameters such as the treatment-specific mean under ignorability under certain complexity-reducing assumptions on the data generating mechanism. However, estimators constructed via HOIFs are high-order $U$-statistics, which are time-consuming to compute in practice. Fortunately, the HOIF estimator of ATE is a sum of $U$-statistics, each of which has kernel satisfying the MD structure as in Definition~\ref{def:mul_decomp}.
	In the setting of an unconfounded observational study, we often have access to a sample tuple $\bm{X} \equiv (X_{i} \equiv (Z_{i}, A_{i}, Y_{i}))_{i = 1}^{n}$ of size $n$, where $Z \in \bbR^{d}$ denotes the baseline covariates, $A \in \{0, 1\}$ denotes a binary treatment variable and $Y$ denotes the outcome of interest. According to \citet{robins2008higher}, with a $k$-dimensional dictionary $\phi: \bbR^{d} \rightarrow \bbR^{k}$, an $m$-th order influence function estimator of the treatment-specific mean reads as
	\begin{align*}
		\hat{\tau}_{m} = \sum_{j = 2}^{m} (-1)^{j} \ustat (h^{(j)}, \bm{X}),
	\end{align*}
	where for every $j = 2, \cdots, m$, given $\bar{i}_{j} \in [n]^j$,
	\begin{align*}
		h^{(j)} (\bm{X}[\bar{i}_{j}]) = A_{i_{1}} \phi (Z_{i_{1}})^{\top} \left\{ \prod_{s = 3}^{j} \left( A_{i_{s}} \phi (Z_{i_{s}}) \phi (Z_{i_{s}})^{\top} - \mathbb{I} \right) \right\} \phi (Z_{i_{2}}) Y_{i_{2}}
	\end{align*}
	and we assume for simplicity that the population mean of $A \phi (Z) \phi (Z)^{\top}$ is $\mathbb{I}$.

	$\hat{\tau}_{m}$ can be rewritten as a sum over $j$-th order $U$-statistic with kernel $h^{(j)}$ satisfying the MD structure, for $j = 2, \cdots, m$. To see this, without loss of generality we consider $j = m$ and rewrite $\ustat (h^{(m)}, \bm{X})$ as
	\begin{equation*}
		\ustatp{h^{(m)}}{\bm{X}} = \sum_{j=0}^{m-2} \binom{m-2}{j} \frac{1}{\binom{n}{j} j!}  \ustatp{h_j^{(m)}}{\bm{X}}.
	\end{equation*}
	where for any $\bar{i}_{j} \in [n]^j$,
	\begin{align}
		\label{eq:hoif_kernel}
		h_j^{(m)} (\bm{X}[\bar{i}_{j}]) \coloneqq A_{i_1} \phi (Z_{i_1})^{\top} \left\{ \prod_{s = 3}^{j} A_{i_s} \phi (Z_{i_s}) A_{i_s} \phi (Z_{i_s})^{\top} \right\} \phi (Z_{i_2}) Y_{i_2}.
	\end{align}
	Then obviously $h_j^{(m)}$ allows for an MD structure, by letting $f_{l} (X_1, X_2) \coloneqq A_1 \phi (Z_1)^{\top} A_2 \phi (Z_2)$ for $l < j$ and $f_{j} (X_1, X_2) \coloneqq A_1 \phi (Z_1)^{\top} \phi (Z_2) Y_2 $. The corresponding decomposition signature of $h_j^{(m)}$ is $((i, i+1))_{i=1}^{j-1}$, which is denoted by $\calA_{\mathsf{HOIF},j}$, so the decomposition graph of $\calA_{\mathsf{HOIF},j}$ is a chain graph with $e = j - 1$ edges, the computational complexity up to order $m=12$ is analyzed in Observation~\ref{obs:HOIF} in Appendix~\ref{app:HOIF}. We also demonstrate the effectiveness of the sparsification trick using the HOIF example (see Lemma~\ref{lem:sparsification_trick}) in Appendix~\ref{app:HOIF}.


	In Table~\ref{tab:HOIF_runtime}, we report the average runtime over 10 runs when computing the $U$-statistic $\ustatp{h_j^{(m)}}{\bm{X}}$ where kernel function $h_j^{(m)}$ is as in \eqref{eq:hoif_kernel} with $m=7$ and $j \in \{2, 3, \cdots, 7\}$ and sample size $n \in \{1000, 2000, 4000, 8000, 10000\}$, respectively using CPU parallel computation and using a single GPU. We would like to highlight that even when $m = 7$, $n = 10000$, it only takes about 5 minutes using our new package \package{}, with CPU parallel computation. If one instead has access to a single GPU, the runtime even reduces to about 8 seconds, a substantial speedup! We remark in passing that all the runtime reported in Table~\ref{tab:HOIF_runtime} excludes the time of kernel evaluation and tensorization.
	\begin{table}[ht]
		\centering
		\caption{Average runtime (in seconds) using CPU (Intel Xeon ICX Platinum 8358 CPUs [2.6GHz, 64 total cores] with memory of 512 GB) with parallel computation (upper cell) and using a single GPU (NVIDIA RTX 5090, 32GB) with parallel computation (lower cell), evaluated across varying sample sizes and orders of HOIFs.}
		\label{tab:HOIF_runtime}
		\begin{tabular}{cccccc}
			\toprule
			\textbf{Order} & \textbf{1000}       & \textbf{2000} & \textbf{4000} & \textbf{8000} & \textbf{10000} \\
			\midrule
			2              & \makecell{0.00217                                                                    \\ 0.00070}
			               & \makecell{0.00515                                                                    \\ 0.00142}
			               & \makecell{0.01965                                                                    \\ 0.00492}
			               & \makecell{0.03466                                                                    \\ 0.02779}
			               & \makecell{0.04811                                                                    \\ 0.04398} \\
			\hline
			3              & \makecell{0.00388                                                                    \\ 0.00124}
			               & \makecell{0.01548                                                                    \\ 0.00274}
			               & \makecell{0.06968                                                                    \\ 0.01337}
			               & \makecell{0.27130                                                                    \\ 0.06413}
			               & \makecell{0.43110                                                                    \\ 0.08977} \\
			\hline
			4              & \makecell{0.01963                                                                    \\ 0.00242}
			               & \makecell{0.07295                                                                    \\ 0.00475}
			               & \makecell{0.32076                                                                    \\ 0.02375}
			               & \makecell{1.81642                                                                    \\ 0.10690}
			               & \makecell{2.33967                                                                    \\ 0.17264} \\
			\hline
			5              & \makecell{0.09356                                                                    \\ 0.00567}
			               & \makecell{0.36376                                                                    \\ 0.00922}
			               & \makecell{1.91751                                                                    \\ 0.04619}
			               & \makecell{9.80295                                                                    \\ 0.23752}
			               & \makecell{16.92521                                                                   \\ 0.41339} \\
			\hline
			6              & \makecell{0.45275                                                                    \\ 0.01863}
			               & \makecell{1.52683                                                                    \\ 0.02545}
			               & \makecell{7.76946                                                                    \\ 0.13763}
			               & \makecell{41.78568                                                                   \\ 0.89001}
			               & \makecell{62.54028                                                                   \\ 1.66450} \\
			\hline
			7              & \makecell{1.73287                                                                    \\ 0.07340}
			               & \makecell{6.80758                                                                    \\ 0.10322}
			               & \makecell{36.41280                                                                   \\ 0.64650}
			               & \makecell{197.05856                                                                  \\ 4.49332}
			               & \makecell{288.89672                                                                  \\ 8.58869} \\
			\bottomrule
		\end{tabular}
	\end{table}

	\subsection{Application 2: Motif Counts in Networks}
	\label{sec:example_motif}

	Motif (or subgraph) counts refer to the number of occurrences of a particular subgraph type $\sfr$ within a larger graph $G$, denoted by $\motifrg{\sfr}{G}$. These counts, also known as network moments, are crucial for understanding the structural properties of random graph models often used to model real-world networks. 
	Interestingly, motif counts can be equivalently represented as $U$-statistics. Leveraging this observation, we show that various motif counts can be computed using a single \Einsum{} operation. While this approach may not capture fine-grained structural information such as sparsity of the adjacency matrix, it is highly efficient in the context of dense graphs.

	Motif counts in a simple graph $G$ with $n$ vertices and the adjacency matrix $A \in \{0,1\}^{n \times n}$ can be represented as $U$-statistics of the following form:
	\begin{equation*}
		\motifrg{\sfr}{G} = \frac{1}{|\mathrm{Aut}(\sfr)|} \sum_{ \bar{i}_{m} \in \perm{n,m}} h_{\sfr} (\bar{i}_{m}; A),
	\end{equation*}
	where the kernel function $h_{\sfr}$ encodes the motif structure, and $|\mathrm{Aut} (\sfr)|$ denotes the number of automorphisms of motif $\sfr$. 
	Due to space limitation, we present the concrete forms of $U$-statistics and the theory analysis for all 3-/4-vertex motifs in Appendix~\ref{app:motifs}.

	We compare our package \package{} with several state-of-the-art systems for motif counting, including the widely used graph analysis library \igraph{}~\citep{csardi2006igraph}, the parallel motif counting system \peregrine{}~\citep{jamshidi2020peregrine}, and NVIDIA's GPU-based library \cugraph{}~\citep{fender2022rapids}. These systems represent different computational paradigms.
	\igraph{} relies on highly optimized enumeration algorithms derived from \citet{wernicke2006fanmod} and runs in a single-threaded setting.
	\peregrine{} implements parallel, motif-specific optimizations for fast CPU-based counting.
	\cugraph{} provides specialized GPU kernels, but currently supports only a limited set of motifs (e.g., triangles). In contrast, \package{} adopts a unified formulation based on $U$-statistics and \Einsum{}, without requiring implementations specific to motif counting.

	We conduct three sets of experiments to evaluate our method: counting (i) all $3$-vertex motifs on CPUs, (ii) all $4$-vertex motifs on CPUs, and (iii) all triangles on GPUs.
	We generate random graphs using the classical Erd\H{o}s--R\'enyi model $G(n, p)$, where each of the $\binom{n}{2}$ possible edges is included independently with probability $p$. The experiments for counting all $3$-vertex and $4$-vertex motifs are conducted on a server equipped with Intel Xeon Scalable Cascade Lake 6248 CPUs (2.5\,GHz, 40 cores) and 192\,GB of memory, while the triangle counting experiment with \cugraph{} is performed on a single NVIDIA RTX 4090 GPU (24\,GB).
	The runtime comparisons for counting 3-vertex motifs are shown in Table~\ref{tab:motif_counts_3}, with results for 4-vertex motifs reported in Table~\ref{tab:motif_counts_4} in Appendix~\ref{app:motifs}, and GPU-based triangle counting results presented in Table~\ref{tab:triangle_counting_gpu} also in Appendix~\ref{app:motifs}.

	The results reveal distinct performance regimes across methods.
	On sparse graphs (small $p$), \peregrine{} and  \igraph{} outperform \package{} due to their ability to exploit sparsity through enumeration-based strategies.
	However, as the graph becomes denser, their runtime increases significantly, with \igraph{} failing to complete within the time limit even at moderate densities. In contrast, \package{} maintains nearly constant runtime across all values of $p$, resulting in substantial performance advantages in dense regimes.
	The results for 4-vertex motifs in Table~\ref{tab:motif_counts_4} exhibit similar trends.
	On GPUs, \cugraph{} achieves faster runtimes on sparse graphs due to specialized kernels, but its performance degrades with increasing density and eventually runs out of memory, whereas \package{} remains stable without requiring motif-specific implementations.

	\begin{table}[ht]
		\centering
		\caption{
			Runtime comparison of exact all $3$-vertex motif counts using \package{}, \peregrine{} and \igraph{} on Erd\H{o}s--R\'enyi graphs $G(n, p)$ with $n = 20000$.
			Experiments were run on Intel Xeon Scalable Cascade Lake 6248 CPUs (2.5GHz, 40 total cores) with memory of 192 GB.
			``OOT'' denotes instances that exceeded the time limit of 1800 seconds.
		}
		\label{tab:motif_counts_3}
		\begin{tabular}{cccc}
			\toprule
			\makecell{\textbf{Edge}                \\ \textbf{Prob.} $p$} & \makecell{\textbf{\package{}} \\ \textbf{Time (s)}} & \makecell{\textbf{\peregrine{}} \\ \textbf{Time (s)}} & \makecell{\textbf{\igraph{}} \\ \textbf{Time (s)}} \\
			\midrule
			0.0005 & 10.3478 & 0.0703   & 0.1915   \\
			0.0010 & 10.4452 & 0.0740   & 0.7259   \\
			0.0050 & 10.6196 & 0.0780   & 40.7373  \\
			0.0100 & 10.3616 & 0.1190   & 289.5082 \\
			0.0500 & 9.8357  & 1.1782   & OOT      \\
			0.2000 & 10.6010 & 17.1063  & OOT      \\
			0.4000 & 10.8445 & 67.6727  & OOT      \\
			0.8000 & 8.8623  & 158.7377 & OOT      \\
			\bottomrule
		\end{tabular}
	\end{table}

	\subsection{Application 3: Distance Covariance Measures}
	\label{sec:dependence}

	In the last example, we revisit the problem considered in Section~5.1 of \citet{shao2025u}, which concerns the problem of computing the squared distance covariance $\mathsf{dCov}^2$, a recently proposed dependence measure between two random vectors \citep{szekely2007measuring, yao2018testing}. In particular, let $O_i \coloneqq (X_i, Y_i), i \in [n]$ be the observed sample. $\mathsf{dCov}^2 (X, Y)$ can be written as a $4$-th order $U$-statistic: Denoting $\|\cdot\|_2$ as the vector $\ell_2$-norm, then
	\begin{align*}
		\mathsf{dCov}^2(X, Y) = \frac{(n - 4)!}{n!} \sum_{\bar{i}_{4} \in \perm{[n],4}} & \Bigg( h_1 (O_{i_1}, O_{i_2}) h_2 (O_{i_3}, O_{i_4}) + h_1 (O_{i_1}, O_{i_2}) h_2 (O_{i_1}, O_{i_2})    \\
		                                                                                & - h_1 (O_{i_1}, O_{i_2}) h_2 (O_{i_1}, O_{i_3}) - h_1 (O_{i_1}, O_{i_2}) h_2 (O_{i_2}, O_{i_4}) \Bigg),
	\end{align*}
	where $h_1 (O_i, O_j) \coloneqq \|X_i - X_j\|_2$ and $h_2 (O_i, O_j) \coloneqq \|Y_i - Y_j\|_2$ for $i, j \in [n]$.

	Quoting from Table~5 of \citet{shao2025u}, it takes ``8099.73 seconds'' to compute $\mathsf{dCov}^2$ among 11 random vectors, together with its confidence interval, when $n = 138$. Each vector corresponds to a different stock sector—such as Health Care or Industrials—and its components represent the monthly log-returns of individual stocks within that sector, using historical S\&P 500 data. \citet{shao2025u} resort to computing a randomized incomplete $U$-statistic instead. In the same Table~5, \citet{shao2025u} demonstrated that there is indeed a significant speedup when using randomized incomplete $U$-statistics. 

	In Table~\ref{tab:dcov}, we display the average runtime over 10 runs of exactly computing $\mathsf{dCov}^2$ using our package \package{}, or computing $\mathsf{dCov}^2$ exactly or approximately with the randomized incomplete $U$-statistic using the \texttt{MATLAB} code attached in the supplementary materials of \citet{shao2025u}. We rerun the code of \citet{shao2025u} together with \package{} on the same computing platform to ensure a fair comparison and we only focus on point estimation. In particular, the algorithm proposed in \citet{shao2025u} will randomly choose $O (n^\alpha)$ 
	permutations, with $\alpha > 0$ a tuning parameter determined by the available computation budget. By choosing $\alpha < 4$, the running time will be reduced compared to $O (n^{4})$. We vary the tuning parameter $\alpha$ within the set $\{1.5, 2, 2.5\}$. Our package \package{} again exhibits quite impressive runtime performance: \package{} without parallel computation takes slightly shorter time than the randomized incomplete $U$-statistic when $\alpha = 2$, whereas \package{} with parallel computation takes less than half of the time of the case when $\alpha = 1.5$!

	\begin{table}[htbp]
		\centering
		\caption{
			Runtime (in seconds) comparison of various methods for computing $\mathsf{dCov}^{2}$.
			Experiments were run on Intel Xeon ICX Platinum 8358 CPUs (2.6GHz, 64 total cores) with memory of 512 GB.
		}
		\label{tab:dcov}
		\begin{tabular}{cc|cccc}
			\toprule
			\multicolumn{2}{c|}{\package{}} & \multicolumn{4}{c}{\citet{shao2025u}'s \texttt{MATLAB} code}                                         \\
			\cmidrule(lr){1-2} \cmidrule(lr){3-6}
			\makecell{\textbf{No Parallel}} &
			\makecell{\textbf{Parallel}}    &
			\makecell{\textbf{Randomized}                                                                                                          \\ $\alpha = 1.5$} &
			\makecell{\textbf{Randomized}                                                                                                          \\ $\alpha = 2.0$} &
			\makecell{\textbf{Randomized}                                                                                                          \\ $\alpha = 2.5$} &
			\makecell{\textbf{Complete}}                                                                                                           \\
			\midrule
			4.0928                          & 0.1847                                                       & 0.4211 & 4.5744 & 53.0265 & 3395.4001 \\
			\bottomrule
		\end{tabular}
	\end{table}

	\section{Concluding Remarks}
	\label{sec:conclusions}

	In this paper, we have conducted a delicate and systematic study on exactly computing and the complexity of exactly computing higher-order $U$-statistics (and also $V$-statistics). Explicit connections are established between the complexity of computing higher-order $U$/$V$-statistics and concepts from tensor computation and graph theory. An accompanying Python package \package{} and its \href{https://github.com/cxy0714/U-Statistics-R}{R} interface are provided to help practitioners compute $U$/$V$-statistics more efficiently in applications. 
	It will also be interesting to test \package{} on other problems involving $U$-statistics \citep{shen2025engression}.

	One limitation of our work is that we prioritize time complexity over space complexity. As a result, when we need to store a very high-order tensor, the sample size $n$ cannot be too large; this is why we have only evaluated the performance of \package{} on computing HOIF estimators of order $m \leq 7$ when $n = 10000$. 
	Therefore, an important future direction is to explore how to balance time and space complexities when both $m$ and $n$ are even larger.



	\phantomsection\label{supplementary-material}
	\bigskip

	\putbib[Master.bib]
\end{bibunit}

\newpage

\begin{center}

	{\large\bf SUPPLEMENTARY MATERIAL}

\end{center}

\appendix

\begin{bibunit}[agsm]

	\section{More Examples on Einsum Notation and Einsum Operation}
	\label{app:examples}

	In this section, we provide two additional examples to illustrate the definitions of the \Einsum{} notation and the \Einsum{} operation, supplementing Example~\ref{eg:running}.

	\begin{example}
		\label{eg:matrix multiplication}
		We take matrix multiplication as an example to illustrate the \Einsum{} operation just defined. Let $T_1, T_2$ be two input matrices, which are also 2nd-order tensors. Therefore $\calT = (T_1, T_2)$. Suppose we are interested in multiplying $T_1$ and $T_2$, denoted as $T^{\rm out}$: for any $i, j \in [n]$, $T^{\rm out} (i, j) = \sum\limits_{k = 1}^n T_{1} (i, k) T_{2} (k, j)$. The matrix multiplication corresponds to the \Einsum{} operation with \Einsum{} notation $\calN = (\calA; B)$, with input tuple list $\calA = ((1,2),(2,3))$ and output tuple $B = (1,3)$ and $|B| = 2$. The index set is simply $I = [3]$. This means that the operation sums over the dimensions of $T_1$ and $T_2$ corresponding to index $2$ and reserves indices $1$ and $3$. $T^{\rm out} (\alpha)$ for any $\alpha = (i, j) \in  [n]^{2}$ can be written in the \Einsum{} operation form $\tensorcontraction(\calT, \calN)$ because by definition:
		\begin{align*}
			\tensorcontraction(\calT, \calN) & \equiv \sum_{\substack{\alpha'  \in [n]^3                                                                                                                                \\ \alpha' [1, 3] = \alpha}} \prod_{k = 1}^{2} T_k (\alpha' [A_k]) = \sum_{\substack{\alpha'
			\in [n]^3                                                                                                                                                                                                   \\ \alpha' [1, 3] = \alpha}} T_1 (\alpha' [A_1]) T_2 (\alpha' [A_2]) \\
			                                 & = \sum_{\substack{\alpha'  \in [n]^3                                                                                                                                     \\ \alpha' [1, 3] = \alpha}} T_1 (\alpha' [1, 2]) T_2 (\alpha' [2, 3]) = \sum_{\substack{\alpha'  \in [n]^3 \\ \alpha' [1, 3] = \alpha}} T_1 (\alpha' [1], \alpha' [2]) T_2 (\alpha' [2], \alpha'[3]) \\
			                                 & = \sum_{\alpha' [2] = 1}^{n} T_{1} (\alpha [1], \alpha' [2]) T_{2} (\alpha' [2], \alpha [3]) = \sum_{\alpha' [2] = 1}^{n} T_{1} (i, \alpha' [2]) T_{2} (\alpha' [2], j),
		\end{align*}
		so the claim holds by identifying $\alpha' [2] = k$.
	\end{example}

	\allowdisplaybreaks
	\begin{example}
		\label{eg:tensor}
		Here we consider an example involving higher-order tensors. Given three tensors $\calT = (T_1, T_2, T_3)$, each of size $n$, where $T_1$ and $T_2$ are 3rd-order tensors while $T_3$ is a matrix, suppose that we are interested in computing the following 3rd-order tensor: for any $(i, j, k) \in [n]^{3}$
		\begin{align*}
			T^{\rm out} (i, j, k) = \sum_{a = 1}^{n} \sum_{b = 1}^{n} \sum_{c = 1}^{n} T_{1} (i, a, b) T_{2} (a, c, j) T_{3} (b, k).
		\end{align*}
		$T^{\rm out} (i, j, k)$ is in fact a partial trace, a common operation taken in quantum mechanics. The corresponding \Einsum{} notation is $\calN = (\calA; B)$ with $\calA = (A_1 = (1, 2, 3), A_2 = (2, 4, 5), A_3 = (3, 6))$ with $B = (1, 5, 6)$ and the index set $I = [6]$. $T^{\rm out} (\alpha)$ for any $\alpha = (i, j, k) \in [n]^{3}$ can be written in the \Einsum{} operation form $\tensorcontraction(\calT, \calN)$ because by definition:
		\begin{align*}
			\tensorcontraction (\calT, \calN) & \equiv \sum_{\substack{\alpha' \in [n]^4                                                                                                                                                                      \\ \alpha' [1, 5, 6] = \alpha}} T_1 (\alpha' [A_1]) T_2 (\alpha' [A_2]) T_3 (\alpha' [A_3]) \\
			                                  & = \sum_{\substack{\alpha' \in [n]^4                                                                                                                                                                           \\ \alpha' [1, 5, 6] = \alpha}} T_1 (\alpha' [1, 2, 3]) T_2 (\alpha' [2, 4, 5]) T_3 (\alpha' [3, 6]) \\
			                                  & = \sum_{\alpha' [2] = 1}^{n} \sum_{\alpha' [3] = 1}^{n} \sum_{\alpha' [4] = 1}^{n} T_{1} (\alpha [1], \alpha' [2], \alpha' [3]) T_{2} (\alpha' [2], \alpha' [4], \alpha [5]) T_{3} (\alpha' [3], \alpha [6]).
		\end{align*}
		Thus the claim holds by identifying $\alpha' [2] = a$, $\alpha' [3] = b$, and $\alpha' [4] = c$.
	\end{example}

	\section{Complexity Analysis: $U$-statistics for high-dimensional testing}
	\label{app:examples_complexity}

	In \cite{he2021asymptotically,lai2023block}, higher-order $U$-statistics are employed to test diagonal and block-diagonal structures of high-dimensional covariance matrices. A detailed computational analysis, including an iterative algorithm and complexity bounds, is provided in \cite{he2021asymptotically}.

	Consider the sample $\bm{X} = (X_i)_{i=1}^n$ with $X_i \in \mathbb{R}^p$. The $U$-statistic used to construct the test (omitting the normalizing constant) takes the form
	\begin{align}
		\mathcal{U}(a) = \sum_{(j_1,j_2) \in \mathbf{I}} \sum_{\bar{i}_{2a} \in \perm{[n],2a}} \prod_{k=1}^{a} (X_{i_{2k-1},j_1} X_{i_{2k-1},j_2} - X_{i_{2k-1},j_1} X_{i_{2k},j_2}),
		\label{equ:he_example_original}
	\end{align}
	where $\mathbf{I} \subseteq [p]^2$ depends on the specific testing goal. For diagonal testing, \cite{he2021asymptotically} uses $\mathbf{I} = [p]^2 \setminus \bigcup_{j=1}^p \{(j,j)\}$.

	When the mean of each $X_i$ is known to be zero, the statistic simplifies to
	\begin{align*}
		\tilde{\mathcal{U}}(a) = \sum_{(j_1,j_2) \in \mathbf{I}} \sum_{\bar{i}_{a} \in \perm{[n],a}} \prod_{k=1}^{a} (X_{i_{k},j_1} X_{i_{k},j_2}).
	\end{align*}
	For this simplified form, \cite{he2021asymptotically} presents an iterative algorithm (which is a special case of our algorithm) and establishes a computational complexity of $O(p^2 n)$ independent of the order $a$.

	In terms of the original test statistic $\mathcal{U} (a)$ in \eqref{equ:he_example_original}, however, \citet{he2021asymptotically} only provides an algorithm and complexity analysis for $a \leq 2$. They remark that:
	\begin{quote}
		\emph{When $a \geq 3$, a similar iterative method can be applied, but a closed-form expression may be difficult to derive directly.}
	\end{quote}

	The difficulty in obtaining a closed-form expression stems precisely from the decomposition challenge addressed in Lemma~\ref{lem:decomp_u_to_v} of our paper, which expresses a general $U$-statistic as a linear combination of $V$-statistics. To circumvent this issue, \cite{he2021asymptotically} instead considers a modified test statistic $\mathcal{U}_c(a)$ resembling $\tilde{\mathcal{U}}(a)$ and shows that it exhibits asymptotic properties similar to $\mathcal{U}(a)$.

	Our algorithm \package{} can directly compute $\mathcal{U}(a)$, and our complexity analysis developed in Section~\ref{sec:alg_u} yields the same runtime complexity $O(p^2 n)$. To see this, we first expand the product using the binomial theorem:
	\begin{align*}
		\mathcal{U}(a)
		 & = \sum_{(j_1,j_2) \in \mathbf{I}} \sum_{\bar{i}_{2a} \in \perm{[n],2a}} \sum_{S \subseteq [a]} (-1)^{|S|}
		\prod_{k \notin S} (X_{i_{2k-1},j_1} X_{i_{2k-1},j_2})
		\prod_{k \in S} (X_{i_{2k-1},j_1} X_{i_{2k},j_2})                                                            \\
		 & = \sum_{(j_1,j_2) \in \mathbf{I}} \sum_{S \subseteq [a]} (-1)^{|S|} \, \ustat(h_{S,j_1,j_2}, \bm{X}),
	\end{align*}

	where
	\begin{align*}
		h_{S,j_1,j_2}(\bar{X}_{2a})
		= \prod_{k \notin S} (X_{i_{2k-1},j_1} X_{i_{2k-1},j_2})
		\prod_{k \in S} (X_{i_{2k-1},j_1})
		\prod_{k \in S} (X_{i_{2k},j_2}).
	\end{align*}

	Thus, $h_{S,j_1,j_2}$ admits a MD structure (Definition~\ref{def:mul_decomp}) with
	\begin{align*}
		\calH_{S,j_1,j_2} & = \bigl(
		\underbrace{h_1(X_i) = X_{i,j_1} X_{i,j_2}}_{\text{repeated } a - |S| \text{ times}},\
		\underbrace{h_2(X_i) = X_{i,j_1}}_{\text{repeated } |S| \text{ times}},\
		\underbrace{h_3(X_i) = X_{i,j_2}}_{\text{repeated } |S| \text{ times}}
		\bigr),                                                                                     \\
		\calA_S           & = \bigl( (2k-1)_{i \notin S}, (2k-1)_{i \in S} , (2k)_{i \in S} \bigr).
	\end{align*}

	The corresponding decomposition graph $\graphform{\calA_S}$ (Figure~\ref{fig:decomp_graph_AS}) consists solely of isolated vertices (i.e., it is edgeless). Consequently, all quotient graphs are also edgeless and have treewidth 0.

	By Corollary~\ref{cor:U}, each $\ustat(h_{S,j_1,j_2}, \bm{X})$ can be computed in $O(n)$ time for fixed $S$ and $(j_1,j_2)$. Since $|\mathbf{I}| = O(p^2)$ and there are $2^a$ possible subsets $S \subseteq [a]$, the total complexity of computing $\mathcal{U}(a)$ is $O(2^a p^2 n)$. In the regime where $a$ is small relative to $n$ and $p$ (the setting considered in \cite{he2021asymptotically}), this reduces to $O(p^2 n)$.

	\begin{figure}[h!]
		\centering
		\begin{tikzpicture}[scale=0.8]
			\node[circle, draw, minimum size=0.8cm] (v1) at (0,0) {};
			\node[circle, draw, minimum size=0.8cm] (v2) at (1.6,0) {};
			\node at (3.2, 0) {$\cdots$};
			\node[circle, draw, minimum size=0.8cm] (v3) at (4.8,0) {};

			\draw [decorate, decoration={brace, amplitude=6pt, mirror}]
			(-0.5, -0.6) -- (5.3, -0.6)
			node[midway, below=8pt] {\small $a + |S|$ vertices};
		\end{tikzpicture}
		\caption{The decomposition graph $\graphform{\calA_{S}}$, which consists of $a + |S|$ isolated vertices.}
		\label{fig:decomp_graph_AS}
	\end{figure}

	\section{Proof of Proposition~\ref{pro:complexity_einsum}}
	\label{app:complexity_einsum}

	Our proposed algorithm for computing a tensor $\calT$ is based on iterating the \Einsum{} operation w.r.t. the corresponding \Einsum{} notation $\calA$, summing over one index at a time. We refer readers to Definition~\ref{def:tensor_expression} in the main text for the meaning of the index set of an \Einsum{} notation.

	On a high level, the proof proceeds by establishing a correspondence between the \Einsum{} notation and the associated decomposition graph (Definition~\ref{def:graph_decomp}). With the latter, we can leverage concepts from graph theory and combinatorics to obtain a time complexity upper bound of computing $\calT$. In particular, we will demonstrate that the summation over each index in the \Einsum{} notation corresponds to eliminating one vertex of the decomposition graph following Definition~\ref{def:vertex_elimination}. This characterization can help demonstrate that the treewidth of (Definition~\ref{def:treewidth_elimination}) the decomposition graph gives an optimistic estimate of the time complexity of the \Einsum{} operation.

	To ease exposition, we decide to present the proof in a slow pace. The roadmap of the proof is as follows. We first construct a sequence of \Einsum{} notations, described in Definition~\ref{def:einsum_sequence} and Lemma~\ref{lem:einsum_graph_seq}, to mimic the computation procedure presented in Section~\ref{sec:alg_v}. Next, in Lemma~\ref{lem:complexity_one_index}, we obtain the computational complexity of summing over an index, which corresponds to the degree of the associated vertex in the decomposition graph. Finally, we complete the proof by showing that the optimal ordering of summation over indices is equivalent to identifying the best vertex elimination order of the decomposition graph. This is why the treewidth of the decomposition graph can be used to give an optimistic estimate of the time complexity.

	\begin{definition}[\Einsum{} Notation Sequence]
		\label{def:einsum_sequence}
		Let $\calA = (A_1, \cdots, A_K)$ be an \Einsum{} notation (recall from Definition~\ref{def:tensor_expression}) with no output. Let $I = [m]$ be the index set of $\calA$ and $\sigma$ be a permutation over $I$. The \Einsum{} notation sequence of $\calA$ under $\sigma$ is the sequence $\calA^{\sigma}_0, \calA^{\sigma}_1, \ldots, \calA^{\sigma}_m$ defined recursively as follows:
		\begin{itemize}
			\item $\calA^{\sigma}_0 = \calA$.
			\item For each $i \in I$, let $s_{\sigma[i]}$ be a permutation of the following set
			      \begin{equation*}
				      \bigcup_{\substack{ A \in \calA^{\sigma}_{i-1} \\ \sigma[i] \in A}} (\takesetp{A} \setminus \{\sigma[i]\}).
			      \end{equation*}.
			      Then $\calA^{\sigma}_i$ is obtained from $\calA^{\sigma}_{i-1}$ by:
			      \begin{itemize}
				      \item Removing tuples in $\calA_{i - 1}^{\sigma}$ containing $\sigma[i]$.
				      \item Inserting $s_{\sigma[i]}$ as the last tuple into the tuple list if $s_j \ne \emptyset$.
				      \item Denoting the new tuple list as $\calA^{\sigma}_{i}$.
			      \end{itemize}
		\end{itemize}
	\end{definition}

	We use the following concrete example to further illustrate the above definition.

	\begin{example}
		Let $\calA = ((1,2), (1,3), (1,4), (2,3), (2,4), (3,4))$ and $\sigma = (1,2,3,4)$. Then the \Einsum{} notation sequence of $(\calA; \emptyset)$ under $\sigma$ is comprised of:
		\begin{align*}
			\calA^{\sigma}_{0} & = ((1,2), (1,3), (1,4), (2,3), (2,4), (3,4)), \\
			\calA^{\sigma}_{1} & = ((2,3), (2,4), (3,4),(2,3,4)),              \\
			\calA^{\sigma}_{2} & = ((3,4), (3,4)),                             \\
			\calA^{\sigma}_{3} & = ((4)),                                      \\
			\calA^{\sigma}_{4} & = (()).
		\end{align*}
	\end{example}

	The following lemma establishes a one-to-one correspondence between the \Einsum{} notation sequence under a particular summation ordering $\sigma$ defined in Definition~\ref{def:einsum_sequence} and vertex elimination given in Definition~\ref{def:vertex_elimination}.

	\begin{lemma}
		\label{lem:einsum_graph_seq}
		Let $\calA$ be an \Einsum{} notation with index set $I = [m]$ and $\sigma$ be a permutation of $I$. Let the sequence $\calA^{\sigma}_0, \calA^{\sigma}_1, \ldots, \calA^{\sigma}_m$ be the \Einsum{} notation sequence w.r.t. $\calA$ under permutation $\sigma$ (see Definition~\ref{def:einsum_sequence}) and $\graphform{\calA^{\sigma}_0}, \graphform{\calA^{\sigma}_1}, \ldots, \graphform{\calA^{\sigma}_m}$ be the corresponding decomposition graphs. Then for each $i \in [m]$, it holds that
		\begin{align*}
			\graphform{\calA^{\sigma}_i} = \eliminatevg{\sigma[i]}{\graphform{\calA^{\sigma}_{i-1}}}.
		\end{align*}
	\end{lemma}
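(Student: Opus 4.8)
The plan is to reduce the lemma to a single-step statement and then induct on $i$. Concretely, I would first prove the following: if $\calB$ is any \Einsum{} notation with no output and $\calB'$ is obtained from $\calB$ by the elimination step of Definition~\ref{def:einsum_sequence} applied to a vertex $v \in \indexset(\calB)$ — that is, delete every tuple $A \in \calB$ with $v \in \takesetp{A}$, and, if the set $\bigcup_{A \in \calB:\, v \in \takesetp{A}}(\takesetp{A}\setminus\{v\})$ is non-empty, append one permutation $s_v$ of it as a new last tuple — then $\graphform{\calB'} = \eliminatevg{v}{\graphform{\calB}}$. Granting this, the lemma follows at once: by Definition~\ref{def:einsum_sequence}, $\calA^{\sigma}_{i}$ is obtained from $\calA^{\sigma}_{i-1}$ by exactly this step with $v = \sigma[i]$, and $\graphform{\calA^{\sigma}_{0}} = \graphform{\calA}$, so induction on $i$ yields $\graphform{\calA^{\sigma}_{i}} = \eliminatevg{\sigma[i]}{\graphform{\calA^{\sigma}_{i-1}}}$ for every $i \in [m]$. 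Along the way one records that $\indexset(\calA^{\sigma}_{i}) = I \setminus \{\sigma[1],\dots,\sigma[i]\}$, which guarantees $\sigma[i] \in \indexset(\calA^{\sigma}_{i-1})$ so that the step is well-defined.

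For the single-step statement I would verify the vertex set and the edge set separately, matching $\graphform{\calB'}$ and $\eliminatevg{v}{\graphform{\calB}}$ on the nose. Vertices: $v \notin \indexset(\calB')$, since every tuple containing $v$ is deleted and $s_v$ is a permutation of a set not containing $v$; conversely every $w \in \indexset(\calB)\setminus\{v\}$ survives, because if $w$ occurred only in tuples containing $v$, then $w \in \takesetp{A}\setminus\{v\}$ for such a tuple $A$, hence $w \in \takesetp{s_v}$. Thus $\indexset(\calB') = \indexset(\calB)\setminus\{v\} = V(\eliminatevg{v}{\graphform{\calB}})$.

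For edges, by Definition~\ref{def:graph_decomp} the edge set of $\graphform{\calB'}$ is the union of the edges contributed by the surviving original tuples (those not containing $v$) and the edges contributed by $s_v$. Edges from a surviving tuple are exactly the edges of $\graphform{\calB}$ with neither endpoint equal to $v$; edges from $s_v$ are exactly all pairs $\{w_1,w_2\}$ with $w_1, w_2 \in \takesetp{s_v}$, and $\takesetp{s_v} = \bigcup_{A:\, v \in \takesetp{A}}(\takesetp{A}\setminus\{v\}) = \neighborvg{v}{\graphform{\calB}}$, so these are precisely the clique edges that Definition~\ref{def:vertex_elimination} adds on $\neighborvg{v}{\graphform{\calB}}$. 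The one point needing care is the "incidental" edges: an edge $\{w_1,w_2\}$ of $\graphform{\calB}$ not incident to $v$ which arises only from some tuple $A$ with $v \in \takesetp{A}$. Such an edge is dropped when the tuples containing $v$ are deleted, but since $w_1, w_2 \in \takesetp{A}\setminus\{v\} \subseteq \neighborvg{v}{\graphform{\calB}}$ it is recreated as a clique edge of $s_v$; conversely every clique edge on $\neighborvg{v}{\graphform{\calB}}$ is either already an edge of $\graphform{\calB}$ not incident to $v$ or is one of these recreated edges. Combining the two contributions gives exactly $E(\eliminatevg{v}{\graphform{\calB}})$.

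The main obstacle I anticipate is not the induction but precisely this bookkeeping of incidental edges together with the degenerate cases: isolated vertices, singleton tuples $(v)$, repeated tuples (which Definition~\ref{def:graph_decomp} already collapses to a single edge), and the case $\takesetp{s_v} = \emptyset$, in which no new tuple is appended and $\eliminatevg{v}{\cdot}$ merely deletes the isolated vertex $v$. Each of these is routine once one observes that the decomposition graph only records which indices co-occur in some tuple, but they must be checked explicitly to be sure the claimed equality of graphs holds exactly as vertex-and-edge sets, not merely up to isomorphism.
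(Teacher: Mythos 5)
Your proposal is correct and follows essentially the same route as the paper's proof, which likewise observes that deleting the tuples containing $\sigma[i]$ removes that vertex and its incident edges (possibly dropping some edges between its neighbors) while appending $s_{\sigma[i]}$ restores them by forming the clique on $\neighborvg{\sigma[i]}{\graphform{\calA^{\sigma}_{i-1}}}$. Your treatment is in fact more careful than the paper's one-paragraph argument, particularly in the explicit verification of the vertex set and the bookkeeping of the ``incidental'' edges that are dropped and then re-added.
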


	\begin{proof}
		Recalling from Definition~\ref{def:vertex_elimination} about the vertex elimination operation over a graph and Definition~\ref{def:graph_decomp} about the decomposition graph associated with a tuple list $\calA$ and tet $s_{\sigma[i]}$ be a permutation of the following set
		\begin{equation*}
			\bigcup_{\substack{ A \in \calA^{\sigma}_{i-1} \\ \sigma[i] \in A}} (\takesetp{A} \setminus \{\sigma[i]\}),
		\end{equation*}
		we then recognize that removing all tuples containing $\sigma[i]$ from the tuple list $\calA^{\sigma}_{i - 1}$ corresponds to removing the vertex $\sigma[i]$ and all edges incident to $\sigma[i]$ in the decomposition graph $\graphform{\calA^{\sigma}_{i-1}}$ associated with $\calA^{\sigma}_{i-1}$ (some edges connected neighbors of $\sigma[i]$ may also be removed but will be re-added latter), while inserting the tuple $s_{\sigma[i]}$ into $\calA^{\sigma}_{i-1}$ corresponds connecting the neighbors of $\sigma[i]$ into a clique, which completes the proof.
	\end{proof}

	\begin{lemma}
		\label{lem:complexity_one_index}
		Let $\calN = (\calA; B)$ be an \Einsum{} notation. Let $I = \indexset(\calN)$ and $m \coloneqq |I|$. Suppose that there is at least one common index $i$ in each $A_k$ and $B$ is a permutation of $I \backslash \{i\}$. Then for any tensors $\calT$ that can be represented by $\calN$ with each tensor of size $n$, the time complexity of computing $\tensorcontraction(\calT, \calN)$ is $\appequal(|\calA|n^{d})$.
	\end{lemma}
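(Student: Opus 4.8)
The plan is to unfold Definition~\ref{def:tensor_contraction}, collapse the \Einsum{} sum to a single-index summation, count elementary operations directly, and then reconcile that count with the degree of vertex $i$ in the decomposition graph. Throughout write $d \coloneqq \degp{i}{\graphform{\calA}} + 1$. The first step is the observation that, under the stated hypotheses, $d = m$: since $i \in \takesetp{A_k}$ for every $k$ and $\indexset(\calN) = \bigcup_{k}\takesetp{A_k} = I$, every index $j \in I \setminus \{i\}$ co-occurs with $i$ in some tuple $A_k$, hence $\{i,j\} \in E_{\calA}$ by Definition~\ref{def:graph_decomp}. Thus $i$ is adjacent to all $m-1$ remaining vertices of $\graphform{\calA}$, so $\degp{i}{\graphform{\calA}} = m-1$ and $d = m$. (This also explains why the statement is phrased in terms of $d$ rather than $m$: in the downstream application of this lemma one first restricts to the tuples containing the index being summed, and then the size of the index set of the restricted notation is precisely $\degp{i}{\cdot}+1$.)

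Next I would carry out the reduction. Because $B$ is a permutation of $I \setminus \{i\}$, for each output tuple $\alpha \in [n]^{m-1}$ the constraint $\alpha'[B] = \alpha$ pins down every coordinate of $\alpha' \in [n]^m$ except the coordinate at position $i$, which ranges freely over $[n]$. Writing $c \in [n]$ for this free coordinate and $\alpha'$ for the tuple it determines together with $\alpha$, Definition~\ref{def:tensor_contraction} collapses to
\begin{equation*}
T^{\rm out}(\alpha) \;=\; \sum_{c=1}^{n} \prod_{k=1}^{K} T_k\big(\alpha'[A_k]\big), \qquad \alpha \in [n]^{m-1}.
\end{equation*}
The direct algorithm iterates over the $n^{m-1}$ choices of $\alpha$, and for each over the $n$ choices of $c$; for every such pair it performs $K$ tensor look-ups (each $\less(1)$ by Definition~\ref{def:tensors}), $K-1$ multiplications to form the product, and one addition into an accumulator, i.e.\ $\appequal(K)$ work. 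Multiplying through gives total cost $\appequal\big(K\, n^{m-1} \cdot n\big) = \appequal(|\calA|\, n^{m}) = \appequal(|\calA|\, n^{d})$, which already yields the $\less$-direction needed by Proposition~\ref{pro:complexity_einsum}.

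For the matching lower bound note that the output array alone has $n^{m-1} = n^{d-1}$ entries, each of which is an honest sum over the $n$ distinct summands indexed by $(\alpha, c)$; for tensors in general position these $n^{m} = n^{d}$ products of $|\calA|$ entries share no common arithmetic, so forming and accumulating each of them costs $\greater(|\calA|)$, giving $\greater(|\calA|\, n^{d})$ overall. I expect the genuinely delicate point to be exactly this last step: one must be careful about what "time complexity'' is being asserted here — namely tightness of the natural direct summation procedure (which is all that is required in the sequel), rather than an unconditional lower bound over arbitrary algorithms. The identification $d = m$ and the collapse of the \Einsum{} sum are immediate from the definitions, and the operation count in the middle paragraph is routine bookkeeping.
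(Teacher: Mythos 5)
Your proposal is correct and follows essentially the same route as the paper's proof: a direct operation count of the naive summation, using $|B| = d-1$ so the output has $n^{d-1}$ entries, each requiring $\appequal(|\calA| n)$ work, for a total of $\appequal(|\calA| n^{d})$. Your additional observations — that $d = m = \degp{i}{\graphform{\calA}}+1$ under the stated hypotheses, and that the lower bound is tightness of the direct summation procedure rather than an unconditional algorithmic lower bound — are correct clarifications of points the paper leaves implicit.
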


	\begin{proof}
		We immediately have $|B| = d - 1$, which implies that the result of $\tensorcontraction(\calT, \calN)$ has $n^{d-1}$ entries. For each entry, as one index is summed over, it takes $n-1$ summation operations and $(|\calA|-1)n$ multiplication operations. Therefore, the total time complexity of computing $\tensorcontraction(\calT, \calN)$ is $\big((n-1)n^{d-1} + (|\calA|-1)n^d\big)$, which is $\appequal(|\calA|n^{d})$.
	\end{proof}

	We need to introduce another notation before presenting the final proof: given two lists of tuples $\calA$ and $\calB$, we write $\calA \sqsubseteq \calB$ to mean that $\calA$ is a sublist of $\calB$.

	\begin{proof}[Proof of Proposition~\ref{pro:complexity_einsum}]
		Let $m \coloneqq |\verticesp{G_\calA}|$. Let $\sigma$ be a permutation of $[m]$, and $\graphform{\calA^{\sigma}_0}, \graphform{\calA^{\sigma}_1}, \ldots, \graphform{\calA^{\sigma}_m}$ be the corresponding decomposition graphs. We define the $\calA^{\sigma}_0, \calA^{\sigma}_1, \ldots, \calA^{\sigma}_m$ be the \Einsum{} notation sequence under $\sigma$ as in Definition~\ref{def:einsum_sequence}.

		We first show that there is an algorithm such that the time complexity of computing $\tensorcontraction(\calT, \calA)$ is $\less(n^{C_\sigma + 1}) $ where
		\begin{equation*}
			C_\sigma \coloneqq \max_{i \in [m]} \degp{\sigma[i]}{\graphform{\calA^{\sigma}_{i-1}}}.
		\end{equation*}

		To see this, we use \Einsum{} notation to characterize this procedure. First, for $i \in [m] \cup {0}$, we define the tensor tuple $\calT^\sigma_i$ corresponding to the \Einsum{} notation sequence $\calA^\sigma_i$ as follows: $\calT^{\sigma}_{0} \coloneqq \calT$ and for each $i \in [m]$:
		\begin{itemize}
			\item $\sigma[i]$ is the index to be eliminated at the $i$-th step;
			\item Let $\calA^{\sigma, \mathrm{in}}_i \sqsubseteq \calA^{\sigma}_{i-1}$ be the sublist of $\calA^{\sigma}_{i-1}$ that contains $\sigma[i]$, i.e.
			      \begin{align*}
				      A \in \calA^{\sigma, \mathrm{in}}_i \iff \sigma[i] \in A,
			      \end{align*}
			      and similarly let $\calT^{\sigma, \mathrm{in}}_i \sqsubseteq \calT^{\sigma}_{i-1}$ be the tensor tuple corresponding to $\calA^{\sigma, \mathrm{in}}_i $;

			\item Let $B^{\sigma, \mathrm{out}}_i$ be a permutation of the set $\bigcup_{\substack{A \in \calA^{\sigma}_{i-1} \\ \sigma[i] \in A}} (\takesetp{A} \setminus \{\sigma[i]\})$ as the output tuple and $T^{\sigma, \mathrm{out}}_i \coloneqq \tensorcontraction(\calT^{\sigma, \mathrm{in}}_i, (\calA^{\sigma, \mathrm{in}}_i, B^{\sigma, \mathrm{out}}_i))$;

			\item Update the tensor tuple $\calT^{\sigma}_i$ from $\calT^{\sigma}_{i-1}$ by
			      \begin{itemize}
				      \item removing all tensors in $\calT^{\sigma, \mathrm{in}}_i$ from $\calT^{\sigma}_{i-1}$, and then
				      \item inserting $T^{\sigma, \mathrm{out}}_i$ to the end of $\calT^{\sigma}_{i-1}$ to form $\calT^{\sigma}_{i}$.
			      \end{itemize}
		\end{itemize}

		According to the distributive property of multiplication over summation, it holds that for any $i,j \in [m]$,
		\begin{align*}
			\tensorcontraction(\calT^\sigma_i, \calA^\sigma_i) \equiv  \tensorcontraction(\calT^\sigma_j, \calA^\sigma_j) \equiv \tensorcontraction(\calT, \calA).
		\end{align*}

		Thus the time complexity of computing $\tensorcontraction(\calT, \calA)$ just comes from transferring from one step to the next step, which is $\tensorcontraction(\calT^{\sigma, \mathrm{in}}_i, (\calA^{\sigma, \mathrm{in}}_i, B^{\sigma, \mathrm{out}}_i))$. Following Lemma~\ref{lem:complexity_one_index}, we obtain that the time complexity of computing $\tensorcontraction(\calT^{\sigma, \mathrm{in}}_i, (\calA^{\sigma, \mathrm{in}}_i, B^{\sigma, \mathrm{out}}_i))$ is $\appequal(|\calA^{\sigma, \mathrm{in}}_i| n^{\degp{x}{\graphform{\calA^{\sigma}_{i-1}}}})$, which is also $\appequal(|\calA|n^{\degp{\sigma[i]}{\graphform{\calA^{\sigma}_{i-1}}}})$ as $|\calA^{\sigma, \mathrm{in}}_i| \leq |\calA|$. By taking maximum over $i$, the overall time complexity is thus $\less(|\calA|n^{C_\sigma + 1})$.

		Finally, following Lemma~\ref{lem:einsum_graph_seq}, we have
		\begin{align*}
			\graphform{\calA^{\sigma}_i} = \eliminatevg{\sigma[i]}{\graphform{\calA^{\sigma}_{i-1}}},
		\end{align*}
		and by finding the permutation $\sigma^{\ast}$ minimizing $ C_{\sigma}$ and recalling Definition~\ref{def:treewidth_elimination}, we can conclude that there exists an algorithm such that the time complexity of $\tensorcontraction(\calT, \calA)$ is $\less(|\calA|n^{\treewidthp{G_\calA}+1})$, which completes the proof.
	\end{proof}

	\section{Proof of Lemma~\ref{lem:decomp_u_to_v}}
	\label{app:decomp_u_stats}

	Here we prove an extended version of Lemma~\ref{lem:decomp_u_to_v}, as shown in Lemma~\ref{lem:ext_decomp_u_to_v} below. Lemma~\ref{lem:decomp_u_to_v} then becomes a simple corollary of Lemma~\ref{lem:ext_decomp_u_to_v}. We begin by introducing the following definitions.

	\begin{definition}[Refinement of Partitions]\label{def:refinement_of_partition}
		Let $\sPi$ and $\rho$ be two partitions of a non-empty set $S$. $\sPi$ is said to be finer than $\rho$, if for any $A \in \sPi$, there exists $B \in \rho$ such that $A \subseteq B$, denoted by $\sPi \pleq \rho$. Additionally, $\sPi$ is strictly finer than $\rho$ if $\sPi \neq \sPi$, denoted by $\rho \pless \sPi$.
	\end{definition}

	\begin{remark}
		Refinement defines a partial order on partitions of a set and in fact forms a \emph{partition lattice}, a classical result in lattice theory \citep{birkhoff1940lattice}.
	\end{remark}

	Similar to restricted $V$-statistics defined in Definition~\ref{def:res_v_stat}, we define U-set and restricted $U$-statistics as follows.

	\begin{definition}[U-Set]\label{def:u_set}
		Let $m,n$ be positive integers satisfying $m \leq n$ and $\sPi$ be a partition of the set $[m]$. A U-set of size $n$ w.r.t. the partition $\sPi$ is defined as
		\begin{align*}
			\usetnm{n}{\sPi} \coloneqq \left\{ \overline{s}_m \in [n]^m \mid \text{where for any $i, j \in [m]$}, i, j \in Q \text{ for some $Q \in \sPi$} \iff   s_i = s_j \right\}.
		\end{align*}
		If $\sPi$ happens to be the finest partition of $[m]$, i.e. $\sPi = \{\{1\}, \{2\}, \cdots, \{m\}\}$, the corresponding U-set is denoted by $\uset(n,m)$ and in fact, $\uset(n,m) \equiv \perm{n,m}$. The cardinality of the given partition $\sPi$ is referred to as the order of $\usetnm{n}{\sPi}$.
	\end{definition}

	\begin{definition}[Restricted $U$-Statistic]\label{def:res_u_stats}
		Let $\bbX$ be a non-empty set, $m, n$ be positive integers satisfying $m \leq n$, $\sPi$ be a partition of set $[m]$ and $h: \bbX^{m} \rightarrow \bbR$ be a function defined on $\bbX^m$. For any tuple $\bm{X} \in \bbX^n: n \geq m$, the $U$-statistic with kernel $h$ restricted by partition $\sPi$ takes the following form
		\begin{align*}
			\ustat[\pi](h, \bm{X}) \coloneqq \sum_{\alpha \in \uset(n,\pi)} h(\bm{X}[\alpha]).
		\end{align*}
	\end{definition}

	Armed with Definitions~\ref{def:refinement_of_partition} -- \ref{def:res_u_stats}, we are ready to present the following extended form of Lemma~\ref{lem:decomp_u_to_v}.

	\begin{lemma}[Extended Form of Lemma~\ref{lem:decomp_u_to_v}]
		\label{lem:ext_decomp_u_to_v}
		Let
		\begin{itemize}
			\item $\samplespace$ be a non-empty set,
			\item $m,n$ be integers satisfying $m \leq n$,
			\item $h$ be a function defined on $\samplespace^m$,
			\item $\bm{X} \in \samplespace^n$.
		\end{itemize}
		Then for any $\sPi \in \partitionp{m}$, it holds that
		\begin{align*}
			\ustat[\sPi](h,\bm{X})
			= \sum_{\substack{ \rho \in \partitionp{m} \\ \sPi \pleq \rho }}
			\mu(\sPi, \rho)  \vstat[\rho](h, \bm{X}),
		\end{align*}
		where the coefficients $\mu(\sPi, \rho)$ are given by
		\begin{align*}
			\mu(\sPi, \rho)
			= (-1)^{|\sPi|-|\rho|}
			\prod_{C \in \rho} (\kappa_{\sPi, C} - 1)!, \text{ and } \kappa_{\sPi, C} \coloneqq |\{ Q \in \sPi \mid Q \subseteq C \}|.
		\end{align*}
		In particular, if $\sPi$ happens to be $\{\{k\} \mid k \in [m]\}$, the above result reduces to Lemma~\ref{lem:decomp_u_to_v}.
	\end{lemma}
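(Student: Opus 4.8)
The plan is to recognize the claimed identity as Möbius inversion over the partition lattice $(\partitionp{m}, \pleq)$, as anticipated in the main text. The first step is a purely set-theoretic decomposition of V-sets into U-sets: for any $\rho \in \partitionp{m}$,
\begin{equation*}
\vsetp{n}{\rho} = \bigsqcup_{\substack{\rho' \in \partitionp{m} \\ \rho \pleq \rho'}} \usetnm{n}{\rho'},
\end{equation*}
a disjoint union. Indeed, any tuple $\alpha \in [n]^m$ induces a partition $\rho'(\alpha)$ of $[m]$ whose blocks are the level sets $\{k \mid \alpha_k = c\}$; by Definition~\ref{def:u_set}, $\alpha \in \usetnm{n}{\rho'(\alpha)}$ and, moreover, $\alpha$ lies in $\usetnm{n}{\rho'}$ for exactly one $\rho'$, namely $\rho' = \rho'(\alpha)$. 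By Definition~\ref{def:v_set}, $\alpha \in \vsetp{n}{\rho}$ precisely when every block of $\rho$ is contained in a level set, i.e. $\rho \pleq \rho'(\alpha)$. Summing the kernel over each side and invoking Definitions~\ref{def:res_v_stat} and~\ref{def:res_u_stats} then yields, for every $\rho \in \partitionp{m}$,
\begin{equation*}
\vstat[\rho](h, \bm{X}) = \sum_{\substack{\rho' \in \partitionp{m} \\ \rho \pleq \rho'}} \ustat[\rho'](h, \bm{X}).
\end{equation*}

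Next I would apply Möbius inversion on the finite poset $(\partitionp{m}, \pleq)$. Since $\rho \mapsto \vstat[\rho]$ is the sum of $\rho' \mapsto \ustat[\rho']$ over all $\rho' \pgeq \rho$, the inversion formula gives
\begin{equation*}
\ustat[\sPi](h, \bm{X}) = \sum_{\substack{\rho \in \partitionp{m} \\ \sPi \pleq \rho}} \mu_{\partition}(\sPi, \rho)\, \vstat[\rho](h, \bm{X}),
\end{equation*}
where $\mu_{\partition}$ denotes the Möbius function of the partition lattice. It then remains to compute $\mu_{\partition}(\sPi, \rho)$ and match it to the stated product. For this I would invoke the classical structure of the partition lattice: for $\sPi \pleq \rho$, the closed interval $[\sPi, \rho]$ is isomorphic, as a poset, to $\prod_{C \in \rho} \partitionp{\kappa_{\sPi, C}}$, the isomorphism sending a partition $\tau$ with $\sPi \pleq \tau \pleq \rho$ to the tuple recording, for each block $C$ of $\rho$, how the $\kappa_{\sPi,C}$ blocks of $\sPi$ inside $C$ are grouped by $\tau$. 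Multiplicativity of the Möbius function over poset products, together with the well-known value $\mu_{\partitionp{k}}(\hat 0, \hat 1) = (-1)^{k-1}(k-1)!$, gives
\begin{equation*}
\mu_{\partition}(\sPi, \rho) = \prod_{C \in \rho} (-1)^{\kappa_{\sPi, C} - 1} (\kappa_{\sPi, C} - 1)!.
\end{equation*}
Finally, a short bookkeeping step finishes the identification: since $\sum_{C \in \rho} \kappa_{\sPi, C} = |\sPi|$ and $\sum_{C \in \rho} 1 = |\rho|$, the combined sign collapses to $(-1)^{|\sPi| - |\rho|}$, recovering exactly the coefficient $\mu(\sPi, \rho)$ in the statement.

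For the concluding assertion, I would specialize to $\sPi = \{\{k\} \mid k \in [m]\}$, the finest partition. Then $\usetnm{n}{\sPi} = \perm{[n], m}$, so $\ustat[\sPi](h, \bm{X})$ is the ordinary $U$-statistic of Definition~\ref{def:u_stat}; moreover $\kappa_{\sPi, C} = |C|$ for every block $C$ and $|\sPi| = m$, so $\mu(\sPi, \rho)$ reduces to $(-1)^{m - |\rho|}\prod_{C \in \rho}(|C| - 1)!$, which is precisely the coefficient $\mu_\rho$ of Lemma~\ref{lem:decomp_u_to_v}. The step I expect to require the most care is the interval isomorphism $[\sPi, \rho] \cong \prod_{C \in \rho} \partitionp{\kappa_{\sPi, C}}$ and the blockwise factorization of the Möbius function it entails; this is standard in enumerative combinatorics but must be stated carefully enough that the reader sees why only the blocks of $\sPi$ lying inside a single block of $\rho$ can be merged. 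Everything else is either elementary manipulation of sets and sums or the final sign computation.
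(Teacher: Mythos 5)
Your proposal is correct and follows essentially the same route as the paper: first the disjoint decomposition of each V-set into U-sets over coarser partitions (the paper's Lemma~\ref{lem:decomp_v_to_u}), then dual \mobius{} inversion on the partition lattice with the classical \mobius{} function $\mu(\sPi,\rho)=(-1)^{|\sPi|-|\rho|}\prod_{C\in\rho}(\kappa_{\sPi,C}-1)!$. The only difference is cosmetic: you sketch the derivation of that \mobius{} function via the interval isomorphism $[\sPi,\rho]\cong\prod_{C\in\rho}\partitionp{\kappa_{\sPi,C}}$, whereas the paper simply cites the known formula from \citet{stanley2011enumerative}.
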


	The proof of Lemma~\ref{lem:ext_decomp_u_to_v} can be found in Appendix~\ref{app:proof_ext_decomp_u_to_v}.

	\subsection{Preparatory Materials}\label{app:preparatory_materials}

	Before presenting the proof in the next section, we introduce some useful preparatory materials, including: (1) the decomposition of a $V$-statistic into $U$-statistics and (2) a dual form of \mobius{} inversion formula \citep{stanley2011enumerative}. We first state the following technical result of decomposing a $V$-statistic into a linear combination of $U$-statistics.
	\begin{lemma}[Decomposition of $V$-Statistic into $U$-Statistics]
		\label{lem:decomp_v_to_u}
		Let
		\begin{itemize}
			\item $\samplespace$ be a non-empty set,
			\item $m,n$ be positive integers satisfying $m \leq n$,
			\item $h$ be a function defined on $\bbX^m$,
			\item $\bm{X} \in \samplespace^n$.
		\end{itemize}
		Then for any $\sPi \in \partitionp{m}$, it holds that
		\begin{align*}
			\vstat[\sPi](h, \bm{X}) = \sum_{\substack{
			\rho \in \partitionp{m} \\ \sPi \pleq \rho
				}} \ustat[\rho](h,\bm{X}),
		\end{align*}
		where
		\begin{align*}
			\ustat[\rho](h,\bm{X}) = \sum_{\alpha \in \usetnm{n}{\rho}} h(\bm{X}[\alpha]).
		\end{align*}
	\end{lemma}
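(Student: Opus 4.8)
The plan is to prove the identity by a pure set-partition bookkeeping argument: I will show that the index set over which the restricted $V$-statistic on the left sums decomposes as a \emph{disjoint} union of the index sets over which the restricted $U$-statistics on the right sum, after which summing $h(\bm{X}[\alpha])$ termwise gives the identity with no combinatorial weights and no cancellation.

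First, I would attach to every tuple $\alpha = \bar{s}_m \in [n]^m$ its \emph{equality partition} $\rho_\alpha \in \partitionp{m}$, defined by $i \sim j \iff s_i = s_j$; this is manifestly an equivalence relation, so $\rho_\alpha$ is well defined. Comparing with Definition~\ref{def:u_set}, one sees that $\alpha \in \usetnm{n}{\rho}$ precisely when $\rho_\alpha = \rho$, so the family $\{\usetnm{n}{\rho} : \rho \in \partitionp{m}\}$ is a genuine partition of $[n]^m$ (each block is nonempty since $|\rho| \le m \le n$ leaves enough distinct values in $[n]$). Next I would identify which of these blocks lie inside $\vsetp{n}{\sPi}$: by Definition~\ref{def:v_set}, $\alpha \in \vsetp{n}{\sPi}$ iff every block $Q$ of $\sPi$ is constant under $\alpha$, i.e. iff every block of $\sPi$ is contained in some block of $\rho_\alpha$, i.e. iff $\sPi \pleq \rho_\alpha$ in the refinement order of Definition~\ref{def:refinement_of_partition}. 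This yields
\[
\vsetp{n}{\sPi} \;=\; \bigsqcup_{\substack{\rho \in \partitionp{m} \\ \sPi \pleq \rho}} \usetnm{n}{\rho},
\]
and summing $h(\bm{X}[\alpha])$ over $\alpha \in \vsetp{n}{\sPi}$ while splitting the sum along this disjoint decomposition gives
\[
\vstat[\sPi](h,\bm{X}) \;=\; \sum_{\substack{\rho \in \partitionp{m} \\ \sPi \pleq \rho}} \sum_{\alpha \in \usetnm{n}{\rho}} h(\bm{X}[\alpha]) \;=\; \sum_{\substack{\rho \in \partitionp{m} \\ \sPi \pleq \rho}} \ustat[\rho](h,\bm{X}),
\]
which is the claimed identity.

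I do not expect a serious obstacle here; the whole argument is definitional. The one point that requires care is keeping the two ``directions'' straight: membership in the $U$-set $\usetnm{n}{\rho}$ is governed by the \emph{biconditional} ``$i,j$ in the same block $\iff s_i = s_j$'', whereas membership in the $V$-set $\vsetp{n}{\sPi}$ is governed by only the forward implication, and this asymmetry is exactly what turns the equality $\rho_\alpha = \rho$ into the inequality $\sPi \pleq \rho_\alpha$. Getting the orientation of $\pleq$ right — the \emph{forced} partition $\sPi$ is the finer one, the \emph{realized} partition $\rho_\alpha$ the coarser one — is the only place an error could creep in.
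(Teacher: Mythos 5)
Your proposal is correct and takes essentially the same route as the paper: both prove the set identity $\vsetp{n}{\sPi} = \bigsqcup_{\rho \pgeq \sPi} \usetnm{n}{\rho}$ and then sum $h(\bm{X}[\alpha])$ over the disjoint pieces, with your ``equality partition'' $\rho_\alpha$ being exactly the partition the paper constructs from the level sets of $\alpha$ in its coverage step (and its uniqueness giving disjointness). No gaps.
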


	\begin{proof}
		It is equivalent to prove that $\{\usetnm{n}{\rho} \mid \rho \in \partitionp{m}: \sPi \pleq \rho\}$ forms a partition (i.e. disjoint union) of $\vsetp{n}{\sPi}$, i.e.
		\begin{align*}
			\vsetp{n}{\sPi} = \bigsqcup_{\substack{\rho \in \partitionp{m} \\ \sPi \pleq \rho} }\usetnm{n}{\rho}.
		\end{align*}

		{\bf Part 1: Disjointness}

		We first prove that for any $\pi_1, \pi_2 \in \{\rho \in \partitionp{m} \mid \sPi \pleq \rho \}: \pi_1 \neq \pi_2$, it holds that
		\begin{align*}
			\usetnm{n}{\pi_1} \cap \usetnm{n}{\pi_2} = \emptyset.
		\end{align*}

		Since $\pi_1 \neq \pi_2$, there must exist $i_1, i_2 \in [m]$ such that either
		\begin{itemize}
			\item $\exists \, Q_1 \in \pi_1$, such that $i_1, i_2 \in Q_1$, and
			\item $\forall \, Q_2 \in \pi_2$, $i_1, i_2 \notin Q_2$,
		\end{itemize}
		or
		\begin{itemize}
			\item $\forall \, Q_1 \in \pi_1$, $i_1, i_2 \notin Q_1$,
			\item $\exists \, Q_2 \in \pi_2$, such that $i_1, i_2 \in Q_2$.
		\end{itemize}
		Without loss of generality, we assume that the former holds. Following Definition~\ref{def:u_set}, we have for any $\alpha_1 \in \usetnm{n}{\pi_1}$, $\alpha_1[i_1] = \alpha_1[i_2]$, but for any $\alpha_2 \in \usetnm{n}{\pi_2}$, $\alpha_1[i_1] \neq \alpha_1[i_2]$, which implies that $\usetnm{n}{\pi_1} \cap \usetnm{n}{\pi_2} = \emptyset$.

			{\bf Part 2: Coverage}

		We are left to prove that for any $\alpha \in \vsetp{n}{\sPi}$, there exists $\rho \in \partitionp{m}: \sPi \pleq \rho$, such that $\alpha \in \usetnm{n}{\rho}$. We first construct a candidate $\rho$ as follows. For each $j \in \takesetp{\alpha}$, let
		\begin{align*}
			Q_j \coloneqq \{i \in [m] \mid \alpha[i] = j\},
		\end{align*}
		and then let $\rho \coloneqq \{Q_j \mid j \in \takesetp{\alpha}\}$. It is straightforward to verify that $\rho$ is a partition of $[m]$.

		Then we prove that $\rho$ is the partition satisfying the desiderata. Following Definition~\ref{def:u_set}, we have $\alpha \in \usetnm{n}{\rho}$. Following Definition~\ref{def:v_set}, we have for any $Q \in \sPi$ and any $i_1, i_2 \in Q$, $\alpha[i_1] = \alpha[i_2]$, which implies that for any $Q \in \pi$, $Q \subseteq B_{j_Q} \in \rho$, for some $j_Q$ such that
		\begin{align*}
			\alpha[j_Q] = \alpha[i], \quad \forall \, i \in Q,
		\end{align*}
		We then obtain that $\sPi \pleq \rho$ following Definition~\ref{def:refinement_of_partition}, which complete the proof of coverage.

		Combining Part 1 and Part 2, we complete the proof of this lemma.
	\end{proof}

	Next, we introduce the notion of \mobius{} inversion formula, which is used to reverse the decomposition from a $V$-statistic into $U$-statistics in Lemma~\ref{lem:decomp_v_to_u} to obtain the decomposition from a $U$-statistic into $V$-statistics.

	\begin{definition}[\mobius{} Function, \cite{stanley2011enumerative}]\label{def:mobius_func}
		Let $(P, \preceq)$ be a finite poset and, a function $\mu: P^2 \to \reals$ is said to be a \mobius{} function on $(P, \preceq)$, if
		\begin{align*}
			\mu(s,s) = & \ 1, \quad \forall \ s \in P, \text{ and} \\
			\mu(s,u) = & - \sum_{\substack{t \in P                 \\ s \preceq t \prec u}} \mu(s,t), \quad \forall \ s, u \in P: s \prec u.
		\end{align*}
	\end{definition}

	\begin{lemma}[\mobius{} Function on Partition Lattices, \cite{stanley2011enumerative}]\label{lem:mobius_func_partition}
		Let $S$ be a non-empty set. Then the \mobius{} function on $(\partitionp{S}, \pleq)$ takes the following form
		\begin{align*}
			\mu(\sPi, \rho) = (-1)^{|\sPi|-|\rho|} \prod_{C \in \rho} (\kappa_{\sPi, C} - 1)!, \quad \forall \, \sPi,\rho \in \partitionp{S}: \sPi \pleq \rho \text{ and } \kappa_{\sPi, C} = |\{ A \in \sPi \mid A \subseteq C \}|.
		\end{align*}
	\end{lemma}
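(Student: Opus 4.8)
The plan is to prove Lemma~\ref{lem:mobius_func_partition} in three stages: first reduce a general interval $[\sPi, \rho]$ of the partition lattice to a direct product of \emph{full} partition lattices; then invoke multiplicativity of the \mobius{} function under products of posets; and finally evaluate the \mobius{} number of a single full partition lattice. Throughout, one must keep in mind that the order convention fixed by Definition~\ref{def:refinement_of_partition} makes the finest (all-singletons) partition the bottom element $\hat 0$ and the one-block partition the top element $\hat 1$.

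\emph{Step 1 (interval factorization).} Fix $\sPi \pleq \rho$ in $(\partitionp{S}, \pleq)$ and, for each block $C \in \rho$, set $B_C \coloneqq \{Q \in \sPi \mid Q \subseteq C\}$, so that $|B_C| = \kappa_{\sPi, C}$ and $\bigsqcup_{C \in \rho} B_C = \sPi$, since every block of $\sPi$ lies in exactly one block of $\rho$. I claim that the map sending $\tau \in [\sPi, \rho]$ to the tuple $(\tau|_{B_C})_{C \in \rho}$, where $\tau|_{B_C}$ is the partition of the set $B_C$ recording how $\tau$ merges the blocks of $\sPi$ that sit inside $C$, is an order isomorphism $[\sPi, \rho] \xrightarrow{\ \sim\ } \prod_{C \in \rho} \partitionp{B_C}$. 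The verification is routine: any $\tau$ with $\sPi \pleq \tau \pleq \rho$ has each of its blocks equal to a union of $\sPi$-blocks (as $\sPi$ is finer) all contained in a common $\rho$-block (as $\tau$ is finer than $\rho$); the inverse map reassembles a tuple of partitions into one partition of $S$ in the obvious way; and both directions are monotone for $\pleq$.

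\emph{Step 2 (multiplicativity) and Step 3 (the base value).} Recall the standard fact --- immediate from the uniqueness of the function satisfying the recurrence in Definition~\ref{def:mobius_func} --- that for finite posets $P, Q$ one has $\mu_{P \times Q}\big((p, q), (p', q')\big) = \mu_P(p, p')\, \mu_Q(q, q')$. Combined with Step 1 this gives $\mu(\sPi, \rho) = \prod_{C \in \rho} \mu_{\partitionp{B_C}}(\hat 0, \hat 1)$, so it remains to compute $\mu_k \coloneqq \mu_{\partitionp{[k]}}(\hat 0, \hat 1)$ (with $\mu_1 = 1$). Applying Steps 1--2 to an interval $[\hat 0, \pi]$ inside $\partitionp{[k]}$ yields $\mu(\hat 0, \pi) = \prod_{B \in \pi} \mu_{|B|}$; plugging this into the defining relation $\sum_{\pi \in \partitionp{[k]}} \mu(\hat 0, \pi) = 0$ (valid for $k \ge 2$) gives $\sum_{\pi \in \partitionp{[k]}} \prod_{B \in \pi} \mu_{|B|} = 0$ for $k \ge 2$, while the same sum equals $1$ for $k \in \{0, 1\}$. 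By the exponential formula this says $\exp\!\big(\sum_{k \ge 1} \mu_k x^k / k!\big) = 1 + x$, hence $\sum_{k \ge 1} \mu_k x^k / k! = \log(1 + x)$ and therefore $\mu_k = (-1)^{k-1}(k-1)!$; alternatively one runs a direct induction on $k$ using the same relation (or quotes the characteristic polynomial of $\partitionp{[k]}$).

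\emph{Assembly and the main obstacle.} Combining everything, $\mu(\sPi, \rho) = \prod_{C \in \rho} (-1)^{\kappa_{\sPi, C} - 1}(\kappa_{\sPi, C} - 1)!$, and since $\sum_{C \in \rho} \kappa_{\sPi, C} = |\sPi|$ the accumulated sign is $(-1)^{\sum_{C \in \rho}(\kappa_{\sPi, C} - 1)} = (-1)^{|\sPi| - |\rho|}$, which is exactly the claimed formula. The only genuinely non-bookkeeping ingredient is Step 3, the value $\mu_k = (-1)^{k-1}(k-1)!$ for the full partition lattice; if one is content to quote this directly from \citet{stanley2011enumerative}, the lemma follows at once from Steps 1--2. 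So the ``hard part'' is really just selecting a clean self-contained derivation of that single number --- exponential formula, induction, or characteristic polynomial --- while being scrupulous about the refinement convention (finest $=\hat 0$) so that the signs come out correctly.
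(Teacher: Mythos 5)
Your proposal is correct, but note that the paper does not prove Lemma~\ref{lem:mobius_func_partition} at all: it is imported verbatim as a classical fact with a citation to \citet{stanley2011enumerative}, and the appendix only uses it (together with the dual \mobius{} inversion formula and Lemma~\ref{lem:decomp_v_to_u}) to derive Lemma~\ref{lem:ext_decomp_u_to_v}. What you supply is therefore a self-contained derivation of the quoted result, and it is the standard one: the interval factorization $[\sPi,\rho]\cong\prod_{C\in\rho}\partitionp{B_C}$, multiplicativity of the \mobius{} function over products of posets, and the evaluation $\mu_{\partitionp{[k]}}(\hat 0,\hat 1)=(-1)^{k-1}(k-1)!$ via the exponential formula (or induction on the defining recurrence). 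All three steps are sound, you handle the order convention correctly (the paper's $\pleq$ makes the all-singletons partition the bottom element, consistent with your $\hat 0$), and the final sign bookkeeping $\sum_{C\in\rho}(\kappa_{\sPi,C}-1)=|\sPi|-|\rho|$ is right. The only thing your write-up buys beyond the paper is independence from the external reference; conversely, if one is willing to cite Stanley for $\mu_k=(-1)^{k-1}(k-1)!$ anyway (as you note at the end), one may as well cite the full formula, which is exactly what the authors chose to do.
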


	\begin{lemma}[\mobius{} Inversion Formula, Dual Form,  \cite{stanley2011enumerative}]\label{lem:mobius_formula_dual}
		Let $(P, \preceq)$ be a finite poset and $f, g$ be functions defined on $P$. Then
		\begin{align*}
			g(s) = \sum_{\substack{t \in P \\ s \preceq t}}f(t), \quad \forall \, s \in P,
		\end{align*}
		if and only if
		\begin{align*}
			f(s) = \sum_{\substack{t \in P \\ s \preceq t}} \mu(s,t)g(t), \quad \forall \, s \in P,
		\end{align*}
		where $\mu$ is the \mobius{} function on $(P, \preceq)$.
	\end{lemma}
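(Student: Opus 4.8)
The plan is to read Lemma~\ref{lem:mobius_formula_dual} as a single matrix-inversion statement inside the \emph{incidence algebra} of $(P,\preceq)$, so that the two displayed identities become the two halves of ``$g=\zeta f \iff f=\zeta^{-1}g$''. Concretely, write $\zeta(s,t)\coloneqq 1$ if $s\preceq t$ and $0$ otherwise, let $\delta(s,t)\coloneqq 1$ if $s=t$ and $0$ otherwise, and for functions $\phi,\psi$ supported on comparable pairs define the convolution $(\phi*\psi)(s,u)\coloneqq\sum_{s\preceq t\preceq u}\phi(s,t)\psi(t,u)$, which has unit $\delta$. The first observation is that the recursion defining $\mu$ in Definition~\ref{def:mobius_func} is \emph{exactly} the assertion $\mu*\zeta=\delta$: indeed $(\mu*\zeta)(s,u)=\sum_{s\preceq t\preceq u}\mu(s,t)$, and requiring this to equal $\delta(s,u)$ says $\mu(s,s)=1$ and, for $s\prec u$, $\mu(s,u)=-\sum_{s\preceq t\prec u}\mu(s,t)$.

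Next I would upgrade the one-sided identity to a two-sided one. Since $P$ is finite, choosing a linear extension of $\preceq$ presents $\zeta$ as an upper-triangular matrix with $1$'s on the diagonal; hence $\zeta$ is invertible in the incidence algebra, with $\zeta*\zeta^{-1}=\zeta^{-1}*\zeta=\delta$. The elementary ring fact that a left inverse coincides with a two-sided inverse — if $c*\zeta=\delta$ and $\zeta*b=b*\zeta=\delta$ then $c=c*(\zeta*b)=(c*\zeta)*b=b$ — applied to $c=\mu$ gives $\mu=\zeta^{-1}$, so in particular $\zeta*\mu=\delta$ as well. Spelled out, we now have both orthogonality relations
\begin{equation*}
\sum_{s\preceq t\preceq u}\mu(s,t)=\delta(s,u)\qquad\text{and}\qquad\sum_{s\preceq t\preceq u}\mu(t,u)=\delta(s,u),\qquad\forall\,s\preceq u.
\end{equation*}
(One can instead verify the second relation directly by establishing the companion recursion $\mu(s,u)=-\sum_{s\prec t\preceq u}\mu(t,u)$ for $s\prec u$, but the incidence-algebra route is shorter.)

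With both relations available, the biconditional is two short interchanges of finite sums. If $g(s)=\sum_{t\succeq s}f(t)$ for all $s$, then for every $s$
\begin{equation*}
\sum_{u\succeq s}\mu(s,u)g(u)=\sum_{u\succeq s}\mu(s,u)\sum_{t\succeq u}f(t)=\sum_{t\succeq s}f(t)\Bigl(\sum_{s\preceq u\preceq t}\mu(s,u)\Bigr)=\sum_{t\succeq s}f(t)\,\delta(s,t)=f(s),
\end{equation*}
using the first orthogonality relation. Conversely, if $f(s)=\sum_{u\succeq s}\mu(s,u)g(u)$ for all $s$, then
\begin{equation*}
\sum_{t\succeq s}f(t)=\sum_{t\succeq s}\sum_{u\succeq t}\mu(t,u)g(u)=\sum_{u\succeq s}g(u)\Bigl(\sum_{s\preceq t\preceq u}\mu(t,u)\Bigr)=\sum_{u\succeq s}g(u)\,\delta(s,u)=g(s),
\end{equation*}
using the second. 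Finally, specializing $P=\partitionp{S}$ ordered by refinement and inserting the closed form for $\mu$ from Lemma~\ref{lem:mobius_func_partition} yields the concrete inversion formula invoked elsewhere in the appendix.

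The only step that is not pure bookkeeping is passing from the recursive definition of $\mu$ (a one-sided identity) to the second orthogonality relation; I expect this to be the main obstacle, and it is precisely here that finiteness of $P$ is used — via invertibility of the unitriangular matrix $\zeta$ and the left-inverse-equals-inverse argument. Everything else — the convolution formalism and the two sum-swaps — is routine.
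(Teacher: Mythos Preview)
Your proof is correct and is essentially the standard incidence-algebra argument (e.g., Stanley, \emph{Enumerative Combinatorics}, Vol.~1, Proposition~3.7.1): $\mu$ is defined so that $\mu*\zeta=\delta$, finiteness of $P$ makes $\zeta$ unitriangular and hence two-sidedly invertible, whence $\zeta*\mu=\delta$ as well, and the two orthogonality relations yield the two directions by swapping finite sums. Note, however, that the paper does not supply its own proof of this lemma at all --- it is simply quoted as a known result from \cite{stanley2011enumerative} and then invoked in the proof of Lemma~\ref{lem:ext_decomp_u_to_v}. So there is no paper-proof to compare against; you have filled in a gap the authors deliberately left to the literature, and done so correctly.
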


	\subsection{Proof of Lemma~\ref{lem:ext_decomp_u_to_v}}
	\label{app:proof_ext_decomp_u_to_v}

	In this section, we prove Lemma~\ref{lem:ext_decomp_u_to_v}. The argument is straightforward given the preparations in the previous section.

	\begin{proof}[Proof of Lemma~\ref{lem:ext_decomp_u_to_v}]
		If we fix $h$ and $\bm{X}$, $f(\sPi) \coloneqq \ustat[\sPi](h,\bm{X})$ and $g(\rho) \coloneqq \vstat[\rho](h,\bm{X})$ are functions defined on $\partitionp{m}$.

		Since $\partitionp{m}$ is finite and the refinement defined in Definition~\ref{def:refinement_of_partition} defines a partial order on $\partitionp{m}$, we can apply Lemma~\ref{lem:mobius_formula_dual} to functions $f$ and $g$.

		Combining the \mobius{} function on $\partitionp{m}$ as stated in Lemma~\ref{lem:mobius_func_partition}, the dual form of \mobius{} inversion formula given in Lemma~\ref{lem:mobius_formula_dual} and the relationship between $f$ and $g$ as stated in Lemma~\ref{lem:decomp_v_to_u}, we immediately complete the proof.
	\end{proof}

	\section{An Open Problem}
	\label{app:open}

	In this section, we present an open problem that we ourselves find intriguing. As we indicated in Remark~\ref{rem:tw} at the end of Section~\ref{sec:alg_u_v}, the treewidth of the decomposition graph associated with the \Einsum{} notation of the underlying $V$-statistic relates to its time complexity. But the treewidth of a graph is not easy to compute either analytically \citep{harvey2018treewidth} or numerically \citep{arnborg1987complexity}. It will be interesting to get a sharp bound on treewidth using the number of vertices or edges of the graph. As mentioned in Remark~\ref{rem:tw}, \citet{kneis2009bound} has shown that the treewidth is bounded at most linearly by the edge size, or more precisely bounded by $e / 5.769 + O (\log n)$ where $n \coloneqq |V|$ denotes the number of vertices in $G$. However, we do not know whether this bound can be further improved. Below we provide the maximum tree-width that is possible for a graph with edge size no greater than $15$, to hopefully serve as a starting point of fully addressing this problem.
	\begin{observation}[Treewidth and Edge Count]
		\label{pro:treewidth_table}
		Let
		\begin{equation*}
			\mathrm{t}(e) \coloneqq \max \left\{ \treewidthp{G} \;\middle|\; G \text{ is a graph where } |E(G)| = e \right\}.
		\end{equation*}
		We have computed the exact values of $\mathrm{t} (e)$ for all $1 \le e \le 15$ as follows:
		\begin{equation*}
			\begin{split}
				\mathrm{t} (e) = \left\{ \begin{array}{ll}
					                         1, & e \in \{1, 2\},               \\
					                         2, & e \in \{3, 4, 5\},            \\
					                         3, & e \in \{6, 7, 8, 9\},         \\
					                         4, & e \in \{10, 11, 12, 13, 14\}, \\
					                         5, & e = 15.
				                         \end{array} \right.
			\end{split}
		\end{equation*}
	\end{observation}
	The proof can be found in Appendix~\ref{app:treewidtg_table}. However, extending the above result to $e > 15$ is currently beyond our reach.

	\section{Proof of Observation~\ref{pro:treewidth_table}}
	\label{app:treewidtg_table}

	We begin by recalling some well-known bounds on treewidth. We then restrict our attention to simple graphs without isolated vertices and derive recurrence relations in several basic cases. The final proof proceeds by carefully verifying the conditions of Lemma~\ref{lem:property_of_tt} to be stated below.

	\begin{definition}[Degeneracy]
		\label{def:degeracy}
		The \emph{degeneracy} of a graph $G$, denoted by $\degeneracyg{G}$, is defined as the maximum value of the minimum degree $\delta(H)$ over all subgraphs $H$ of $G$.
	\end{definition}

	We first present some results on inequalities involving treewidth. The first two inequalities are from \citet{bodlaender2011treewidth}, while the last one follows directly from Definition~\ref{def:treewidth_elimination}.

	\begin{lemma}
		\label{lem:simple_bounds_on_treewidth}
		Let $G = (V, E)$ be a graph. Then
		\begin{enumerate}
			\item $ \degeneracyg{G} \leq \treewidthp{G},$

			\item if $H$ is a subgraph of $G$, then $\treewidthp{H} \leq \treewidthp{G}$,

			\item for any vertex $v \in V$, it holds that $\treewidthp{G} \leq \max \{\degp{v}{G}, \treewidthp{\eliminatevg{v}{G}} \}$.
		\end{enumerate}
	\end{lemma}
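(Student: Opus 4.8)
The plan is to establish the three items in the order (3), (2), (1): item (3) is essentially immediate from the elimination-order definition of treewidth (Definition~\ref{def:treewidth_elimination}); item (2) carries the bulk of the work; and item (1) follows quickly once (2) is in hand. For item (3), fix $v \in V$ and use the fact that prepending $v$ to any elimination order of $\eliminatevg{v}{G}$ yields an elimination order of $G$ whose first step is exactly the elimination of $v$. Concretely, given $\tau \in \perm{V \setminus \{v\}}$, define $\sigma \in \perm{V}$ by $\sigma[1] = v$ and $\sigma[i+1] = \tau[i]$; the recursion in Definition~\ref{def:treewidth_elimination} then gives $G_0^\sigma = G$, $G_1^\sigma = \eliminatevg{v}{G}$, and inductively $G_{i+1}^\sigma = (\eliminatevg{v}{G})_i^\tau$, so that $\max_{i} \degp{\sigma[i]}{G_{i-1}^\sigma} = \max\bigl\{\degp{v}{G},\, \max_{i} \degp{\tau[i]}{(\eliminatevg{v}{G})_{i-1}^\tau}\bigr\}$. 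Minimizing the right-hand side over $\tau$ produces $\max\{\degp{v}{G}, \treewidthp{\eliminatevg{v}{G}}\}$, and since $\treewidthp{G}$ is a minimum over \emph{all} permutations of $V$, in particular over those beginning with $v$, the bound in (3) follows.

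\emph{Item (2).} Let $H$ be a subgraph of $G$, let $\sigma \in \perm{V(G)}$ be an optimal elimination order with $\max_j \degp{\sigma[j]}{G_{j-1}^\sigma} = \treewidthp{G}$, and let $\tau \in \perm{V(H)}$ be the subsequence of $\sigma$ obtained by deleting the vertices of $V(G)\setminus V(H)$. I would run the two elimination processes in lockstep and prove, by induction on the number $j$ of $G$-eliminations performed, the invariant: if $j'$ of the first $j$ eliminated $G$-vertices lie in $V(H)$, then $H_{j'}^{\tau}$ is a subgraph of the induced subgraph $(G_j^\sigma)_{V(H_{j'}^\tau)}$. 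The inductive step splits into two cases. If $\sigma[j+1] \notin V(H)$, then $j'$ is unchanged and I would use that vertex elimination is monotone in the sense that it never deletes an edge between two surviving vertices (it only ever connects some neighbors into a clique), so $(G_j^\sigma)_{V(H_{j'}^\tau)}$ is a subgraph of $(G_{j+1}^\sigma)_{V(H_{j'}^\tau)}$ and the invariant is preserved. If $\sigma[j+1] = \tau[j'+1] \in V(H)$, the invariant at step $j$ gives $\neighborvg{\tau[j'+1]}{H_{j'}^\tau} \subseteq \neighborvg{\sigma[j+1]}{G_j^\sigma}$, so every edge newly added among $H$-neighbors when eliminating $\tau[j'+1]$ from $H_{j'}^\tau$ lies among $G$-neighbors of $\sigma[j+1]$, hence is present in (or added to form) $G_{j+1}^\sigma$; combined with monotonicity on the surviving old edges, the invariant passes to step $j+1$. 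The same inclusion $\neighborvg{\tau[i]}{H_{i-1}^\tau} \subseteq \neighborvg{\sigma[j]}{G_{j-1}^\sigma}$ at the corresponding step ($\tau[i] = \sigma[j]$) also yields $\degp{\tau[i]}{H_{i-1}^\tau} \le \degp{\sigma[j]}{G_{j-1}^\sigma}$, whence $\treewidthp{H} \le \max_i \degp{\tau[i]}{H_{i-1}^\tau} \le \max_j \degp{\sigma[j]}{G_{j-1}^\sigma} = \treewidthp{G}$.

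\emph{Item (1).} First I would note the elementary bound $\delta(G') \le \treewidthp{G'}$ valid for every graph $G'$: in any elimination order $\sigma$, the first eliminated vertex is removed from $G_0^\sigma = G'$ itself, so $\degp{\sigma[1]}{G_0^\sigma} = \degp{\sigma[1]}{G'} \ge \delta(G')$, and this lower bound survives taking the maximum over $i$ and then the minimum over $\sigma$. Then, choosing a subgraph $H$ of $G$ with $\delta(H) = \degeneracyg{G}$ (which exists by Definition~\ref{def:degeracy}) and applying item (2), I obtain $\degeneracyg{G} = \delta(H) \le \treewidthp{H} \le \treewidthp{G}$, which is item (1).

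\textbf{Main obstacle.} The delicate point is item (2): because the $G$-process may skip vertices not in $V(H)$, the two elimination runs advance at different rates, so one must state the subgraph invariant at an \emph{arbitrary} intermediate $G$-step — not merely at the steps where an $H$-vertex is removed — and verify that eliminating a vertex outside $V(H)$ cannot break the containment. This is exactly where the monotonicity remark (vertex elimination never removes an edge between two surviving vertices) does the real work; once the invariant is phrased at this level of generality, the remaining bookkeeping is routine.
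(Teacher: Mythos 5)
Your proposal is correct. For item (3) your argument is exactly the paper's: the right-hand side is the value obtained by restricting the minimum in Definition~\ref{def:treewidth_elimination} to elimination orders that remove $v$ first. The difference lies in items (1) and (2): the paper simply cites Lemmas~8 and~11 of \citet{bodlaender2011treewidth}, whereas you supply complete elementary proofs from the elimination-order definition. Your lockstep-elimination invariant for item (2) --- that after $j$ eliminations in $G$, of which $j'$ touch $V(H)$, the graph $H_{j'}^{\tau}$ remains a subgraph of $G_j^{\sigma}$ restricted to the surviving $H$-vertices --- is sound: the key monotonicity observation (elimination never deletes an edge between two surviving vertices) correctly handles the steps where $\sigma$ eliminates a vertex outside $V(H)$, and the neighbor inclusion $\neighborvg{\tau[i]}{H_{i-1}^{\tau}} \subseteq \neighborvg{\sigma[j]}{G_{j-1}^{\sigma}}$ at matched steps gives both the preservation of the invariant and the degree comparison needed to conclude. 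Your derivation of item (1) from item (2) via the elementary bound $\delta(G') \le \treewidthp{G'}$ is also the standard argument and is correct. What your route buys is self-containedness (no reliance on the external reference, which in particular defines treewidth via tree decompositions rather than elimination orders, so the citation implicitly invokes the equivalence of the two definitions); what it costs is length. Both are valid.
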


	We then present results concerning the treewidth of graphs subject to constraints on the number of edges or vertices.

	\begin{lemma}
		\label{lem:property_of_tt}
		Let $\graphsetne{n}{e}$ be the set of all simple graphs with $n$ vertices, $e$ edges, and no isolated vertices, and let $\graphsetne{n}{e} = \emptyset$ if no such graphs exist. Let $\mathrm{tt}(n, e)$ denote the maximum treewidth of the graphs in $\graphsetne{n}{e}$:
		\begin{equation*}
			\mathrm{tt}(n, e) \coloneqq \max\left\{ \treewidthp{G}\;\middle|\; G \in \graphsetne{n}{e} \right\}.
		\end{equation*}
		The following properties hold:
		\begin{enumerate}
			\item \textbf{Support set:} For any fixed $e \in \positivenaturals $, $\graphsetne{n}{e}$ is nonempty if and only if
			      \begin{equation*}
				      n \in \mathrm{n}(e) \coloneqq \left\{ n \in \positivenaturals \, \middle \vert\,  \binom{n}{2} \geq e,  n \leq 2e \right\}.
			      \end{equation*}
			\item \textbf{Representation of $\mathrm{t}(e): $} $ \mathrm{t}(e) = \max_{n \in \mathrm{n}(e)} \mathrm{tt} (n, e).$
			\item \textbf{Monotonicity in edge counts:} For any fixed $n: n \in \mathrm{n}(e+1) $, the function $\mathrm{tt}(n, e)$ is non-decreasing w.r.t. $e$.
			\item \textbf{Treewidth of complete graph:} $\mathrm{tt}(n, \binom{n}{2}) =  n - 1$, $\mathrm{tt}(n, \binom{n}{2} - 1) = n - 2$.
			\item \textbf{Recurrence relations in simple cases:}  recall that $\frac{2e}{n}$ is in fact the average degree of the graph and when $e > 2$ and $n \in \mathrm{n} (e)$:
			      \begin{itemize}
				      \item[(1)] If $1 \leq \frac{2e}{n} < 2$, then $ \mathrm{tt}(n,e) \leq \max\{ 1, \mathrm{t}(e-1) \}$.
				      \item[(2)] If $1 \leq \frac{2e}{n} < 3$, then $ \mathrm{tt}(n,e)  \leq \max\{ 2, \mathrm{t}(e-1) \} $.
				      \item[(3)] If $1 \leq \frac{2e}{n} < 4$, then $ \mathrm{tt}(n,e)  \leq \max\{ 3, \mathrm{tt}(n - 1 ,e), \mathrm{t}(e-1) \}$.
				      \item[(4)] $ \mathrm{tt}(7,14) \leq  \max\{ 4,  \mathrm{t}(13) \}$.
			      \end{itemize}
		\end{enumerate}
	\end{lemma}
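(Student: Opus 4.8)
The plan is to prove the six assertions largely separately, using throughout only the three facts from Lemma~\ref{lem:simple_bounds_on_treewidth} (the degeneracy lower bound, subgraph monotonicity, and the single-vertex-elimination upper bound $\treewidthp{G}\le\max\{\degp{v}{G},\treewidthp{\eliminatevg{v}{G}}\}$), together with two preliminary observations I would record first: (a) adjoining or deleting isolated vertices leaves treewidth unchanged, which is immediate from Definition~\ref{def:treewidth_elimination}; and (b) $\mathrm{t}$ is non-decreasing — given a treewidth-maximizing graph on $k$ edges, either it is not complete, in which case add a non-edge, or it is complete, in which case adjoin a pendant vertex and edge; in both cases subgraph monotonicity produces a graph on $k+1$ edges with treewidth $\ge\mathrm{t}(k)$. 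For Part 1, $e\le\binom{n}{2}$ is the simple-graph bound and $2e=\sum_v\degp{v}{G}\ge n$ since no vertex is isolated; conversely, given $\binom{n}{2}\ge e$ and $n\le 2e$ (so $\lceil n/2\rceil\le e$) I would take a minimal edge set covering all $n$ vertices ($\lceil n/2\rceil$ edges, no isolated vertex) and add arbitrary further edges up to a total of $e$, which is possible as $e\le\binom{n}{2}$. Part 2 follows by deleting isolated vertices in one direction (treewidth unchanged, landing in $\graphsetne{n'}{e}$ with $n'\in\mathrm{n}(e)$) and noting that every $\graphsetne{n}{e}$ consists of graphs with exactly $e$ edges in the other. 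For Part 3, take a treewidth-maximizer $G\in\graphsetne{n}{e}$; since $n\in\mathrm{n}(e+1)$ forces $\binom{n}{2}\ge e+1$, add any non-edge and invoke subgraph monotonicity to get $\mathrm{tt}(n,e+1)\ge\treewidthp{G}=\mathrm{tt}(n,e)$.

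For Part 4, the unique graph in $\graphsetne{n}{\binom{n}{2}}$ is $K_n$: the degeneracy bound gives $\treewidthp{K_n}\ge n-1$, while any elimination order shows $\treewidthp{K_n}\le n-1$. For $n\ge 3$ the unique graph in $\graphsetne{n}{\binom{n}{2}-1}$ is $K_n$ with one edge $\{u,v\}$ removed; it contains the clique $K_{n-1}$ induced on $V\setminus\{u\}$ (lower bound $n-2$), and eliminating $u$ first (degree $n-2$, with neighbourhood $V\setminus\{u,v\}$ already a clique) reduces it to $K_{n-1}$ (upper bound $n-2$).

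For Part 5, the unifying scheme is: if $\tfrac{2e}{n}<d+1$ then some vertex $v$ has $\degp{v}{G}\le d$, and by the elimination bound it suffices to control $G'\coloneqq\eliminatevg{v}{G}$, which has $n-1$ vertices, no isolated vertices (so $G'\in\graphsetne{n-1}{\cdot}$, since the neighbours of $v$ only gain edges), and exactly $e-\degp{v}{G}+\bigl(\binom{\degp{v}{G}}{2}-k\bigr)$ edges, where $k$ is the number of edges already present inside $N_G(v)$. For $\degp{v}{G}\in\{1,2\}$ this edge count is $\le e-1$, hence $\treewidthp{G'}\le\mathrm{t}(e-1)$, giving cases (1) and (2). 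For $\degp{v}{G}=3$ I would split on $k$: if $k\ge 1$ the count is $\le e-1$, so $\treewidthp{G'}\le\mathrm{t}(e-1)$; if $k=0$ the count is $e$ on $n-1$ vertices, which is then automatically $\le\binom{n-1}{2}$ so $\mathrm{tt}(n-1,e)$ is well-defined and $\treewidthp{G'}\le\mathrm{tt}(n-1,e)$ — this is case (3). For (4), with $n=7$, $e=14$, $\tfrac{2e}{n}=4$: if some vertex has degree $\le 3$ we are in the previous situation, and the resulting $G'$ on $6$ vertices has at most $14=\binom{6}{2}-1$ edges, so $\treewidthp{G'}\le\max\{\mathrm{tt}(6,14),\mathrm{t}(13)\}=\max\{4,\mathrm{t}(13)\}$ by Part 4; otherwise $G$ is $4$-regular, and a pigeonhole argument — each of the four neighbours of a fixed $v$ needs three neighbours besides $v$ but only two vertices lie outside $N_G(v)\cup\{v\}$, so each contributes at least one edge inside $N_G(v)$, i.e. $k\ge 2$ — shows $G'$ again has $\le 14$ edges on $6$ vertices, yielding the same bound.

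The substantive part is Part 5, and within it the two delicate points are: first, the edge-count bookkeeping under vertex elimination — it is exactly the possibility that eliminating a degree-$3$ vertex leaves the edge count unchanged (the case $k=0$) that makes the term $\mathrm{tt}(n-1,e)$ in (3) unavoidable, and one must check that this term is always well-defined in the regime in which it is invoked; second, the $4$-regular sub-case of (4), where the specific numbers $7$ and $14$ enter through the pigeonhole count forcing $k\ge 2$. The remaining parts are routine applications of the elimination bound and of Lemma~\ref{lem:simple_bounds_on_treewidth}.
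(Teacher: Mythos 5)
Your proposal is correct and follows the paper's own strategy almost step for step: the same reduction via a minimum-degree vertex and the elimination bound $\treewidthp{G}\le\max\{\degp{v}{G},\treewidthp{\eliminatevg{v}{G}}\}$, the same bookkeeping of how many edges elimination removes or adds, and the same treatment of Parts 1--4. Two points of divergence are worth recording. First, in Part 4 you correctly observe that eliminating a degree-$(n-2)$ vertex of $K_n$ minus an edge yields $K_{n-1}$ (whose treewidth is $n-2$), whereas the paper's proof asserts it yields $K_{n-2}$ --- your version is the accurate one, and the bound is unaffected. Second, and more substantively, in the $4$-regular sub-case of Part 5(4) the paper argues by contradiction that the net edge change $\Delta_e$ cannot be $+1$ or $+2$, via an exhaustive count of the maximum number of edges compatible with each hypothesized neighbourhood configuration; you instead give a direct pigeonhole argument (each of the four neighbours of any vertex $v$ must have a neighbour inside $N_G(v)$, forcing at least two edges there, hence $\Delta_e\le 0$). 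Your argument is shorter, applies to every vertex rather than exhibiting one good vertex, and avoids the case analysis; both yield the same recurrence $\mathrm{tt}(7,14)\le\max\{4,\mathrm{t}(13)\}$. The only minor imprecision --- shared with the paper --- is the claim that the eliminated graph always lies in some $\graphsetne{n-1}{\cdot}$: eliminating a degree-$1$ or degree-$2$ vertex can isolate a neighbour, but since treewidth is unaffected by isolated vertices and those cases are bounded through $\mathrm{t}(e-1)$ rather than $\mathrm{tt}$, nothing breaks.
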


	\begin{proof}
		We prove the lemma in the following steps.
		\begin{enumerate}
			\item Since a simple graph with $n$ vertices and no isolated vertices can have at most $\binom{n}{2}$ edges, we must have $e \leq \binom{n}{2}$. Moreover, as isolated vertices are not allowed, every vertex must have degree at least one. In particular, the average degree satisfies $\frac{2e}{n} \geq 1$, which implies $n \leq 2e$.

			\item The definition of $\mathrm{t}(e)$ does not exclude graphs with isolated vertices. However, for any graph $G$ with $e$ edges and isolated vertices, eliminating those vertices (with degree 0) does not affect the number of edges or the treewidth. Therefore, it suffices to take the maximum over all graphs in $\graphsetne{n}{e}$ for $n \in \mathrm{n}(e)$, i.e. taking the maximum over $\mathrm{tt}(n,e)$ for $n \in \mathrm{n}(e)$.

			\item We observe that any graph $G \in \graphsetne{n}{e}$ can be extended to a graph $G^\prime \in \graphsetne{n}{e+1}$ by adding an edge, since $n \in \mathrm{n}(e+1)$. By the subgraph monotonicity of treewidth stated in Lemma~\ref{lem:simple_bounds_on_treewidth}, it follows that $\treewidthp{G} \leq \treewidthp{G^\prime}$. Therefore, we have $\mathrm{t}(e) \leq \mathrm{t}(e+1)$.

			\item First, consider the case where $e = \binom{n}{2}$. The only graph (up to isomorphism) with $n$ vertices and $\binom{n}{2}$ edges is the complete graph $K_n$. Since the degeneracy of $K_n$ is $n - 1$, we have $\treewidth(K_n) \geq n - 1$. To show the upper bound, note that eliminating any vertex in $K_n$ removes a vertex of degree $n - 1$ and yields $K_{n-1}$. By the third bound in Lemma~\ref{lem:simple_bounds_on_treewidth}, we get $\treewidth(K_n) \leq \max\{n - 1, \treewidth(K_{n-1})\}$. Repeating this process down to $K_2$, we have $\treewidth(K_n) \leq \max\{n - 1, \dots, \treewidth(K_2)\} = n - 1$, since $\treewidth(K_2) = 1$ is easy to verify. Therefore, $\mathrm{tt}(n, \binom{n}{2}) = n - 1$.

			      Next, consider $e = \binom{n}{2} - 1$. There is still only one graph in this case (up to isomorphism), obtained by deleting one edge from $K_n$. In this graph, two vertices have degree $n - 2$ and the rest have degree $n - 1$. Since it contains $K_{n-1}$ as a subgraph, we have a lower bound $\treewidth \geq n - 2$ by the degeneracy. For the upper bound, eliminating one of the vertices with degree $n - 2$ yields $K_{n-2}$. Thus, $\treewidth \leq \max\{n - 2, \treewidth(K_{n-2})\} = n - 2$. Therefore, $\mathrm{tt}(n, \binom{n}{2} - 1) = n - 2$.

			\item Let us introduce another notation: let $\graphset_e \coloneqq \bigcup_{n \in \mathrm{n}(e)} \graphsetne{n}{e}$.

			      \textbf{Case (1): $1 \leq \frac{2e}{n} < 2$}, the minimum degree in any graph in $\graphsetne{n}{e}$ must be exactly $1$. Eliminating such a vertex removes exactly one edge and yields a graph in $\graphset_{e - 1}$. Therefore, by the third bound in Lemma~\ref{lem:simple_bounds_on_treewidth}, we have $\mathrm{tt}(n,e) \leq \max\{1, \mathrm{t}(e - 1)\}$.

			      \textbf{Case (2): $1 \leq \frac{2e}{n} < 3$}, the minimum degree in any graph from $\graphsetne{n}{e}$ must be either $1$ or $2$. If the minimum degree is $2$, eliminating such a vertex removes two edges (if its neighbors are adjacent) or one edge (if they are not, in which case we need to add one edge to connect them), and in both cases, one vertex is removed. The resulting graph lies in either $\graphsetne{n-1}{e-1}$ or $\graphsetne{n-1}{e-2}$. On the other hand, if the minimum degree is $1$, then as in the previous case, eliminating that vertex removes one edge and results in a graph in $\graphset_{e-1}$.     Therefore, by the vertex elimination bound (Lemma~\ref{lem:simple_bounds_on_treewidth}), we have $\mathrm{tt}(n,e) \leq \max\{2, \mathrm{t}(e - 1), \mathrm{tt}(n-1, e - 1), \mathrm{tt}(n-1, e - 2)\}$. Since $\mathrm{tt}(n-1, e - 2) \leq \mathrm{tt}(n-1, e - 1) \leq \mathrm{t}(e - 1)$, we conclude $\mathrm{tt}(n,e) \leq \max\{2, \mathrm{t}(e - 1)\}$.

			      \textbf{Case (3): $1 \leq \frac{2e}{n} < 4$}. The minimum degree in any graph from $\graphsetne{n}{e}$ must be either $1$, $2$, or $3$. If the minimum degree is $3$, then eliminating such a vertex removes between $0$ and $3$ edges, depending on the connectivity of its neighbors, but in all cases it only removes exactly one vertex. Therefore, the resulting graph lies in $\graphsetne{n-1}{e}$, $\graphsetne{n-1}{e-1}$, $\graphsetne{n-1}{e-2}$, or $\graphsetne{n-1}{e-3}$. Combining this with the previous cases for minimum degree $1$ and $2$, we obtain the recurrence relation: $\mathrm{tt}(n,e) \leq \max\{3, \mathrm{tt}(n-1,e), \mathrm{tt}(n-1,e-1), \mathrm{tt}(n-1,e-2), \mathrm{tt}(n-1,e-3), \mathrm{t}(e-1)\} \leq \max\{3, \mathrm{tt}(n-1,e), \mathrm{t}(e-1)\}$.

			      \textbf{Case (4): $n = 7$, $e = 14$}. First we observe that $\frac{2e}{n} = 4$, so the minimum degree in any graph $G = (V, E)$ from $\graphsetne{n}{e}$ must be $1$, $2$, $3$, or $4$.

			      We focus on the case where the minimum degree is $4$, and argue that in this case, there must exist a vertex $v_1$ of degree exactly $4$ such that eliminating $v_1$ does not increase the number of edges.

			      In general, eliminating a vertex of degree $4$ may remove 4 edges and add up to $\binom{4}{2} = 6$ fill-in edges among its neighbors, so the net change in edge counts, denoted by $\Delta_e$, ranges from $-4$ to $+2$. We now show that in our setting, increasing the number of edges (i.e., $\Delta_e > 0$) is impossible.

			      Let $v_1$ be a vertex of degree $4$, and denote its neighborhood as $\{v_2, v_3, v_4, v_5\} = \neighborvg{v_1}{G}$. Partition the vertex set as $V = \{v_1\} \cup \{v_2, v_3, v_4, v_5\} \cup \{v_6, v_7\}$.

			      Suppose that eliminating $v_1$ leads to $\Delta_e = 2$. This implies that $\neighborvg{v_1}{G}$ contains no edges, so 6 fill-in edges are added, making the neighborhood a clique. Now we count the maximum possible number of edges under this configuration:
			      the 4 edges incident to $v_1$,
			      no edges inside $\{v_2, v_3, v_4, v_5\}$,
			      up to 1 edge inside $\{v_6, v_7\}$, and
			      at most $2 \cdot 4 = 8$ edges between $\{v_6, v_7\}$ and $\{v_2, v_3, v_4, v_5\}$. Hence, the total number of edges is at most $4 + 0 + 8 + 1 = 13 < 14 = e$, contradicting the assumption. Therefore, $\Delta_e = 2$ is impossible.

			      Now consider the case where eliminating $v_1$ leads to $\Delta_e = 1$. Then $\neighborvg{v_1}{G}$ must contain exactly one edge, say connecting $v_2$ and $v_3$, and the edges between $\{v_6, v_7\}$ and $\{v_2, v_3, v_4, v_5\}$ and inside $\{v_6, v_7\}$ all appear. This allows a maximum of $4 + 1 + 8 + 1 = 14$ edges. However, in this setting, $v_4$ and $v_5$ are not adjacent to each other or to $v_2$ or $v_3$, and their neighbors are only in $\{v_1, v_6, v_7\}$. Thus, each of them has degree at most $3$, contradicting the assumption that the minimum degree is $4$.

			      Combining the above arguments, we conclude that eliminating $v_1$ does not increase the number of edges. Hence, if the minimum degree is $4$, then there exists a vertex of degree $4$ such that eliminating this vertex yields a graph in $\graphsetne{n-1}{e'}$ with $e' \leq e$.

			      Now combining this with previous arguments for when the minimum degree is $1$, $2$, or $3$, we obtain the recurrence relation: $ \mathrm{tt}(n, e) \leq \max\{4, \mathrm{tt}(n - 1, e), \mathrm{tt}(n - 1, e - 1), \mathrm{tt}(n - 1, e - 2), \mathrm{tt}(n - 1, e - 3), \mathrm{tt}(n - 1, e - 4), \mathrm{t}(e - 1)\} \leq \max \{ 4, \mathrm{tt}(n - 1, e),\mathrm{t}(e - 1) \}$.

			      Since $\mathrm{tt}(n - 1, e) = \mathrm{tt}(6, 14) = 4$ by Property~4 shown earlier, we conclude: $\mathrm{tt}(7, 14) \leq \max\{4, \mathrm{t}(13)\}$.

		\end{enumerate}
	\end{proof}

	Armed with the above results on treewidth, we prove Observation~\ref{pro:treewidth_table}.
	\begin{proof}[Proof of Observation~\ref{pro:treewidth_table}]

		We verify the values of $\mathrm{t}(e)$ for each $1 \le e \le 15$ by analyzing the sets $\graphsetne{n}{e}$ and showing that the lower and upper bounds coincide. Let $\mathrm{u}(e)$ (resp. $\mathrm{l}(e)$) denote the target upper (resp. lower) bound of $\mathrm{t}(e)$ for each $e$. Our goal is to prove that $\mathrm{t}(e) = \mathrm{l}(e) = \mathrm{u} (e)$ for all $e \in \{1, 2, \dots, 15\}$.

		\textbf{Lower bound.} For each $e$, a specific graph $G_e$ is provided in Figure~\ref{fig:treewidth_table}. Each $G_e$ is a simple graph with $e$ edges and contains a clique $\completegraphp{\mathrm{l}(e)+1}$ as a subgraph. By the definition of degeneracy and Lemma~\ref{lem:simple_bounds_on_treewidth}, we have:
		\begin{equation*}
			\mathrm{l}(e) \leq \degeneracyg{G_e} \leq \treewidthp{G_e} \leq \mathrm{t}(e).
		\end{equation*}

		\textbf{Upper bound.} We now show that $\mathrm{t}(e) \leq \mathrm{u}(e)$ by bounding $\mathrm{tt}(n,e)$ for all $n \in \mathrm{n}(e)$, using the properties in Lemma~\ref{lem:property_of_tt}.

		From Property~4, we obtain:
		\begin{align*}
			\mathrm{tt}(2,1) & = 1, \mathrm{tt}(3,3) = 2, \mathrm{tt}(4,6) = 3, \\ \mathrm{tt}(5,10) & = 4, \mathrm{tt}(6,15) = 5, \mathrm{tt}(5,9) = 3, \mathrm{tt}(6,14) = 4.
		\end{align*}

		\textbf{Case $e = 1, 2$:}

		We have $\mathrm{n}(1) = \{2\}$ and $\mathrm{n}(2) = \{3, 4\}$, so:
		\begin{equation*}
			\mathrm{t}(1) = \mathrm{tt}(2,1) = 1.
		\end{equation*}

		For $e = 2$, both $n = 3$ and $n = 4$ satisfy $1 \leq \frac{2e}{n} < 2$. By Property~5:
		\begin{align*}
			\mathrm{tt}(3,2) & \leq \max\{1, \mathrm{t}(1)\} = 1, \\
			\mathrm{tt}(4,2) & \leq \max\{1, \mathrm{t}(1)\} = 1,
		\end{align*}
		and hence,
		\begin{equation*}
			\mathrm{t}(2) = 1.
		\end{equation*}

		\textbf{Case $e = 3, 4, 5$:}

		The support sets are:
		\begin{align*}
			\mathrm{n}(3) & = \{3, 4, 5, 6\},      \\
			\mathrm{n}(4) & = \{4, 5, 6, 7, 8\},   \\
			\mathrm{n}(5) & = \{4, 5, \dots, 10\}.
		\end{align*}
		Observe that
		\begin{equation*}
			\mathrm{n}(3) \setminus \{3\} \subseteq \mathrm{n}(4) \subseteq \mathrm{n}(5).
		\end{equation*}

		By monotonicity (Part 3 in Lemma~\ref{lem:property_of_tt}), we have:
		\begin{equation*}
			\max_{n \in \mathrm{n}(3) \setminus \{3\}} \mathrm{tt}(n,3) \leq \mathrm{t}(4) \leq \mathrm{t}(5).
		\end{equation*}

		Since $\mathrm{tt}(3,3) = 2$ and the lower bound is $2$, it follows that:
		\begin{equation*}
			2 \leq \mathrm{t}(3) \leq \mathrm{t}(4) \leq \mathrm{t}(5).
		\end{equation*}

		Now, for each $e \in \{3, 4, 5\}$ and all $n \in \mathrm{n}(e)$, we have $1 \leq \frac{2e}{n} < 3$. By Property~5:
		\begin{equation*}
			\mathrm{t}(5) \leq \max\{2, \mathrm{t}(4)\} \leq \max\{2, \mathrm{t}(3)\} \leq \max\{2, \mathrm{t}(2)\} = 2,
		\end{equation*}
		and thus:
		\begin{equation*}
			\mathrm{t}(3) = \mathrm{t}(4) = \mathrm{t}(5) = 2.
		\end{equation*}

		\textbf{Case $e = 6, 7, 8, 9$:}

		The support sets are:
		\begin{align*}
			\mathrm{n}(6) & = \{4, 5, \dots, 12\}, \\
			\mathrm{n}(7) & = \{5, 6, \dots, 14\}, \\
			\mathrm{n}(8) & = \{5, 6, \dots, 16\}, \\
			\mathrm{n}(9) & = \{5, 6, \dots, 18\}.
		\end{align*}

		We see that $\mathrm{tt}(4,6) = 3$, and that
		\begin{equation*}
			\mathrm{n}(6) \setminus \{4\} \subseteq \mathrm{n}(7) \subseteq \mathrm{n}(8) \subseteq \mathrm{n}(9),
		\end{equation*}
		the lower bound is $3$. Therefore, by monotonicity:
		\begin{equation*}
			3 \leq \mathrm{t}(6) \leq \mathrm{t}(7) \leq \mathrm{t}(8) \leq \mathrm{t}(9).
		\end{equation*}

		More specifically, note that
		\begin{equation*}
			\mathrm{tt}(5,6) \leq \mathrm{tt}(5,7) \leq \mathrm{tt}(5,8) \leq \mathrm{tt}(5,9) = 3.
		\end{equation*}

		For all $e \in \{6, 7, 8, 9\}$ and $n \in \mathrm{n}(e) \setminus \{4, 5\}$, we have $1 \leq \frac{2e}{n} < 4$, so by Property~5:
		\begin{equation*}
			\mathrm{tt}(n, e) \leq \max\{3, \mathrm{t}(e - 1), \mathrm{tt}(5, e)\} \leq \max\{3, \mathrm{t}(e - 1)\}.
		\end{equation*}

		Hence,
		\begin{equation*}
			\mathrm{t}(9) \leq \max\{3, \mathrm{t}(8)\} \leq \max\{3, \mathrm{t}(7)\} \leq \max\{3, \mathrm{t}(6)\} \leq \max\{3, \mathrm{t}(5)\} = 3,
		\end{equation*}
		and therefore,
		\begin{equation*}
			\mathrm{t}(6) = \mathrm{t}(7) = \mathrm{t}(8) = \mathrm{t}(9) = 3.
		\end{equation*}

		\textbf{Case $e = 10, 11, 12, 13, 14$:}

		The support sets are:
		\begin{align*}
			\mathrm{n}(10) & = \{5, 6, \dots, 20\},  \\
			\mathrm{n}(11) & = \{6, 7, \dots, 22\},  \\
			\mathrm{n}(12) & = \{6, 7, \dots, 24\},  \\
			\mathrm{n}(13) & = \{6, 7, \dots, 26\},  \\
			\mathrm{n}(14) & = \{6, 7,  \dots, 28\}.
		\end{align*}

		We know that $\mathrm{tt}(5,10) = 4$, the lower bound is $4$, and
		\begin{equation*}
			\mathrm{n}(10) \setminus \{5\} \subseteq \mathrm{n}(11) \subseteq \mathrm{n}(12) \subseteq \mathrm{n}(13) \subseteq \mathrm{n}(14),
		\end{equation*}

		Thus, by monotonicity:
		\begin{equation*}
			4 \leq \mathrm{t}(10) \leq \mathrm{t}(11) \leq \mathrm{t}(12) \leq \mathrm{t}(13) \leq \mathrm{t}(14).
		\end{equation*}

		From Property~5 of Lemma~\ref{lem:property_of_tt} (Case (4)), we obtain:
		\begin{equation*}
			\mathrm{tt}(7,14) \leq \max\{4, \mathrm{t}(13)\} .
		\end{equation*}

		And by monotonicity:
		\begin{equation*}
			\mathrm{tt}(6,10) \leq \mathrm{tt}(6,11) \leq \mathrm{tt}(6,12) \leq \mathrm{tt}(6,13) \leq \mathrm{tt}(6,14) = 4,
		\end{equation*}

		For $e = 14$ and $n \in \mathrm{n}(14) \setminus \{5, 6, 7\}$, we have $1 \leq \frac{2e}{n} < 4$, so:
		\begin{equation*}
			\mathrm{tt}(n,14) \leq \max\{3, \mathrm{tt}(7,14), \mathrm{t}(13)\} \leq \max\{4, \mathrm{t}(13)\},
		\end{equation*}
		and hence $\mathrm{t}(14) \leq \mathrm{t}(13)$.

		For $e \in \{10,11,12,13\}$ and $n \in \mathrm{n}(e) \setminus \{5,6\}$, Property~5 gives:
		\begin{equation*}
			\mathrm{tt}(n,e) \leq \max\{3, \mathrm{t}(e-1), \mathrm{tt}(6,e)\} \leq \max\{4, \mathrm{t}(e-1)\}.
		\end{equation*}

		Thus, we obtain:
		\begin{equation*}
			\mathrm{t}(14) \leq \mathrm{t}(13) \leq \max\{4, \mathrm{t}(12)\} \leq \max\{4, \mathrm{t}(11)\} \leq \max\{4, \mathrm{t}(10)\} \leq \max\{4, \mathrm{t}(9)\} = 4,
		\end{equation*}
		and therefore,
		\begin{equation*}
			\mathrm{t}(10) = \mathrm{t}(11) = \mathrm{t}(12) = \mathrm{t}(13) = \mathrm{t}(14) = 4.
		\end{equation*}

		\textbf{Case $e = 15$:}

		The support set is $\mathrm{n}(15) = \{6, 7, \dots, 30\}$.

		We have $\mathrm{tt}(6,15) = 5$, and $\mathrm{tt}(7,15) \leq \mathrm{tt}(7,20) = 5$. For all other $n \in \mathrm{n}(15) \setminus \{6, 7\}$, since $1 \leq \frac{2e}{n} < 4$, they are bounded by $ \max\{3, \mathrm{tt}(7,15), \mathrm{t}(14)\} = 5$, Thus we conclude $ \mathrm{t}(15) = 5$.
	\end{proof}

	\begin{figure}[htbp]
		\centering
		\begin{subfigure}{0.18\textwidth}
			\centering
			\begin{tikzpicture}[scale=0.8]
				\node[circle, draw] (a) at (0,0) {};
				\node[circle, draw] (b) at (1,0) {};
				\draw (a) -- (b);
			\end{tikzpicture}
			\caption{$G_{1}(\completegraphp{2})$ \\$e=1$, $\mathrm{t}(e)=1$}
			\label{fig:graph_e_1}
		\end{subfigure}
		\hfill
		\begin{subfigure}{0.18\textwidth}
			\centering
			\begin{tikzpicture}[scale=0.8]
				\node[circle, draw] (a) at (0,0) {};
				\node[circle, draw] (b) at (1,0) {};
				\node[circle, draw] (c) at (0.5,1) {};
				\draw (a) -- (b) -- (c);
			\end{tikzpicture}
			\caption{$G_{2}$ \\$e=2$, $\mathrm{t}(e)=1$}
			\label{fig:graph_e_2}
		\end{subfigure}
		\hfill
		\begin{subfigure}{0.18\textwidth}
			\centering
			\begin{tikzpicture}[scale=0.8]
				\node[circle, draw] (a) at (0,0) {};
				\node[circle, draw] (b) at (1,0) {};
				\node[circle, draw] (c) at (0.5,1) {};
				\draw (a) -- (b) -- (c) -- (a);
			\end{tikzpicture}
			\caption{ $G_{3}(\completegraphp{3})$\\$e=3$, $\mathrm{t}(e)=2$}
			\label{fig:graph_e_3}
		\end{subfigure}
		\hfill
		\begin{subfigure}{0.18\textwidth}
			\centering
			\begin{tikzpicture}[scale=0.8]
				\node[circle, draw] (a) at (0,0) {};
				\node[circle, draw] (b) at (1,0) {};
				\node[circle, draw] (c) at (1,1) {};
				\node[circle, draw] (d) at (0,1) {};
				\draw (a) -- (b);
				\draw (b) -- (d);
				\draw (c) -- (d);
				\draw (a) -- (d);
			\end{tikzpicture}
			\caption{$G_{4}$\\$e=4$, $\mathrm{t}(e)=2$}
			\label{fig:graph_e_4}
		\end{subfigure}
		\hfill
		\begin{subfigure}{0.18\textwidth}
			\centering
			\begin{tikzpicture}[scale=0.8]
				\node[circle, draw] (a) at (0,0) {};
				\node[circle, draw] (b) at (1,0) {};
				\node[circle, draw] (c) at (1,1) {};
				\node[circle, draw] (d) at (0,1) {};
				\node[circle, draw] (e) at (2,1) {};
				\draw (a) -- (b);
				\draw (b) -- (d);
				\draw (c) -- (d);
				\draw (a) -- (d);
				\draw (c) -- (e);
			\end{tikzpicture}
			\caption{$G_{5}$\\$e=5$, $\mathrm{t}(e)=1$}
			\label{fig:graph_e_5}
		\end{subfigure}

		\vspace{0.5cm}

		\begin{subfigure}{0.18\textwidth}
			\centering
			\begin{tikzpicture}[scale=0.8]
				\node[circle, draw] (a) at (0,0) {};
				\node[circle, draw] (b) at (1,0) {};
				\node[circle, draw] (c) at (1,1) {};
				\node[circle, draw] (d) at (0,1) {};
				\draw (a) -- (b) -- (c) -- (d) -- (a);
				\draw (a) -- (c);
				\draw (b) -- (d);
			\end{tikzpicture}
			\caption{ $G_{6}(\completegraphp{4})$\\$e=6$, $\mathrm{t}(e)=3$}
			\label{fig:graph_e_6}
		\end{subfigure}
		\hfill
		\begin{subfigure}{0.18\textwidth}
			\centering
			\begin{tikzpicture}[scale=0.8]
				\node[circle, draw] (a) at (0,0) {};
				\node[circle, draw] (b) at (1,0) {};
				\node[circle, draw] (c) at (1,1) {};
				\node[circle, draw] (d) at (0,1) {};
				\node[circle, draw] (e) at (2,1) {};
				\draw (a) -- (b) -- (c) -- (d) -- (a);
				\draw (a) -- (c);
				\draw (b) -- (d);
				\draw (c) -- (e);
			\end{tikzpicture}
			\caption{$G_{7}$\\$e=7$, $\mathrm{t}(e)=3$}
			\label{fig:graph_e_7}
		\end{subfigure}
		\hfill
		\begin{subfigure}{0.18\textwidth}
			\centering
			\begin{tikzpicture}[scale=0.8]

				\node[circle, draw] (a) at (0,0) {};
				\node[circle, draw] (b) at (1,0) {};
				\node[circle, draw] (c) at (1,1) {};
				\node[circle, draw] (d) at (0,1) {};

				\node[circle, draw] (e) at (2,1) {};
				\node[circle, draw] (f) at (2,0) {};

				\draw (a) -- (b);
				\draw (a) -- (c);
				\draw (a) -- (d);
				\draw (b) -- (c);
				\draw (b) -- (d);
				\draw (c) -- (d);

				\draw (c) -- (e);
				\draw (e) -- (f);
			\end{tikzpicture}
			\caption{$G_{8}$\\$e=8$, $\mathrm{t}(e)=4$}
			\label{fig:graph_e_8}
		\end{subfigure}
		\hfill
		\begin{subfigure}{0.18\textwidth}
			\centering
			\begin{tikzpicture}[scale=0.8]
				\node[circle, draw] (a) at (0,0) {};
				\node[circle, draw] (b) at (1,0) {};
				\node[circle, draw] (c) at (2,0) {};
				\node[circle, draw] (d) at (0,1) {};
				\node[circle, draw] (e) at (1,1) {};
				\node[circle, draw] (h) at (2,1) {};
				\node[circle, draw] (g) at (1.5,0.5) {};
				\draw (a) -- (b) -- (d) -- (e);
				\draw (c) -- (h) -- (e);
				\draw (a) -- (d);
				\draw (b) -- (e);
				\draw (a) -- (e);
				\draw (c) -- (g);
			\end{tikzpicture}
			\caption{$G_{9}$\\$e=9$, $\mathrm{t}(e)=3$}
			\label{fig:graph_e_9}
		\end{subfigure}
		\hfill
		\begin{subfigure}{0.18\textwidth}
			\centering
			\begin{tikzpicture}[scale=0.8]
				\node[circle, draw] (a) at (90:1) {};
				\node[circle, draw] (b) at (162:1) {};
				\node[circle, draw] (c) at (234:1) {};
				\node[circle, draw] (d) at (306:1) {};
				\node[circle, draw] (e) at (18:1) {};
				\draw (a) -- (b) -- (c) -- (d) -- (e) -- (a);
				\draw (a) -- (c);
				\draw (a) -- (d);
				\draw (b) -- (d);
				\draw (b) -- (e);
				\draw (c) -- (e);
			\end{tikzpicture}
			\caption{ $G_{10}(\completegraphp{5})$\\$e=10$, $\mathrm{t}(e)=4$}
			\label{fig:graph_e_10}
		\end{subfigure}

		\vspace{0.5cm}

		\begin{subfigure}{0.18\textwidth}
			\centering
			\begin{tikzpicture}[scale=0.6]
				\node[circle, draw] (v1) at (90:1) {};
				\node[circle, draw] (v2) at (18:1) {};
				\node[circle, draw] (v3) at (306:1) {};
				\node[circle, draw] (v4) at (234:1) {};
				\node[circle, draw] (v5) at (162:1) {};

				\draw (v1) -- (v2) -- (v3) -- (v4) -- (v5) -- (v1);
				\draw (v1) -- (v3) -- (v5) -- (v2) -- (v4) -- (v1);

				\node[circle, draw] (v6) at (90:1.8) {};
				\draw (v6) -- (v1);
			\end{tikzpicture}
			\caption{ $G_{11}$\\$e=11$, $\mathrm{t}(e)=4$}
			\label{fig:graph_e_11}
		\end{subfigure}
		\hfill
		\begin{subfigure}{0.18\textwidth}
			\centering
			\begin{tikzpicture}[scale=0.6]
				\node[circle, draw] (v1) at (90:1) {};
				\node[circle, draw] (v2) at (18:1) {};
				\node[circle, draw] (v3) at (306:1) {};
				\node[circle, draw] (v4) at (234:1) {};
				\node[circle, draw] (v5) at (162:1) {};

				\draw (v1) -- (v2) -- (v3) -- (v4) -- (v5) -- (v1);
				\draw (v1) -- (v3) -- (v5) -- (v2) -- (v4) -- (v1);

				\node[circle, draw] (v6) at (90:1.8) {};
				\draw (v6) -- (v1);
				\draw (v6) -- (v2);
			\end{tikzpicture}
			\caption{ $G_{12}$\\$e=12$, $\mathrm{t}(e)=4$}
			\label{fig:graph_e_12}
		\end{subfigure}
		\hfill
		\begin{subfigure}{0.18\textwidth}
			\centering
			\begin{tikzpicture}[scale=0.6]
				\node[circle, draw] (v1) at (90:1) {};
				\node[circle, draw] (v2) at (18:1) {};
				\node[circle, draw] (v3) at (306:1) {};
				\node[circle, draw] (v4) at (234:1) {};
				\node[circle, draw] (v5) at (162:1) {};

				\draw (v1) -- (v2) -- (v3) -- (v4) -- (v5) -- (v1);
				\draw (v1) -- (v3) -- (v5) -- (v2) -- (v4) -- (v1);

				\node[circle, draw] (v6) at (90:1.8) {};
				\node[circle, draw] (v7) at (50:1.8) {};
				\draw (v6) -- (v1);
				\draw (v7) -- (v1);
				\draw (v6) -- (v7);
			\end{tikzpicture}
			\caption{ $G_{13}$\\$e=13$, $\mathrm{t}(e)=4$}
		\end{subfigure}
		\hfill
		\begin{subfigure}{0.18\textwidth}
			\centering
			\begin{tikzpicture}[scale=0.6]
				\node[circle, draw] (v1) at (90:1) {};
				\node[circle, draw] (v2) at (18:1) {};
				\node[circle, draw] (v3) at (306:1) {};
				\node[circle, draw] (v4) at (234:1) {};
				\node[circle, draw] (v5) at (162:1) {};

				\draw (v1) -- (v2) -- (v3) -- (v4) -- (v5) -- (v1);
				\draw (v1) -- (v3) -- (v5) -- (v2) -- (v4) -- (v1);

				\node[circle, draw] (v6) at (90:1.8) {};
				\node[circle, draw] (v7) at (50:1.8) {};
				\node[circle, draw] (v8) at (130:1.8) {};
				\draw (v7) -- (v1);
				\draw (v8) -- (v1);
				\draw (v6) -- (v7);
				\draw (v8) -- (v6);
			\end{tikzpicture}
			\caption{ $G_{14}$\\$e=14$, $\mathrm{t}(e)=4$}
		\end{subfigure}
		\hfill
		\begin{subfigure}{0.18\textwidth}
			\centering
			\begin{tikzpicture}[scale=0.6]
				\node[circle, draw] (v1) at (90:1) {};
				\node[circle, draw] (v2) at (30:1) {};
				\node[circle, draw] (v3) at (330:1) {};
				\node[circle, draw] (v4) at (270:1) {};
				\node[circle, draw] (v5) at (210:1) {};
				\node[circle, draw] (v6) at (150:1) {};

				\draw (v1) -- (v2);
				\draw (v1) -- (v3);
				\draw (v1) -- (v4);
				\draw (v1) -- (v5);
				\draw (v1) -- (v6);
				\draw (v2) -- (v3);
				\draw (v2) -- (v4);
				\draw (v2) -- (v5);
				\draw (v2) -- (v6);
				\draw (v3) -- (v4);
				\draw (v3) -- (v5);
				\draw (v3) -- (v6);
				\draw (v4) -- (v5);
				\draw (v4) -- (v6);
				\draw (v5) -- (v6);
			\end{tikzpicture}
			\caption{ $G_{15}$ ($\completegraphp{6}$)\\$e=15$, $\mathrm{t}(e)=5$}
		\end{subfigure}

		\caption{Graphs $G_1$ to $G_{15}$ with increasing edges and maximum treewidth.}
		\label{fig:treewidth_table}
	\end{figure}
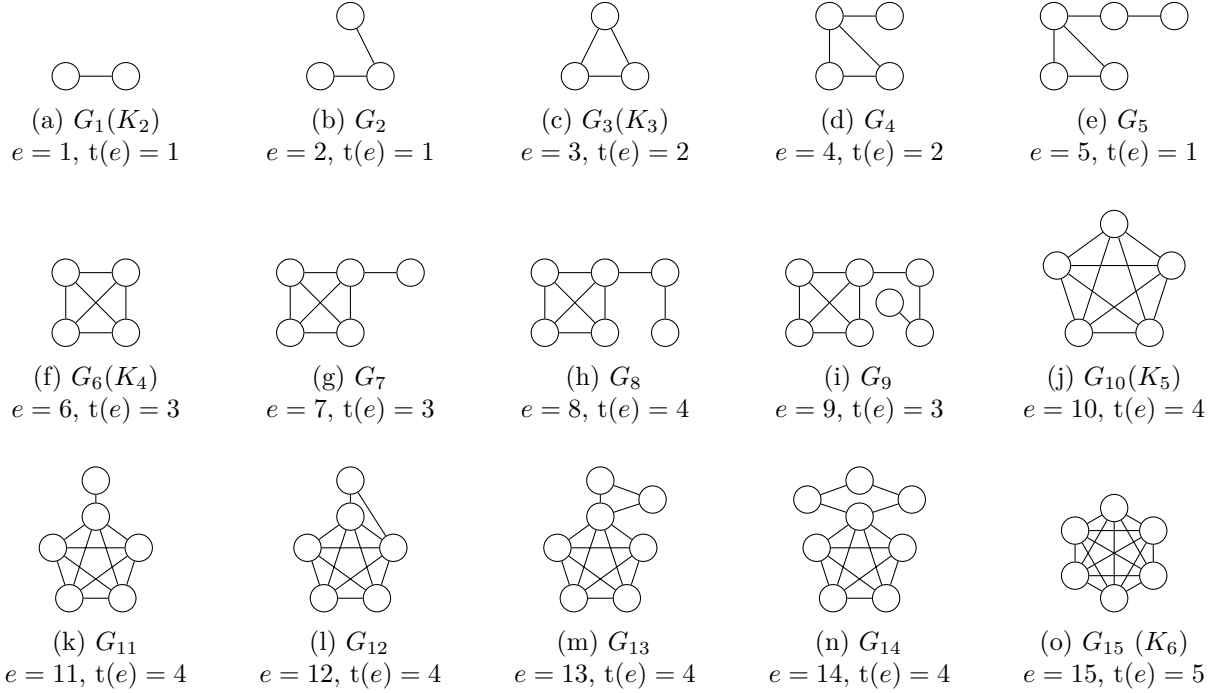

	\section{Supplementary Results of Section~\ref{sec:example_hoif}}
	\label{app:HOIF}

	The following table lists the total number of $V$-statistics and the number of floating-point operations (FLOPs) that need to be computed before and after applying the sparsification trick when the sample size is $10^4$. The rightmost column of Table~\ref{tab:HOIF} also records the value of $M$ defined in Corollary~\ref{cor:U}.

	\begin{table}[htbp]
		\centering
		\caption{Exact number of floating-point operations (FLOPs), $V$-statistic decompositions, intermediate tensor sizes,
			and the maximum treewidth $M$ across all decomposition graphs
			for HOIF orders from $2$ to $12$, with and without the sparsification trick, where the \Einsum{} ordering is found by applying \opteinsum{} with optimization scheme \texttt{greedy}. Sample size is fixed at $n=10000$.}
		\label{tab:HOIF}
		\begin{tabular}{cccccc}
			\toprule
			\textbf{Order ($m$)} & \makecell{\textbf{V-stat terms}                                                                  \\ \textbf{(Bell number)}} & \makecell{\textbf{V-stat terms} \\ \textbf{(sparsification)}} & \makecell{\textbf{FLOPs} \\ \textbf{(initial)}} & \makecell{\textbf{FLOPs} \\ \textbf{(sparsification)}} & \makecell{$M$} \\
			\midrule
			2                    & 2                               & 1      & $2.00020\times10^8$    & $2.00000\times10^8$    & $1$ \\
			3                    & 5                               & 2      & $2.00060\times10^{12}$ & $2.00020\times10^{12}$ & $1$ \\
			4                    & 15                              & 5      & $8.00310\times10^{12}$ & $4.00150\times10^{12}$ & $2$ \\
			5                    & 52                              & 15     & $5.20133\times10^{13}$ & $2.60045\times10^{13}$ & $2$ \\
			6                    & 203                             & 52     & $2.94069\times10^{14}$ & $1.20020\times10^{14}$ & $2$ \\
			7                    & 877                             & 203    & $1.78236\times10^{15}$ & $6.32091\times10^{14}$ & $2$ \\
			8                    & 4140                            & 877    & $2.31054\times10^{17}$ & $2.23430\times10^{17}$ & $3$ \\
			9                    & 21147                           & 4140   & $5.29126\times10^{18}$ & $3.47959\times10^{18}$ & $3$ \\
			10                   & 115975                          & 21147  & $2.07990\times10^{21}$ & $2.04048\times10^{21}$ & $3$ \\
			11                   & 678570                          & 115975 & $5.50026\times10^{22}$ & $3.72144\times10^{22}$ & $4$ \\
			12                   & 4213597                         & 678570 & $1.13401\times10^{24}$ & $6.62181\times10^{23}$ & $4$ \\
			\bottomrule
		\end{tabular}
	\end{table}

	We can derive the sharp treewidth bounds presented in Table~\ref{tab:HOIF} using similar techniques to the above in Appendix~\ref{app:treewidtg_table}. For each $m \in \{2, 3, \cdots, 12\}$, the corresponding decomposition signature takes the following form: $\calA_{m} = ((1, 2), \cdots, (m - 1, m))$. Hence by Definition~\ref{def:graph_decomp}, the corresponding decomposition graph $G_{\calA_{m}}$ is the graph with $V(G_{\calA_{m}}) = [m]$, $E(G_{\calA_{m}}) = \{ \{i, i+1\} \}_{i=1}^{m-1}$, whose structure is illustrated in Figure~\ref{fig:hoif_graph_m}.

	The result is formally stated in the following observation.
	\begin{observation}
		\label{obs:HOIF}
		For $m = 2,3,\cdots, 12$, let $\calA_{m} = ((1,2),\cdots,(m-1,m))$, and $G_{\calA_m}$ be the corresponding decomposition graph. Similar to Corollary~\ref{cor:U}, $M(m)$ is defined as
		\begin{align*}
			M(m) \coloneqq \max \{ \treewidthp{G_{\calA_m} /\pi} \mid \pi \in \partitionp{m}^{\calA_m}\}.
		\end{align*}
		Then the following hold:
		\begin{equation}
			\label{HOIF_M}
			\begin{split}
				M (m) = \left\{ \begin{array}{ll}
					                1, & m \in \{2, 3\},       \\
					                2, & m \in \{4, 5, 6, 7\}, \\
					                3, & m \in \{8, 9, 10\},   \\
					                4, & m \in \{11, 12\}.
				                \end{array} \right.
			\end{split}
		\end{equation}
	\end{observation}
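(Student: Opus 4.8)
The plan is to determine $M(m)$ by squeezing it between matching lower and upper bounds, using three ingredients: the explicit description of $\partitionp{m}^{\calA_m}$, the edge-indexed treewidth table of Observation~\ref{pro:treewidth_table}, and a monotonicity relation $M(m)\le M(m+1)$.

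First I would unwind the definitions. Since $G_{\calA_m}$ is the path on $[m]$ with edge set $\{\,\{i,i+1\}: i\in[m-1]\,\}$, the membership condition for $\partitionp{m}^{\calA_m}$ from \eqref{eq:quotient_set} says exactly that no block of $\pi$ contains two consecutive integers; equivalently, the block-assignment map $\phi\colon[m]\to\pi$ is a proper vertex coloring of the path. Consequently $G_{\calA_m}/\pi$ is the ``color-adjacency graph'': its vertices are the nonempty color classes, and its edges are the pairs $\{\phi(i),\phi(i+1)\}$ for $i\in[m-1]$. Two elementary consequences follow at once: $G_{\calA_m}/\pi$ has at most $m-1$ edges, and for $m\ge 2$ it has no isolated vertex (every integer has a path-neighbor lying in a different class).

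For the upper bounds I would combine the edge count with Observation~\ref{pro:treewidth_table}: $\treewidthp{G_{\calA_m}/\pi}\le \mathrm{t}(m-1)$ for every admissible $\pi$. Reading off the table pins down the claimed value of $M(m)$ immediately for $m\in\{2,3,4,5,6,8,9,10,11,12\}$. The lone exception is $m=7$, where $\mathrm{t}(6)=3>2$; resolving this is the main obstacle. To handle it I would invoke the standard fact that treewidth $\ge 3$ is equivalent to having a $K_4$-minor: since $K_4$ is cubic, a $K_4$-minor yields a subdivided copy of $K_4$ as a subgraph, which already has $\ge 6$ edges, so a graph with $\le 6$ edges and treewidth $\ge 3$ must be $K_4$ together with isolated vertices. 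As $G_{\calA_7}/\pi$ has no isolated vertex, $\treewidthp{G_{\calA_7}/\pi}\ge 3$ would force $G_{\calA_7}/\pi=K_4$; then the six edges of the path $P_7$ map bijectively onto the six edges of $K_4$, so $(\phi(1),\ldots,\phi(7))$ is an Eulerian trail of $K_4$. But $K_4$ has four odd-degree vertices, hence no Eulerian trail — a contradiction. Thus $M(7)\le 2$. (This same parity obstruction is precisely why $m=7$ is the sharp threshold below which $K_4$ cannot be realized.)

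Finally I would supply the lower bounds only at the jump points, after recording the monotonicity $M(m)\le M(m+1)$: adjoin the singleton block $\{m+1\}$ to a maximizing $\pi^{\ast}\in\partitionp{m}^{\calA_m}$ to form $\pi'$; then $\pi'\in\partitionp{m+1}^{\calA_{m+1}}$, and $G_{\calA_{m+1}}/\pi'$ is $G_{\calA_m}/\pi^{\ast}$ with one pendant vertex appended, so its treewidth is at least $M(m)$ by subgraph monotonicity (Lemma~\ref{lem:simple_bounds_on_treewidth}). For $m=2$, the finest partition makes the quotient $K_2$, so $M(2)\ge 1$. For the other jump points I would exhibit explicit proper colorings of the path whose color-adjacency graph is complete: $\pi=\{\{1,4\},\{2\},\{3\}\}$ gives $K_3$ for $m=4$; $\pi=\{\{1,7\},\{2,4\},\{3,6\},\{5,8\}\}$ gives $K_4$ for $m=8$ (the color sequence is a length-$7$ walk covering all six edges of $K_4$, possible after duplicating one edge to defeat the parity obstruction); and $\pi=\{\{1,4,11\},\{2,6\},\{3,8\},\{5,9\},\{7,10\}\}$ gives $K_5$ for $m=11$ (the color sequence is an Eulerian circuit of $K_5$, which exists since $K_5$ is $4$-regular). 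Using $\treewidthp{K_n}=n-1$, these give $M(4)\ge 2$, $M(8)\ge 3$, $M(11)\ge 4$. Chaining with monotonicity — for instance $2=M(4)\le M(5)\le M(6)\le M(7)\le 2$, and likewise on the remaining ranges — yields \eqref{HOIF_M}.
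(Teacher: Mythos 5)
Your proof is correct and follows the same overall skeleton as the paper's: upper bounds from the edge count $|E(G_{\calA_m}/\pi)|\le m-1$ combined with Observation~\ref{pro:treewidth_table}, explicit partitions for the lower bounds, and the Euler-trail parity obstruction to dispose of the exceptional case $m=7$ (your parity argument is essentially identical to the paper's Lemma~\ref{lem:HOIF_table_lem_2}). Two of your sub-steps differ in a way worth noting. First, the paper exhibits an explicit partition for every $m$ from $2$ to $12$ (its Table~\ref{tab:HOIF_lower_bound}), whereas you prove the monotonicity $M(m)\le M(m+1)$ by appending the singleton block $\{m+1\}$ and invoking subgraph monotonicity of treewidth, so you only need constructions at the jump points $m=2,4,8,11$; I checked your partitions for $m=8$ and $m=11$ and they do yield $K_4$ and $K_5$ respectively, so this is a genuine economy. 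Second, for the key fact that a graph with at most six edges and no isolated vertices has treewidth $3$ only if it equals $K_4$, the paper derives this internally from its recurrences for $\mathrm{tt}(n,e)$ (Lemma~\ref{lem:HOIF_table_lem_1}), while you import the classical characterization that treewidth at least $3$ is equivalent to containing a $K_4$-minor, together with the observation that a $K_4$-minor in a cubic target forces a topological $K_4$, hence at least six edges. Your route is shorter but leans on an external classical theorem not stated in the paper; the paper's route is longer but self-contained. Both are valid, and your use of the no-isolated-vertex property of the quotient (which you correctly justify from the proper-coloring description of $\partitionp{m}^{\calA_m}$) closes the gap between ``$K_4$ plus isolated vertices'' and ``$K_4$ exactly'' that the bijection-onto-six-edges step requires.
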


	\begin{proof}
		Let $\graphset^{\mathsf{HOIF}}_{m} \coloneqq \{ G_{\calA_m} /\pi \mid \pi \in \partitionp{m}^{\calA_m} \}$ then $M(m) \equiv \max \{ \treewidthp{G} \mid G \in \graphset^{\mathsf{HOIF}}_{m} \}$ and it holds that
		\begin{equation*}
			| E(G) | \leq m-1, \; \forall \, G \in \graphset^{\mathsf{HOIF}}_{m}.
		\end{equation*}
		A direct application of Observation~\ref{pro:treewidth_table} gives us the following upper bounds: $M(m) \le \mathrm{t}(m-1)$, summarized in Table~\ref{tab:HOIF_upper_bound}. The desired result then follows from a characterization of the lower bound of $M (m)$.

		\begin{table}[ht]
			\centering
			\caption{Upper bounds of $M(m)$ for HOIFs: via $M(m) \leq \mathrm{t}(m-1)$. }
			\label{tab:HOIF_upper_bound}
			\begin{tabular}{ccc}
				\toprule
				$m$         & $e$        & $\mathrm{t}(e)$ \\
				\midrule
				2, 3        & 1          & 1               \\
				4, 5, 6     & 3, 4, 5    & 2               \\
				7, 8, 9, 10 & 6, 7, 8, 9 & 3               \\
				11, 12      & 10, 11     & 4               \\
				\bottomrule
			\end{tabular}
		\end{table}

		We observe that except for $m=7$, the upper bounds match the claimed results in \eqref{HOIF_M}. The lower bounds of $M(m)$ can be obtained by enlisting some concrete examples, which are given in Table~\ref{tab:HOIF_lower_bound}.

		\begin{table}[h]
			\centering
			\caption{Lowers bound of $M(m)$ for HOIFs: via finding the exemplified graph.}
			\label{tab:HOIF_lower_bound}
			\begin{tabular}{c c c c}
				\toprule
				$m$ & $\pi$                                                             & Graph $\graphform{\calA_{m}} /\pi$              & $\treewidthp{\graphform{\calA_{m}} /\pi}$ \\
				\midrule
				2   & \{\{1\}, \{2\}\}                                                  & Figure~\ref{fig:graph_e_1}: $G_1$               & 1                                         \\
				3   & \{\{1\}, \{2\}, \{3\}\}                                           & Figure~\ref{fig:graph_e_2}: $G_2$               & 1                                         \\
				4   & \{\{1, 4\}, \{2\}, \{3\}\}                                        & Figure~\ref{fig:graph_e_3}: $G_3$               & 2                                         \\
				5   & \{\{1, 4\}, \{2\}, \{3\}, \{5\}\}                                 & Figure~\ref{fig:graph_e_4}: $G_4$               & 2                                         \\
				6   & \{\{1, 4\}, \{2\}, \{3\}, \{5\}, \{6\}\}                          & Figure~\ref{fig:graph_e_5}: $G_5$               & 2                                         \\
				7   & \{\{1, 4\}, \{2\}, \{3\}, \{5\}, \{6\}, \{7\}\}                   & Figure~\ref{fig:graph_e_6_hoif}: $G_6^{\prime}$ & 2                                         \\
				8   & \{\{1, 4\}, \{2, 6\}, \{3, 7\}, \{5, 8\}\}                        & Figure~\ref{fig:graph_e_7}: $G_7$               & 3                                         \\
				9   & \{\{1, 4\}, \{2, 6\}, \{3, 7\}, \{5, 8\}, \{9\}\}                 & Figure~\ref{fig:graph_e_8}: $G_8$               & 3                                         \\
				10  & \{\{1, 4\}, \{2, 6\}, \{3, 7\}, \{5, 8\}, \{9\}, \{10\}\}         & Figure~\ref{fig:graph_e_9}: $G_9$               & 3                                         \\
				11  & \{\{1, 6, 11\}, \{2, 9\}, \{3, 7\}, \{4, 10\}, \{5, 8\}\}         & Figure~\ref{fig:graph_e_10}: $G_{10}$           & 4                                         \\
				12  & \{\{1, 6, 11\}, \{2, 9\}, \{3, 7\}, \{4, 10\}, \{5, 8\}, \{12\}\} & Figure~\ref{fig:graph_e_11}: $G_{11}$           & 4                                         \\
				\bottomrule
			\end{tabular}
		\end{table}

		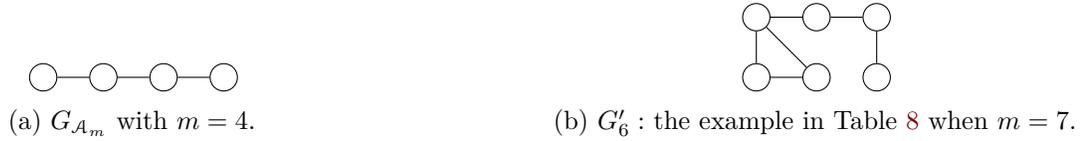
\begin{figure}[h!]
			\centering
			\begin{subfigure}{0.45\textwidth}
				\centering
				\begin{tikzpicture}[scale=0.8]
					\node[circle, draw] (p1) at (0,0) {};
					\node[circle, draw] (p2) at (1,0) {};
					\node[circle, draw] (p3) at (2,0) {};
					\node[circle, draw] (p4) at (3,0) {};
					\draw (p1) -- (p2) -- (p3);
					\draw (p3) -- (p4);
				\end{tikzpicture}
				\caption{$\graphform{\calA_m}$ with $m=4$.}
				\label{fig:hoif_graph_m}
			\end{subfigure}
			\hfill
			\begin{subfigure}{0.45\textwidth}
				\centering
				\begin{tikzpicture}[scale=0.8]
					\node[circle, draw] (a) at (0,0) {};
					\node[circle, draw] (b) at (1,0) {};
					\node[circle, draw] (c) at (1,1) {};
					\node[circle, draw] (d) at (0,1) {};
					\node[circle, draw] (e) at (2,1) {};
					\node[circle, draw] (f) at (2,0) {};
					\draw (a) -- (b);
					\draw (b) -- (d);
					\draw (c) -- (d);
					\draw (a) -- (d);
					\draw (c) -- (e);
					\draw (e) -- (f);
				\end{tikzpicture}
				\caption{$G^{\prime}_6:$ the example in Table~\ref{tab:HOIF_lower_bound} when $m=7$.}
				\label{fig:graph_e_6_hoif}
			\end{subfigure}
			\caption{Two decomposition graphs used in Appendix~\ref{app:HOIF}.}
		\end{figure}

		Combining Table~\ref{tab:HOIF_upper_bound} and Table~\ref{tab:HOIF_lower_bound}, we can conclude that, except for $m = 7$, the upper bound (given in Table~\ref{tab:HOIF_upper_bound}) and the lower bound (given in Table~\ref{tab:HOIF_lower_bound}) of $M (m)$ agree. For $m = 7$, we have $M (7) \in \{2, 3\}$. In the sequel, we will show that $M (7) < 3$ so the claim \eqref{HOIF_M} in Observation~\ref{obs:HOIF} holds.

		To this end, we will first establish a lemma (Lemma~\ref{lem:HOIF_table_lem_1}) stating that a simple graph with at most $6$ edges has treewidth $3$ if and only if it is $\completegraphp{4}$ (i.e. the complete graph with $4$ vertices). Then, another lemma (Lemma~\ref{lem:HOIF_table_lem_2}) will show that the set of all quotient graphs of $\graphform{\calA_7}$ does not contain $\completegraphp{4}$. The proof is then complete.
	\end{proof}

	We first state and prove Lemma~\ref{lem:HOIF_table_lem_1}.
	\begin{lemma}
		\label{lem:HOIF_table_lem_1}
		Recall the definitions of $\mathrm{tt}(n, e)$ and $\graphsetne{n}{e}$ in Lemma~\ref{lem:property_of_tt}. When $1\le e \le  6$, $\mathrm{tt}(n, e) = 3$ if and only if $n = 4, e = 6$, and otherwise $\mathrm{tt}(n, e) < 3$. Moreover, $\graphsetne{4}{6} = \{\completegraphp{4}\}$ up to isomorphism.
	\end{lemma}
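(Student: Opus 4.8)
The plan is to obtain this lemma as an essentially immediate corollary of the treewidth values already tabulated in Observation~\ref{pro:treewidth_table} together with the structural facts in Lemma~\ref{lem:property_of_tt}; no new graph-theoretic construction is needed. Throughout I restrict attention to pairs $(n,e)$ with $n \in \mathrm{n}(e)$, since otherwise $\graphsetne{n}{e} = \emptyset$ and there is nothing to prove.

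First I would dispose of the range $1 \le e \le 5$. By the representation property of Lemma~\ref{lem:property_of_tt}, $\mathrm{tt}(n,e) \le \mathrm{t}(e) = \max_{n' \in \mathrm{n}(e)} \mathrm{tt}(n',e)$, and Observation~\ref{pro:treewidth_table} gives $\mathrm{t}(e) \le 2$ for every $e \in \{1,2,3,4,5\}$. Hence $\mathrm{tt}(n,e) \le 2 < 3$ on this entire range, which settles the ``otherwise'' clause there and rules out treewidth $3$.

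Next I turn to $e = 6$, where $\mathrm{n}(6) = \{4,5,\dots,12\}$. For $n = 4$ we have $\binom{4}{2} = 6$, so every potential edge must be present: $\graphsetne{4}{6}$ consists, up to isomorphism, of the single graph $\completegraphp{4}$, which has no isolated vertices — this proves the last sentence of the lemma. By Property~4 of Lemma~\ref{lem:property_of_tt}, $\mathrm{tt}(4,6) = \mathrm{tt}(4,\binom{4}{2}) = 4 - 1 = 3$, giving the ``if'' direction. For $n \ge 5$, the average degree is $\tfrac{2e}{n} = \tfrac{12}{n}$, which lies in $[1,3)$ precisely because $5 \le n \le 12$; applying Case~(2) of Property~5 of Lemma~\ref{lem:property_of_tt} (with $e = 6 > 2$) yields $\mathrm{tt}(n,6) \le \max\{2,\mathrm{t}(5)\} = \max\{2,2\} = 2 < 3$. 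Combining the two subcases, among pairs with $e = 6$ only $(n,e) = (4,6)$ attains treewidth $3$, and all others have $\mathrm{tt}(n,e) \le 2$. Putting the two ranges together gives the claimed equivalence.

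I do not expect a genuine obstacle here — the lemma is essentially a repackaging of earlier results — so the only points that need care are bookkeeping: (i) verifying that the hypothesis $1 \le \tfrac{2e}{n} < 3$ of Case~(2) holds uniformly for every $n \in \{5,\dots,12\}$ when $e = 6$, so the recurrence applies across the whole relevant range of $n$; and (ii) noting that $n \in \mathrm{n}(6)$ forces $n \ge 4$, so no pair with $n \le 3$ and $e = 6$ needs to be examined.
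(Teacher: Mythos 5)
Your proposal is correct and follows essentially the same route as the paper's proof: dispose of $1\le e\le 5$ via $\mathrm{t}(e)\le 2$ from Observation~\ref{pro:treewidth_table}, handle $(n,e)=(4,6)$ via the complete-graph property, and bound $\mathrm{tt}(n,6)$ for $n\ge 5$ by Case~(2) of Property~5 using $\tfrac{12}{n}\in[1,3)$. The bookkeeping you flag (the range of $n$ in $\mathrm{n}(6)$ and the average-degree check) matches the paper's argument exactly.
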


	\begin{proof}
		We adopt the same notation as in Lemma~\ref{lem:property_of_tt}.

		First, by Observation~\ref{pro:treewidth_table}, we know $\mathrm{t} (e) < 3$ when $1 \le e \le 5$, and $\mathrm{t}(6) = 3$. Thus we only need to consider the case $e = 6$. Note that the support set of $\graphsetne{n}{6}$ is $\mathrm{n}(6) = \{4,5,\dots,12\}$.

		For $n = 4$, we have $\binom{4}{2} = 6$, and by Property~4 in Lemma~\ref{lem:property_of_tt}, $\mathrm{tt}(4, 6) = 3$ directly follows.

		For any $n \in \mathrm{n}(6) \setminus \{4\}$, we have $1 \le \frac{2 \cdot 6}{n} < 3$.  Therefore, $\mathrm{tt}(n, 6) \le \max\{ 2, \mathrm{t}(5) \} < 3$. This proves the first claim.

		For the second claim, note that any simple graph with $4$ vertices can have at most $\binom{4}{2} = 6$ edges, and the only such graph (up to isomorphism) is $\completegraphp{4}$.
	\end{proof}

	Lastly, we state and prove Lemma~\ref{lem:HOIF_table_lem_2}.

	\begin{lemma}
		\label{lem:HOIF_table_lem_2}
		Let $\graphform{\calA_7}$ be the decomposition graph of decomposition signature $\calA_7 = ((i,i+1))_{i=1}^6$. Then no quotient graph of $\graphform{\calA_7}$ is $\completegraphp{4}$ or contains $\completegraphp{4}$ as a subgraph.
	\end{lemma}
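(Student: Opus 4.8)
The plan is to reduce the statement to the classical fact that $\completegraphp{4}$ has no Eulerian trail. First I would observe that $\graphform{\calA_7}$ is exactly the path graph on the vertex set $\{1,\dots,7\}$ whose edge set consists of the six pairs $\{i,i+1\}$, $i\in[6]$. Since, by Definition~\ref{def:quotient_graph}, every edge of a quotient graph $\graphform{\calA_7}/\pi$ is induced by some edge of $\graphform{\calA_7}$, any quotient graph of $\graphform{\calA_7}$ has at most six edges; in particular, a quotient graph containing a copy of $\completegraphp{4}$ (which has six edges) must have exactly six edges, all of which form that $\completegraphp{4}$.

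Next, suppose toward a contradiction that for some partition $\pi$ of $[7]$ the quotient graph $\graphform{\calA_7}/\pi$ contains $\completegraphp{4}$ on blocks $B_1,B_2,B_3,B_4\in\pi$. Each of the $\binom{4}{2}=6$ pairs $\{B_a,B_b\}$ must be realized by some edge $\{i,i+1\}$ of $\graphform{\calA_7}$ with its two endpoints in $B_a$ and $B_b$ respectively, and a single edge of $\graphform{\calA_7}$ realizes at most one such pair (and none at all when both of its endpoints lie in the same block). As $\graphform{\calA_7}$ has exactly six edges, this sets up a bijection between the edges of $\graphform{\calA_7}$ and the six pairs drawn from $\{B_1,\dots,B_4\}$. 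I would then extract two consequences: (i) no $B_a$ contains two consecutive integers, since otherwise the corresponding edge would realize no pair and we cannot afford to waste an edge; hence every edge of $\graphform{\calA_7}$ joins two distinct blocks among $B_1,\dots,B_4$; and (ii) every vertex of $\{1,\dots,7\}$ is an endpoint of some edge and therefore lies in $B_1\cup\cdots\cup B_4$, so $\pi=\{B_1,B_2,B_3,B_4\}$.

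Finally, writing $b_i\in[4]$ for the index with $i\in B_{b_i}$, the sequence $b_1,b_2,\dots,b_7$ is a walk in $\completegraphp{4}$ whose six steps $\{b_i,b_{i+1}\}$, by the bijection above, are exactly the six edges of $\completegraphp{4}$, each traversed once --- that is, an Eulerian trail of $\completegraphp{4}$. But every vertex of $\completegraphp{4}$ has the odd degree $3$, so $\completegraphp{4}$ has four odd-degree vertices and admits no Eulerian trail, a contradiction. Hence no quotient graph of $\graphform{\calA_7}$ contains $\completegraphp{4}$ as a subgraph, and in particular none equals $\completegraphp{4}$. This is a short counting-plus-parity argument with no real obstacle; the one point that needs care is step (ii), namely verifying that the presence of a $\completegraphp{4}$ subgraph actually pins down the entire partition --- that the four blocks exhaust $[7]$ and none contains consecutive integers --- which is precisely what promotes the induced block-labeling of $1,\dots,7$ into a genuine Eulerian trail rather than a mere edge-covering walk.
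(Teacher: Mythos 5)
Your proof is correct and follows essentially the same route as the paper's: an edge-counting argument forces the six path edges to biject with the six pairs of blocks, the block-labeling of $1,\dots,7$ then yields an Eulerian trail of $\completegraphp{4}$, and the parity of the degrees (all odd) gives the contradiction. Your step (ii), pinning down that the four blocks exhaust $[7]$, is a slightly more explicit treatment of the reduction the paper makes when it rules out $\completegraphp{4}$ as a proper subgraph, but the substance is identical.
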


	\begin{proof}
		Consider a partition $\pi = \{Q_1, Q_2, \dots, Q_K\} \in \partitionp{7}$ of the vertex set of $\graphform{\calA_7}$, as defined in Definition~\ref{def:quotient_graph}.
		The quotient graph $\graphform{\calA_7} / \pi$ has $|\pi| = K \le 7$ vertices, with each vertex corresponding to a subset $Q_k$ for $k \in [K]$ of the partition $\pi$. An edge exists between $Q_i$ and $Q_j$ ($i \neq j$) in the quotient graph if and only if there is at least one edge in $\graphform{\calA_7}$ connecting a vertex in $Q_i$ to a vertex in $Q_j$.

		We observe that $|\edgesp{\graphform{\calA_7} / \pi}| \le |\edgesp{\graphform{\calA_7}}| = 6$ for all $\pi \in \partitionp{7}$, and the inequality is strict whenever any subset $Q_k$ contains adjacent vertices in $\graphform{\calA_7}$.

		Since $|\edges(\graphform{\calA_7})| = 6$ and $|\edges(\completegraphp{4})| = \binom{4}{2} = 6$, no quotient graph of $\graphform{\calA_7}$ can contain $\completegraphp{4}$ as a proper subgraph (i.e. a subgraph that is not the original graph).
		Therefore, it suffices to consider the case where $\completegraphp{4}$ is isomorphic to a quotient graph of $\graphform{\calA_7}$.
		For simplicity, we write ``$=$'' to denote graph isomorphism.

		Suppose that there exists a partition $\pi^{*} = \{ Q_1, Q_2, Q_3, Q_4 \} \in \Pi_7$ such that $\graphform{\calA_7} / \pi^{*} = \completegraphp{4}$.
		Then, $\pi^{*}$ must satisfy the following conditions:

		\begin{enumerate}
			\item No subset $Q_i$ contains adjacent numbers, i.e., $(k, k+1)$ for $k \in \{1,2,\dots,6\}$ and $i \in [4]$;
			\item For any $i \neq j$, there exist $k \in Q_i$ and $k' \in Q_j$ such that $|k - k'| = 1$.
		\end{enumerate}

		The first condition ensures that $|\edgesp{\graphform{\calA_7} / \pi}| = 6$.
		The second condition ensures that every pair of subsets is connected by at least one edge in the quotient graph.
		Since $\graphform{\calA_7}$ has only $6$ edges, each edge must correspond uniquely to a distinct pair of subsets, covering all $6$ edges of $\completegraphp{4}$.

		Define a map $c: [7] \to [4]$ by
		\begin{equation*}
			c(k) = i \quad \text{for all } k \in Q_i.
		\end{equation*}
		By above analysis, all 6 edges in $\graphform{\calA_7} / \pi^*$ must be
		\begin{equation*}
			\{ Q_{c(1)}, Q_{c(2)} \}, \
			\{ Q_{c(2)}, Q_{c(3)} \}, \
			\{ Q_{c(3)}, Q_{c(4)} \}, \
			\{ Q_{c(4)}, Q_{c(5)} \}, \
			\{ Q_{c(5)}, Q_{c(6)} \}, \
			\{ Q_{c(6)}, Q_{c(7)} \},
		\end{equation*}

		which forms an Euler trail (i.e., a sequence of edges in which consecutive edges share a common endpoint, visiting every edge of the graph exactly once without repetition.) in the quotient graph $\graphform{\calA_7} / \pi^* = \completegraphp{4}$.

		However, by Corollary~4.1 in \citet{bondy1976graph},
		a connected undirected graph admits an Euler trail if and only if it has exactly $0$ or $2$ vertices of odd degree.
		Since $\completegraphp{4}$ has $4$ vertices, each of degree $3$, all vertices have odd degrees, and thus no Euler trail exists.
		This contradiction shows that no partition $\pi^*$ can produce $\completegraphp{4}$ as a quotient graph of $\graphform{\calA_7}$.
	\end{proof}





	\section{Supplementary Results of Section~\ref{sec:example_motif}}
	\label{app:motifs}

	In this section, we first present the $U$-statistic representations for all 3-vertex and 4-vertex motifs mentioned in Section~\ref{sec:example_motif}. Letting $B \equiv 1 - A$, the corresponding $U$-statistics for motif counts for specific $3$-vertex motifs and $4$-vertex motifs (see Figure~\ref{fig:motif-3-4} for which motifs $\sfr_1$ to $\sfr_8$ encode) read as follows:
	\begin{align*}
		\motifrg{\sfr_1}{G} & = \frac{1}{2} \sum_{ \bar{i}_{3} \in \perm{n,3}}  A_{i_1, i_2} A_{i_2, i_3}  B_{i_3, i_1},                                        \\ \motifrg{\sfr_2}{G} & = \frac{1}{6} \sum_{ \bar{i}_{3} \in \perm{n,3}}  A_{i_1, i_2} A_{i_2, i_3} A_{i_3, i_1}, \\
		\motifrg{\sfr_3}{G} & = \frac{1}{6} \sum_{ \bar{i}_{4} \in \perm{n,4}}  A_{i_1, i_2} A_{i_1, i_3} A_{i_1, i_4} B_{i_2, i_3} B_{i_2, i_4} B_{i_3, i_4},  \\
		\motifrg{\sfr_4}{G} & = \frac{1}{2} \sum_{ \bar{i}_{4} \in \perm{n,4}}  A_{i_1, i_2} B_{i_1, i_3} B_{i_1, i_4} A_{i_2, i_3} B_{i_2, i_4} A_{i_3, i_4},  \\
		\motifrg{\sfr_5}{G} & = \frac{1}{2} \sum_{ \bar{i}_{4} \in \perm{n,4}}  A_{i_1, i_2} A_{i_1, i_3} B_{i_1, i_4} A_{i_2, i_3} B_{i_2, i_4} A_{i_3, i_4},  \\
		\motifrg{\sfr_6}{G} & = \frac{1}{8} \sum_{ \bar{i}_{4} \in \perm{n,4}}  A_{i_1, i_2} B_{i_1, i_3} A_{i_1, i_4} A_{i_2, i_3} B_{i_2, i_4} A_{i_3, i_4},  \\
		\motifrg{\sfr_7}{G} & = \frac{1}{4} \sum_{ \bar{i}_{4} \in \perm{n,4}}  A_{i_1, i_2} A_{i_1, i_3} A_{i_1, i_4} A_{i_2, i_3} A_{i_2, i_4} B_{i_3, i_4},  \\
		\motifrg{\sfr_8}{G} & = \frac{1}{24} \sum_{ \bar{i}_{4} \in \perm{n,4}}  A_{i_1, i_2} A_{i_1, i_3} A_{i_1, i_4} A_{i_2, i_3} A_{i_2, i_4} A_{i_3, i_4}.
	\end{align*}

	\begin{figure}[ht]
		\centering

		\begin{subfigure}{0.22\textwidth}
			\centering
			\begin{tikzpicture}
				\node[circle,draw](a) at (0,1) {};
				\node[circle,draw](b) at (-0.866,-0.5) {};
				\node[circle,draw](c) at (0.866,-0.5) {};
				\draw (a) -- (b);
				\draw (a) -- (c);
			\end{tikzpicture}
			\caption{$\sfr_1$: V-shape}
		\end{subfigure}
		\hfill
		\begin{subfigure}{0.22\textwidth}
			\centering
			\begin{tikzpicture}
				\node[circle,draw](a) at (0,1) {};
				\node[circle,draw](b) at (-0.866,-0.5) {};
				\node[circle,draw](c) at (0.866,-0.5) {};
				\draw (a) -- (b);
				\draw (a) -- (c);
				\draw (b) -- (c);
			\end{tikzpicture}
			\caption{$\sfr_2$: Triangle}
		\end{subfigure}
		\hfill
		\begin{subfigure}{0.22\textwidth}
			\centering
			\begin{tikzpicture}
				\node[circle,draw](a) at (0,1) {};
				\node[circle,draw](b) at (-1,0) {};
				\node[circle,draw](c) at (1,0) {};
				\node[circle,draw](d) at (0,-1) {};
				\draw (a) -- (b);
				\draw (a) -- (c);
				\draw (a) -- (d);
			\end{tikzpicture}
			\caption{$\sfr_3$ : 3-star}
		\end{subfigure}
		\hfill
		\begin{subfigure}{0.22\textwidth}
			\centering
			\begin{tikzpicture}
				\node[circle,draw](a) at (0,1) {};
				\node[circle,draw](b) at (-1,0) {};
				\node[circle,draw](c) at (1,0) {};
				\node[circle,draw](d) at (0,-1) {};
				\draw (a) -- (b);
				\draw (a) -- (c);
				\draw (b) -- (d);
			\end{tikzpicture}
			\caption{$\sfr_4$ : Fork}
		\end{subfigure}

		\vspace{1em}

		\begin{subfigure}{0.22\textwidth}
			\centering
			\begin{tikzpicture}
				\node[circle,draw](a) at (0,1) {};
				\node[circle,draw](b) at (-1,0) {};
				\node[circle,draw](c) at (1,0) {};
				\node[circle,draw](d) at (0,-1) {};
				\draw (a) -- (b);
				\draw (a) -- (c);
				\draw (a) -- (d);
				\draw (b) -- (d);
			\end{tikzpicture}
			\caption{$\sfr_5$ : Tailed triangle}
		\end{subfigure}
		\hfill
		\begin{subfigure}{0.22\textwidth}
			\centering
			\begin{tikzpicture}
				\node[circle,draw](a) at (0,1) {};
				\node[circle,draw](b) at (-1,0) {};
				\node[circle,draw](c) at (1,0) {};
				\node[circle,draw](d) at (0,-1) {};
				\draw (a) -- (b);
				\draw (a) -- (c);
				\draw (b) -- (d);
				\draw (c) -- (d);
			\end{tikzpicture}
			\caption{$\sfr_6$: Square}
		\end{subfigure}
		\hfill
		\begin{subfigure}{0.22\textwidth}
			\centering
			\begin{tikzpicture}
				\node[circle,draw](a) at (0,1) {};
				\node[circle,draw](b) at (-1,0) {};
				\node[circle,draw](c) at (1,0) {};
				\node[circle,draw](d) at (0,-1) {};
				\draw (a) -- (b);
				\draw (a) -- (c);
				\draw (a) -- (d);
				\draw (b) -- (c);
				\draw (b) -- (d);
			\end{tikzpicture}
			\caption{$\sfr_7$: House}
		\end{subfigure}
		\hfill
		\begin{subfigure}{0.22\textwidth}
			\centering
			\begin{tikzpicture}
				\node[circle,draw](a) at (0,1) {};
				\node[circle,draw](b) at (-1,0) {};
				\node[circle,draw](c) at (1,0) {};
				\node[circle,draw](d) at (0,-1) {};
				\draw (a) -- (b);
				\draw (a) -- (c);
				\draw (a) -- (d);
				\draw (b) -- (c);
				\draw (b) -- (d);
				\draw (c) -- (d);
			\end{tikzpicture}
			\caption{$\sfr_8$ : 4-clique}
		\end{subfigure}

		\caption{All non-degenerate isomorphism classes of 3-vertex and 4-vertex simple undirected graphs.}
		\label{fig:motif-3-4}
	\end{figure}

	Therefore, motif counts for $m$-vertex motifs naturally correspond to a complete decomposition signature $\calA_{\mathsf{Motif},m} = ((s,t))_{1 \le s < t \le m}$, where the associated decomposition graph $\graphform{\calA_{\mathsf{Motif},m}}$ is the complete graph $\completegraphp{m}$.
	This structure has two important implications.

	First, by leveraging our sparsification technique (Lemma~\ref{lem:sparsification_trick} and Remark~\ref{eq:quotient_set}), the computation reduces to a single dominant term, i.e.,
	\begin{align*}
		\partition_m^{\calA_{\mathsf{Motif},m}}
		= \{ \pi_m \},
		\quad \text{where }
		\pi_m \coloneqq \big\{ \{1\}, \{2\}, \ldots, \{m\} \big\}.
	\end{align*}

	Second, this also reveals a fundamental limitation.
	Even after sparsification, the resulting term still induces the same complete decomposition structure,
	\begin{align*}
		\graphform{\calA_{\mathsf{Motif},m}} / \pi_m
		= \completegraphp{m},
	\end{align*}
	and therefore the computational complexity of exact motif counting cannot be reduced below $\less(n^m)$ within our framework.

	We next present the experimental results for 4-vertex motif counting and GPU-based triangle counting, which are summarized in Table~\ref{tab:motif_counts_4} and Table~\ref{tab:triangle_counting_gpu}, respectively. The overall message is quite similar to that in Table~\ref{tab:motif_counts_3} presented in Section~\ref{sec:example_motif}.

	\begin{table}[h!]
		\centering
		\caption{
			Runtime comparison of exact all $4$-vertex motif counts using \package{}, \peregrine{} and \igraph{} on Erd\H{o}s--R\'enyi graphs $G(n, p)$ with $n = 2000$.
			Experiments were run on Intel Xeon Scalable Cascade Lake 6248 CPUs (2.5GHz, 40 total cores) with memory of 192 GB.
			``OOT'' denotes instances that exceeded the time limit of 3600 seconds.
		}
		\label{tab:motif_counts_4}
		\begin{tabular}{cccc}
			\toprule
			\makecell{\textbf{Edge}                \\ \textbf{Prob.} $p$} & \makecell{\textbf{\package{}} \\ \textbf{Time (s)}} & \makecell{\textbf{\peregrine{}} \\ \textbf{Time (s)}} & \makecell{\textbf{\igraph{}} \\ \textbf{Time (s)}} \\
			\midrule
			0.0005 & 3.8413  & 0.1024   & 0.0752   \\
			0.0010 & 18.7463 & 0.1141   & 0.0873   \\
			0.0050 & 41.0730 & 0.1132   & 0.2645   \\
			0.0100 & 40.7701 & 0.1116   & 1.7099   \\
			0.0500 & 41.5188 & 0.2079   & 610.0180 \\
			0.1000 & 40.5648 & 0.8243   & OOT      \\
			0.2000 & 41.2671 & 5.8045   & OOT      \\
			0.4000 & 43.3735 & 48.7363  & OOT      \\
			0.6000 & 39.6022 & 159.4699 & OOT      \\
			0.8000 & 40.6578 & 292.4395 & OOT      \\
			\bottomrule
		\end{tabular}
	\end{table}

	\begin{table}[htbp]
		\centering
		\caption{Runtime comparison of exact triangle counting  using \package{} and \cugraph{} on Erd\H{o}s--R\'enyi graphs $G(n, p)$ with $n = 10000$. Experiments were run on a single GPU (NVIDIA RTX 4090, 24GB) ``OOM" indicates out-of-memory.}
		\label{tab:triangle_counting_gpu}
		\begin{tabular}{ccc}
			\toprule
			\makecell{\textbf{Edge}       \\ \textbf{Prob.} $p$} &
			\makecell{\textbf{\package{}} \\ \textbf{Time (s)}} &
			\makecell{\textbf{\cugraph{}} \\ \textbf{Time (s)}} \\
			\midrule
			0.001 & 2.8014  & 0.1536      \\
			0.005 & 2.9564  & 0.5664      \\
			0.010 & 3.3363  & 1.0379      \\
			0.020 & 3.7830  & 1.9797      \\
			0.050 & 5.0856  & 4.9804      \\
			0.080 & 6.3764  & 8.1346      \\
			0.100 & 7.2829  & 10.3661     \\
			0.150 & 9.6873  & 15.3828     \\
			0.200 & 11.7283 & OOM         \\
			0.800 & 37.6880 & OOM         \\
			\bottomrule
		\end{tabular}
	\end{table}

	\FloatBarrier

	\putbib[Master.bib]

\end{bibunit}

\end{document}